\DeclareMathOperator{\Ima}{Im}
\newcommand{\q}{\ensuremath{\mathbb{Q}}}
\long\def\ignore#1{}
\def\myps[#1]#2{\includegraphics[#1]{#2}}
\def\br(#1,#2){{\langle #1,#2 \rangle}}
\def\setZ[#1,#2]{{[ #1 .. #2 ]}}
\def\q={\quad=\quad}
\def\qq={\qquad=\qquad}
\def\psfile[#1]#2{}
\def\psfilehere[#1]#2{}
\def\assign(#1,#2){\langle#1,#2\rangle}
\def\edge(#1,#2){(#1,#2)}
\def\slack(#1){\texttt{slack}({#1})}
\def\barslack(#1){\overline{\texttt{slack}}({#1})}
\def\unitvec(#1){{{\bf u}_{#1}}}
\def\thm@space@setup{%
  \thm@preskip=2pt \thm@postskip=2pt
}
\newif\ifTR
\newif\ifCOM
\begin{document}
\title{Fourier analysis perspective for sufficient dimension reduction problem}
\titlerunning{PCA}  
%
\author{Rustem Takhanov}
\authorrunning{Rustem Takhanov} 
%
%
\institute{Nazarbayev University\\
\email{rustem.takhanov@nu.edu.kz},\\
}

\maketitle              

\begin{abstract}
A theory of sufficient dimension reduction (SDR) is developed from an optimizational perspective.
In our formulation of the problem, instead of dealing with raw data, we assume that our ground truth includes a mapping ${\mathbf f}: {\mathbb R}^n\rightarrow {\mathbb R}^m$ and a probability distribution function $p$ over ${\mathbb R}^n$, both given analytically. We formulate SDR as a problem of finding a function ${\mathbf g}: {\mathbb R}^k\rightarrow {\mathbb R}^m$ and a matrix $P\in {\mathbb R}^{k\times n}$ such that ${\mathbb E}_{{\mathbf x}\sim p({\mathbf x})} \left|{\mathbf f}({\mathbf x}) - {\mathbf g}(P{\mathbf x})\right|^2$ is minimal. It turns out that the latter problem allows a reformulation in the dual space, i.e. instead of searching for ${\mathbf g}(P{\mathbf x})$ we suggest searching for its Fourier transform. First, we characterize all tempered distributions that can serve as the Fourier transform of such functions.
The reformulation in the dual space can be interpreted as a problem of finding a $k$-dimensional linear subspace $S$ and a tempered distribution ${\mathbf t}$ supported in $S$ such that ${\mathbf t}$ is ``close'' in a certain sense to the Fourier transform of ${\mathbf f}$. 

Instead of optimizing over generalized functions with a $k$-dimensional support, we suggest minimizing over ordinary functions but with an additional term $R$ that penalizes a strong distortion of the support from any $k$-dimensional linear subspace. For a specific case of $R$, we develop an algorithm that can be formulated for functions given in the initial form as well as for their Fourier transforms. Eventually, we report results of numerical experiments with a discretized version of the latter algorithm.
\end{abstract}


\section{Introduction}
{\em The dimensionality reduction} is an important problem in data science that has many facets and non-equivalent formulations coming from different contexts, either purely mathematical or appearing in applications. The classical one was first formulated in the work of R. Fisher~\cite{Fisher309} and currently known as {\em the principal component analysis}. Subsequently, the idea of principal components was applied to more general frameworks, giving birth to new branches of statistics/machine learning such as the manifold learning (e.g. the nonlinear dimensionality reduction) and the sufficient dimension reduction. In the manifold learning formulation (which is the direct generalization of the classical) we are usually given a finite number of points in ${\mathbb R}^n$ (sampled according to some unknown distribution) and our goal is to find a ``low-dimensional'' geometric structure that approximates ``the support'' of the distribution and satisfies some additional properties such as smoothness, low complexity etc.

Unlike the latter formulations, in the sufficient dimension reduction (sometimes called the supervised dimension reduction), we are given a finite number of pairs $({\mathbf x}_i, {\mathbf y}_i), {\mathbf x}_i\in {\mathbb R}^n, {\mathbf y}_i\in {\mathbb R}^m$, also generated according to some unknown joint distribution $p({\mathbf x}, {\mathbf y})$, and our goal is to find $k$ vectors (where $k<<n$) ${\mathbf w}_1, \cdots, {\mathbf w}_k\in {\mathbb R}^n$ such that symbolically:
$$
{\mathbf y}\perp\!\!\!\perp {\mathbf x} | {\mathbf w}^T_1{\mathbf x}, \cdots, {\mathbf w}^T_k{\mathbf x}
$$
The latter means that an output ${\mathbf y}$ is conditionally independent of ${\mathbf x}$, given ${\mathbf w}^T_1{\mathbf x}, \cdots, {\mathbf w}^T_k{\mathbf x}$. Or, that conditional distribution $p({\mathbf y}| {\mathbf x})$ is the same as $p({\mathbf y}| {\mathbf w}^T_1{\mathbf x}, $ $ \cdots, {\mathbf w}^T_k{\mathbf x})$. 

Of course, the latter formulation can hardly be solved if we do not make any assumptions on the joint distribution, or more specifically on the conditional distribution $p({\mathbf y}| {\mathbf x})$. A standard assumption is the following semi-parametric discriminative model:
\begin{equation}\label{discr}
{\mathbf y} = {\mathbf g}({\mathbf w}^T_1{\mathbf x}, \cdots, {\mathbf w}^T_k{\mathbf x}) + \text{\boldmath$\varepsilon$}
\end{equation}
where $\text{\boldmath$\varepsilon$}$ is a Gaussian noise with ${\mathbb E} \text{\boldmath$\varepsilon$} = {\mathbf 0}$ and ${\mathbb E} \text{\boldmath$\varepsilon$}\text{\boldmath$\varepsilon$}^T = \delta^2 I$. The function ${\mathbf g}$ is an unknown smooth function. Then, the function ${\mathbf f}({\mathbf x}) = {\mathbb E} [{\mathbf y} | {\mathbf x}] = {\mathbf g}({\mathbf w}^T_1{\mathbf x}, \cdots, {\mathbf w}^T_k{\mathbf x})$ is called the regression function. 

There are 3 major methods to estimate parameters of model~\ref{discr}: (1) sliced inverse regression~\cite{Li91},\cite{Cook91}; (2) methods based on an analysis of gradient and Hessian of the regression function~\cite{Li2}, \cite{Xia}, \cite{Mukherjee}; (3) methods based on combining local classifiers~\cite{Hastie}, \cite{Sugiyama}.

Probably, the closest to ours is the second approach. Let us briefly outline its idea for $m=1$. According to that approach we first recover the regression function $f$ and estimate the distribution $p({\mathbf x})$ from our data $\{({\mathbf x}_i, y_i)\}$. The former can be done by solving the supervised learning problem using any suitable model, e.g. by neural networks, and the latter is typically done by assuming that $p({\mathbf x}) = \frac{1}{\sqrt{(2\pi)^n|\Sigma|}}e^{-\frac{1}{2}({\mathbf x}-\text{\boldmath$\mu$})^T\Sigma^{-1}({\mathbf x}-\text{\boldmath$\mu$})}$ and estimating the parameters $\text{\boldmath$\mu$}, \Sigma$ of the multivariate normal distribution.  
At the second stage we no longer need our data and treat $f, p$ as {\em the ground truth}. Since, for recovered $f$ it is natural to expect that $f({\mathbf x}) \approx g({\mathbf w}^T_1{\mathbf x}, \cdots, {\mathbf w}^T_k{\mathbf x})$, then a natural way to reconstruct vectors  ${\mathbf w}_1, \cdots, {\mathbf w}_k$ is to set them equal to first $k$ principal components of the matrix ${\mathbb E}_{{\mathbf x}\sim p({\mathbf x})} H_{{\mathbf f}}({\mathbf x})$, where $H_{{\mathbf f}}({\mathbf x}) = \begin{bmatrix}
\frac{\partial^2 f}{\partial x_i \partial x_j}
\end{bmatrix}$ is a Hessian matrix of $f$ at point ${\mathbf x}$.

In our paper we also assume that ${\mathbf f}, p$ is an already given ground truth, though unlike the previous approach, we formulate the main problem optimizationally, i.e. our goal is to find 
\begin{equation}
{\mathbb E}_{{\mathbf x}\sim p({\mathbf x})} \left|{\mathbf f}({\mathbf x}) - {\mathbf g}({\mathbf w}^T_1{\mathbf x}, \cdots, {\mathbf w}^T_k{\mathbf x})\right|^2 \rightarrow \min_{{\mathbf g}, {\mathbf w}_1, \cdots, {\mathbf w}_k}
\end{equation}
It is easy to see that the latter corresponds to the maximum likelihood approach to estimating of the parameters ${\mathbf w}_1, \cdots, {\mathbf w}_k, {\mathbf g}$. Since ${\mathbf g}$ is an infinite-dimensional object, we analyse it by the tools of functional analysis, specifically using a theory of tempered distributions. The key observation of our analysis, stated in theorem~\ref{basic} of section~\ref{I}, is that a class of functions of the form ${\mathbf g}({\mathbf w}^T_1{\mathbf x}, \cdots, {\mathbf w}^T_k{\mathbf x})$ can be characterized as those functions whose Fourier transform is supported in a $k$-dimensional linear subspace. Instead of optimizing over generalized functions with a $k$-dimensional support, we suggest minimizing over ordinary functions given in a generic form but with an additional constraint. In order to force their support to be $k$-dimensional, in section~\ref{II} we introduce a class of penalty functions $R$ such that large values of $R$ indicate a strong distortion of the support from any $k$-dimensional linear subspace. For a specific case of $R$, in section~\ref{III} we develop an algorithm for our problem that can be formulated for functions given in the frequency coordinate form as well as in the initial coordinate form. The last section is dedicated to experiments on synthetic data.
\section{Preliminaries}
Throughout the paper we will use common terminology and notations from functional analysis. The Schwartz space of functions, denoted $\mathcal{S}({\mathbb R}^n)$, is a space of infinitely differentiable functions $f: {\mathbb R}^n\rightarrow {\mathbb C}$ such that $\forall \alpha, \beta \in{\mathbb N}^n, \sup_{{\mathbf x}\in\mathbf{R}^n} $ $|{\mathbf x}^\alpha D^\beta f({\mathbf x}) |<\infty $, and equipped with a standard topology, which is complete and metrizable. A cartesian power $\mathcal{S}^m({\mathbb R}^n)$ is a set of vector-valued functions, i.e. ${\mathbf f} = (f_1,\cdots, f_m)\in \mathcal{S}^m({\mathbb R}^n)$ if and only if $f_i\in \mathcal{S}({\mathbb R}^n)$.

By the tempered distribution we understand an element from the dual space, $\mathcal{S'}({\mathbb R}^n)$. The Fourier and inverse Fourier transforms are first defined as operators $\mathcal{F}: \mathcal{S}({\mathbb R}^n)\rightarrow \mathcal{S}({\mathbb R}^n)$ by:
\vspace{-10pt}
$$
\mathcal{F}[f] (\text{\boldmath$\xi$}) = \frac{1}{\sqrt{2\pi}^n}\int_{{\mathbb R}^n} f({\mathbf x}) e^{-i\text{\boldmath$\xi$}^T{\mathbf x}} d {\mathbf x}, f\in \mathcal{S}({\mathbb R}^n) 
$$
\vspace{-8pt}
$$
\mathcal{F}^{-1}[f] ({\mathbf x}) = \frac{1}{\sqrt{2\pi}^n}\int_{{\mathbb R}^n} f(\text{\boldmath$\xi$}) e^{i\text{\boldmath$\xi$}^T{\mathbf x}} d \text{\boldmath$\xi$}, f\in \mathcal{S}({\mathbb R}^n) 
$$
and then extended to continuous bijective linear operators $\mathcal{F}, \mathcal{F}^{-1}: \mathcal{S'}({\mathbb R}^n)\rightarrow \mathcal{S'}({\mathbb R}^n)$ by the rule:
$
\mathcal{F}[\phi] (f) = \phi (\mathcal{F}[f]), \mathcal{F}^{-1}[\phi] (f) = \phi (\mathcal{F}^{-1}[f]), \phi\in \mathcal{S'}({\mathbb R}^n)
$.
The Fourier transform can be applied component-wise to objects from the cartesian power $\mathcal{S'}^m({\mathbb R}^n)$ which we will also call the tempered distributions.

If a function ${\mathbf f} = (f_1,\cdots, f_m): {\mathbb R}^n \rightarrow {\mathbb C}^m$ is such that $\int_{{\mathbb R}^n} f_i({\mathbf x})u({\mathbf x}) d {\mathbf x} <\infty$ for any $u\in \mathcal{S}({\mathbb R}^n)$ then it induces a tuple $T_{{\mathbf f}} = (T_{f_1},\cdots, T_{f_m}), T_{f_i}: \mathcal{S}({\mathbb R}^n)\rightarrow {\mathbb C}$, where $T_{f_i}(u) = \int_{{\mathbb R}^n} f_i({\mathbf x})u({\mathbf x}) d {\mathbf x}$.

For a measure $\mu$, by $L^m_{2, \mu}({\mathbb R}^n)$ we denote the Hilbert space of functions from ${\mathbb R}^n$ to ${\mathbb C}^m$, square-integrable  w.r.t $\mu$, with the inner product: $\langle {\mathbf u},{\mathbf v} \rangle = \int {\mathbf u}^\dag ({\mathbf x}){\mathbf v}({\mathbf x}) d \mu$. The induced norm is then $||{\mathbf u}||_{\mu} = \sqrt{\langle {\mathbf u},{\mathbf u} \rangle }$.
A space $L^m_{2}({\mathbb R}^n)$ (i.e. when $\mu$ is Lebesgue measure) can be embedded into $\mathcal{S'}^m({\mathbb R}^n)$, i.e. $L^m_{2}({\mathbb R}^n)\hookrightarrow \mathcal{S'}^m({\mathbb R}^n)$, where ${\mathbf f}\in L^m_{2}({\mathbb R}^n)$ corresponds to a tempered distribution $T_{\mathbf f}$. Therefore, Fourier transform can be defined on $L^m_{2}({\mathbb R}^n)$ and we will use the fact that $\mathcal{F}: L^m_{2}({\mathbb R}^n)\rightarrow L^m_{2}({\mathbb R}^n)$ is a unitary operator. 

For $\phi, \psi\in \mathcal{S}({\mathbb R}^n)$ the convolution is defined as $\phi\ast\psi ({\mathbf x}) = \int_{{\mathbb R}^n} \phi({\mathbf x}-{\mathbf y})\psi({\mathbf y}) d {\mathbf y}$. For $\psi\in \mathcal{S}({\mathbb R}^n), T\in \mathcal{S'}({\mathbb R}^n)$, the convolution is defined as a tempered distribution $\psi\ast T$ such that:
$$
\psi\ast T [\phi] = T[\tilde{\psi}\ast \phi]\,\,\, \forall \phi\in \mathcal{S}({\mathbb R}^n)
$$
where $\tilde{\psi} ({\mathbf x}) = \psi(-{\mathbf x})$ and the multiplication $\psi T$ is defined by: 
$$
(\psi T) [\phi] = T[\psi \phi]
$$
Both operations can be extended to the case when $\psi\in \mathcal{S}({\mathbb R}^n), T\in \mathcal{S'}^m({\mathbb R}^n)$ by applying them to every component of $T$.

A set of infinitely differentiable functions with a compact support in ${\mathbb R}^n$ is denoted as $C_c^\infty ({\mathbb R}^n)$. The Sobolev $s,p$-norm on $C_c^\infty ({\mathbb R}^n)$ for $s\in {\mathbb N}, p\in [1,\infty)$ is defined as 
$||f||_{s,p} = \left[\sum_{|\alpha|\leq s}|D^\alpha f|^p\right]^{1/p}$. The Sobolev space $W^{s,p}$ is a the completion of $C_c^\infty ({\mathbb R}^n)$ w.r.t. the norm $||\cdot||_{s,p}$.

For a matrix $A = \begin{bmatrix}
a_{ij}
\end{bmatrix}_{1\leq i,j\leq n}$ the Frobenius norm is $||A||_F = \sqrt{\sum_{ij}a^2_{ij}}$.
\section{Problem formulation}\label{I}
Let $p: {\mathbb R}^n\rightarrow {\mathbb R}_+$ be a probability density function such that $\sqrt{p}\in \mathcal{S}({\mathbb R}^n)$. The probability density function defines the Hilbert space $L^m_{2,p}({\mathbb R}^n)$, i.e. $L^m_{2,\mu}({\mathbb R}^n)$ where $d\mu = p d {\mathbf x}$.
We are also given a real-valued function ${\mathbf f}: {\mathbb R}^n\rightarrow {\mathbb R}^m$ from $L^m_{2,p}({\mathbb R}^n)$ which can be given in an arbitrary form, keeping in mind the case of ${\mathbf f}$ defined by a feed-forward neural network. Our goal is to approximate ${\mathbf f}$ in the following form (for $k$ fixed in advance):
$$
{\mathbf f}({\mathbf x}) \approx {\mathbf g}({\mathbf w}^T_1{\mathbf x}, \cdots, {\mathbf w}^T_k{\mathbf x})
$$
where ${\mathbf g}$ is an arbitrary function from $\mathcal{S}^m({\mathbb R}^k)$ and ${\mathbf w}_1, \cdots, {\mathbf w}_k\in {\mathbb R}^n$. 
\begin{theorem} \label{one} For ${\mathbf g}\in\mathcal{S}^m({\mathbb R}^k)$, we have
$\sqrt{p({\mathbf x})} {\mathbf g}({\mathbf w}^T_1{\mathbf x}, \cdots, {\mathbf w}^T_k{\mathbf x})  \in \mathcal{S}^m({\mathbb R}^n)$ and ${\mathbf g}({\mathbf w}^T_1{\mathbf x}, \cdots, {\mathbf w}^T_k{\mathbf x})  \in L^m_{2,p}({\mathbb R}^n)$.
\end{theorem}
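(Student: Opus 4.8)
The plan is to treat the two assertions in the order that exposes their logical dependence: membership in $L^m_{2,p}({\mathbb R}^n)$ is essentially immediate, while the Schwartz membership carries all the content. Write $W=[{\mathbf w}_1\ \cdots\ {\mathbf w}_k]^T\in{\mathbb R}^{k\times n}$, so that $({\mathbf w}^T_1{\mathbf x},\ldots,{\mathbf w}^T_k{\mathbf x})=W{\mathbf x}$ and the function under study is ${\mathbf x}\mapsto {\mathbf g}(W{\mathbf x})$. For the $L^m_{2,p}$ claim I would simply note that every component $g_i\in\mathcal{S}({\mathbb R}^k)$ is bounded, say $|g_i|\leq M_i$ on ${\mathbb R}^k$; hence $\int_{{\mathbb R}^n}|{\mathbf g}(W{\mathbf x})|^2 p({\mathbf x})\,d{\mathbf x}\leq \bigl(\sum_i M_i^2\bigr)\int_{{\mathbb R}^n}p({\mathbf x})\,d{\mathbf x}=\sum_i M_i^2<\infty$, using that $p$ is a probability density. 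Alternatively, this follows a posteriori from the first claim, since $\mathcal{S}^m\subset L^m_2$ and $\|\sqrt{p}\,{\mathbf g}(W\cdot)\|_{L^m_2}=\|{\mathbf g}(W\cdot)\|_{L^m_{2,p}}$.

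For the Schwartz claim it suffices to argue componentwise, so I would fix $g=g_i$ and set $h({\mathbf x})=\sqrt{p({\mathbf x})}\,g(W{\mathbf x})$. The key structural observation is that ${\mathbf x}\mapsto g(W{\mathbf x})$ is a \emph{multiplier} of the Schwartz space: it is smooth, and since the inner map $W{\mathbf x}$ is linear, the chain rule shows that each derivative $D^\gamma_{\mathbf x}\bigl[g(W{\mathbf x})\bigr]$ is a finite linear combination of terms $(D^\delta g)(W{\mathbf x})$ with $|\delta|=|\gamma|$, whose coefficients are fixed products of entries of $W$. Because $g\in\mathcal{S}({\mathbb R}^k)$ has all partial derivatives bounded (take $\alpha=0$ in the defining seminorms), each $(D^\delta g)(W{\mathbf x})$ is bounded on ${\mathbb R}^n$; consequently $\sup_{\mathbf x}|D^\gamma_{\mathbf x}[g(W{\mathbf x})]|=:C_\gamma<\infty$ for every $\gamma$.

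With the derivative bounds $C_\gamma$ in hand I would finish by the Leibniz rule: for any multi-indices $\alpha,\beta$,
$$
{\mathbf x}^\alpha D^\beta h({\mathbf x})=\sum_{\gamma\leq\beta}\binom{\beta}{\gamma}\,{\mathbf x}^\alpha\,D^{\beta-\gamma}[\sqrt{p}]({\mathbf x})\,D^\gamma[g(W{\mathbf x})],
$$
whence $|{\mathbf x}^\alpha D^\beta h({\mathbf x})|\leq\sum_{\gamma\leq\beta}\binom{\beta}{\gamma}C_\gamma\,|{\mathbf x}^\alpha D^{\beta-\gamma}[\sqrt{p}]({\mathbf x})|$. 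Each summand is uniformly bounded in ${\mathbf x}$ because $\sqrt{p}\in\mathcal{S}({\mathbb R}^n)$ controls exactly the seminorm $\sup_{\mathbf x}|{\mathbf x}^\alpha D^{\beta-\gamma}[\sqrt{p}]|$; taking the supremum over ${\mathbf x}$ shows that $h$ satisfies every Schwartz seminorm bound, hence $h\in\mathcal{S}({\mathbb R}^n)$, and reassembling components gives $\sqrt{p}\,{\mathbf g}(W\cdot)\in\mathcal{S}^m({\mathbb R}^n)$.

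The only genuine subtlety to flag is conceptual rather than computational: one cannot prove the first claim by writing it as a product of two Schwartz factors, because $g(W{\mathbf x})$ is \emph{not} itself Schwartz whenever $W$ has rank less than $n$ — it is constant along the kernel of $W$ and therefore fails to decay. The right viewpoint, which the argument above encodes, is that $g(W{\mathbf x})$ supplies only boundedness of all derivatives (the multiplier property), while every bit of rapid decay is furnished by the factor $\sqrt{p}$; the chain-rule bookkeeping that produces the constants $C_\gamma$ is then the main, though routine, technical step.
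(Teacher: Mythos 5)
Your proof is correct, and it takes a somewhat different route from the paper's. The paper first reduces to the case where ${\mathbf w}_1,\ldots,{\mathbf w}_k$ are linearly independent (absorbing any dependent vectors into a redefinition of ${\mathbf g}$), then proves as a lemma that $\mathcal{S}({\mathbb R}^n)$ is invariant under invertible linear changes of variables, completes the ${\mathbf w}_i$ to a basis, and passes to coordinates $y_i={\mathbf w}_i^T{\mathbf x}$ in which the function becomes $\sqrt{q({\mathbf y})}\,g(y_1,\ldots,y_k)$; only then does it run the Leibniz-rule argument. You instead stay in the original coordinates and observe directly, via the chain rule applied to the linear inner map $W{\mathbf x}$, that every derivative $D^\gamma[g(W{\mathbf x})]$ is a bounded function, after which the same Leibniz estimate closes the argument. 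The two proofs share the essential mechanism --- the composed factor contributes only bounded derivatives while all decay comes from $\sqrt{p}$ --- but your version dispenses with both the linear-independence reduction and the change-of-variables lemma, and handles rank-deficient $W$ uniformly; what the paper's detour buys is the reusable fact that the Schwartz class is $GL_n$-invariant, which it invokes again in the proof of Theorem~\ref{basic}. Your direct verification of the $L^m_{2,p}$ membership (boundedness of ${\mathbf g}$ plus $\int p=1$) is also slightly more elementary than the paper's appeal to $\mathcal{S}^m\subseteq L^m_2$, and your closing remark that $g(W{\mathbf x})$ is itself never Schwartz for rank-deficient $W$ is accurate and is echoed by the paper in the proof of Theorem~\ref{two}.
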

\begin{proof}[Proof of theorem~\ref{one}] It is enough to prove the theorem for $m=1$.
W.l.o.g. we can assume that ${\mathbf w}_1, \cdots, {\mathbf w}_k$ are linearly independent. If they are linearly dependent and, e.g. ${\mathbf w}_k = \sum_{i=1}^{k-1}\alpha_i{\mathbf w}_i$, then we define $g'(s_1, \cdots, s_{k-1}) = g(s_1, \cdots, s_{k-1}, \sum_{i=1}^{k-1}\alpha_i s_i)$. It is easy to see that $g'\in \mathcal{S}({\mathbb R}^{k-1})$ and we reduced to the case of theorem for $k-1$.

If $s\in \mathcal{S}({\mathbb R}^n)$ and $A=\begin{bmatrix}
{\mathbf a}_1, \cdots, {\mathbf a}_n
\end{bmatrix}$ is an invertible matrix, then $s(A{\mathbf x})\in \mathcal{S}({\mathbb R}^n)$. 
Indeed, if we denote ${\mathbf y} = A{\mathbf x}$ and $A^{-1}=\begin{bmatrix}
{\mathbf b}_1, \cdots, {\mathbf b}_n
\end{bmatrix}^T$, then:
\begin{equation*}
\begin{split}
x^{\alpha_1}_1 \cdots x^{\alpha_n}_n \partial^{\beta_1}_{x_1} \cdots \partial^{\beta_n}_{x_n}  s(A{\mathbf x}) = \\ 
({\mathbf b}^T_1  {\mathbf y})^{\alpha_1} \cdots ({\mathbf b}^T_n {\mathbf y})^{\alpha_n} \cdot ({\mathbf a}^T_1   \partial_{{\mathbf y}})^{\beta_1}  \cdots ({\mathbf a}^T_n  \partial_{{\mathbf y}})^{\beta_n}  s({\mathbf y})
\end{split}
\end{equation*}
and after opening all the brackets we will obtain a finite sum of expressions of the kind ${\mathbf y}^\alpha D^\beta f$ that is bounded. In fact, we proved that Schwartz class is invariant under invertible linear change of variables.

Thus, if we complete ${\mathbf w}_1, \cdots, {\mathbf w}_k$ with ${\mathbf w}_{k+1}, \cdots, {\mathbf w}_n$ to form a basis in ${\mathbb R}^n$, and make the change of variables $y_i = {\mathbf w}^T_i{\mathbf x}$, then from $\sqrt{p({\mathbf x})} g({\mathbf w}^T_1{\mathbf x}, \cdots, {\mathbf w}^T_k{\mathbf x})$ we obtain a function $\sqrt{q({\mathbf y})} g(y_1, \cdots, y_k), \sqrt{q}\in \mathcal{S}({\mathbb R}^n)$. It remains to prove that this function is also in $\mathcal{S}({\mathbb R}^n)$.

For any $\alpha, \beta \in{\mathbb N}^n$ the expression ${\mathbf y}^\alpha D^\beta \sqrt{q({\mathbf y})}  g(y_1, \cdots, y_k)$ will be a sum if terms $({\mathbf y}^\alpha D^{\beta'}\sqrt{q({\mathbf y})} ) (D^{\beta''} g(y_1, \cdots, y_k))$ each of them being bounded.

Eventually, we note that $\mathcal{S}^m({\mathbb R}^n)\subseteq L^m_{2}({\mathbb R}^n)$ and therefore ${\mathbf g}({\mathbf w}^T_1{\mathbf x}, \cdots, {\mathbf w}^T_k{\mathbf x})  \in L^m_{2,p}({\mathbb R}^n)$.
\end{proof}
If we choose the squared error as the loss function, then we come to the following optimizational problem:
\begin{equation}\label{problem}
\begin{split}
{\mathbb E}_{{\mathbf x}\sim p({\mathbf x})} \left|{\mathbf f}({\mathbf x}) - {\mathbf g}({\mathbf w}^T_1{\mathbf x}, \cdots, {\mathbf w}^T_k{\mathbf x})\right|^2 = \\
 ||{\mathbf f}({\mathbf x}) - {\mathbf g}({\mathbf w}^T_1{\mathbf x}, \cdots, {\mathbf w}^T_k{\mathbf x})||^2_{L^m_{2,p}} \rightarrow \min_{{\mathbf g}, {\mathbf w}_1, \cdots, {\mathbf w}_k}
\end{split}
\end{equation}
The problem is non-convex and the minimum is taken over infinite-dimensional object. Let us reveal the structure of the objective:
\begin{equation*}
\begin{split}
||{\mathbf f}({\mathbf x}) - {\mathbf g}({\mathbf w}^T_1{\mathbf x}, \cdots, {\mathbf w}^T_k{\mathbf x})||_{L^m_{2,p}} = \\
||\sqrt{p({\mathbf x})}\,\,{\mathbf f}({\mathbf x}) - \sqrt{p({\mathbf x})} {\mathbf g}({\mathbf w}^T_1{\mathbf x}, \cdots, {\mathbf w}^T_k{\mathbf x})||_{L^m_{2}} \\
\end{split}
\end{equation*}
We can apply Fourier transform to our functions, taking into account that Fourier transform is unitary on $L^m_{2}({\mathbb R}^n)$.
\begin{equation*}
\begin{split}
 ||\sqrt{p({\mathbf x})}\,\,{\mathbf f}({\mathbf x}) - \sqrt{p({\mathbf x})} {\mathbf g}(\cdots)||_{L^m_{2}} = \\ 
||\mathcal{F}\left[\sqrt{p({\mathbf x})}\,\,{\mathbf f}({\mathbf x})\right] - \mathcal{F}\left[\sqrt{p({\mathbf x})} {\mathbf g}(\cdots)\right]||_{L^m_{2}}
\end{split}
\end{equation*}
Let us denote ${\mathbf f'} = \sqrt{2\pi}^n\mathcal{F}\left[\sqrt{p({\mathbf x})}\,\,{\mathbf f}({\mathbf x})\right], \gamma = \mathcal{F}\left[\sqrt{p({\mathbf x})}\right]$. 
The following statement is an application of the convolution theorem to our case: 
\begin{theorem}\label{two}
If ${\mathbf l}({\mathbf x}) = {\mathbf g}({\mathbf w}^T_1{\mathbf x}, \cdots, {\mathbf w}^T_k{\mathbf x})$ and ${\mathbf k} = \mathcal{F}\left[\sqrt{p({\mathbf x})}\,\, {\mathbf l}({\mathbf x}) \right]$, then $T_{\mathbf l}\in \mathcal{S'}^m({\mathbb R}^n)$ and 
$$T_{\mathbf k} = \frac{1}{\sqrt{2\pi}^n}\gamma\ast \mathcal{F}\left[T_{\mathbf l}\right]$$
\end{theorem}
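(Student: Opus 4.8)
The statement bundles two claims: that the pointwise object $\mathbf{l}$ genuinely defines a tempered distribution, and that taking the Fourier transform of the product $\sqrt{p}\,\mathbf{l}$ converts multiplication into convolution. The plan is to treat these separately and to reduce the second, harder claim to the convolution theorem for ordinary Schwartz functions by carefully unwinding the distributional definitions of multiplication $(\psi T)[\phi]=T[\psi\phi]$ and convolution $\psi\ast T[\phi]=T[\tilde{\psi}\ast\phi]$ recorded in the Preliminaries.

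For the first claim I would use that $\mathbf{g}\in\mathcal{S}^m(\mathbb{R}^k)$ forces each component $g_i$ to be bounded, say $|g_i|\le M_i$, so the components of $\mathbf{l}(\mathbf{x})=\mathbf{g}(\mathbf{w}_1^T\mathbf{x},\dots,\mathbf{w}_k^T\mathbf{x})$ are bounded. For any $u\in\mathcal{S}(\mathbb{R}^n)\subset L^1(\mathbb{R}^n)$ the integral $\int_{\mathbb{R}^n} l_i(\mathbf{x})u(\mathbf{x})\,d\mathbf{x}$ is then absolutely convergent and bounded by $M_i\|u\|_{L^1}$, whence each $T_{l_i}$ is a well-defined continuous functional and $T_{\mathbf{l}}\in\mathcal{S'}^m(\mathbb{R}^n)$.

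For the convolution formula, the crucial observation is that the function product $\sqrt{p}\,\mathbf{l}$ corresponds exactly to the distributional product $\sqrt{p}\cdot T_{\mathbf{l}}$: for every test function $\phi$ one has $T_{\sqrt{p}\,\mathbf{l}}[\phi]=\int \sqrt{p}\,\mathbf{l}\,\phi=T_{\mathbf{l}}[\sqrt{p}\,\phi]=(\sqrt{p}\,T_{\mathbf{l}})[\phi]$ by the definition of distributional multiplication. Now Theorem~\ref{one} guarantees $\sqrt{p}\,\mathbf{l}\in\mathcal{S}^m(\mathbb{R}^n)$, so $\mathbf{k}=\mathcal{F}[\sqrt{p}\,\mathbf{l}]$ is again Schwartz and the function-level and distribution-level transforms agree, giving $T_{\mathbf{k}}=\mathcal{F}[T_{\sqrt{p}\,\mathbf{l}}]=\mathcal{F}[\sqrt{p}\,T_{\mathbf{l}}]$. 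It then suffices to establish the exchange identity $\mathcal{F}[\psi T]=\frac{1}{\sqrt{2\pi}^n}\,\mathcal{F}[\psi]\ast\mathcal{F}[T]$ for $\psi\in\mathcal{S}(\mathbb{R}^n)$ and $T\in\mathcal{S'}(\mathbb{R}^n)$; applying it with $\psi=\sqrt{p}$ and $T=T_{\mathbf{l}}$ produces exactly $T_{\mathbf{k}}=\frac{1}{\sqrt{2\pi}^n}\gamma\ast\mathcal{F}[T_{\mathbf{l}}]$.

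The step I expect to require the most care is this exchange identity, because the convolution theorem for products of Schwartz functions does not apply directly to $\mathbf{l}$, which is merely bounded and whose Fourier transform is a genuine distribution rather than a function. I would prove it by pairing both sides against an arbitrary $\phi\in\mathcal{S}(\mathbb{R}^n)$: the left side unwinds to $T[\psi\,\mathcal{F}[\phi]]$, while the definitions of distributional convolution and Fourier transform turn the right side into $\frac{1}{\sqrt{2\pi}^n}\,T\big[\mathcal{F}[\widetilde{\mathcal{F}[\psi]}\ast\phi]\big]$. Everything then reduces to the purely Schwartz-level identity $\psi\,\mathcal{F}[\phi]=\frac{1}{\sqrt{2\pi}^n}\mathcal{F}[\widetilde{\mathcal{F}[\psi]}\ast\phi]$, which follows from the classical convolution theorem $\mathcal{F}[a\ast b]=\sqrt{2\pi}^n\,\mathcal{F}[a]\,\mathcal{F}[b]$ together with $\mathcal{F}[\tilde h]=\mathcal{F}^{-1}[h]$, so that $\mathcal{F}[\widetilde{\mathcal{F}[\psi]}]=\psi$. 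This verification extends componentwise to the vector-valued setting, completing the argument.
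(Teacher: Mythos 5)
Your proposal is correct, but it proves the key identity by a genuinely different mechanism than the paper. Both arguments agree on the first claim ($\mathbf{l}$ is bounded because $\mathbf{g}$ is Schwartz, hence $T_{\mathbf{l}}\in\mathcal{S'}^m({\mathbb R}^n)$ via $L_\infty\hookrightarrow\mathcal{S'}$) and on the reduction to the exchange identity $\mathcal{F}[\psi T]=\frac{1}{\sqrt{2\pi}^n}\mathcal{F}[\psi]\ast\mathcal{F}[T]$. Where you diverge is in how that identity is established for a general tempered distribution $T$: the paper takes the convolution theorem as known only for $T=T_v$ with $v\in\mathcal{S}({\mathbb R}^n)$, then approximates $T_{\mathbf{l}}$ by a sequence $T_{\phi_n}$ with $\phi_n\in\mathcal{S}$ (using density of $\mathcal{S}$ in $\mathcal{S'}$) and passes to the limit, which forces it to also verify that multiplication by $\sqrt{p}$, the Fourier transform, and the convolution operator $C_\gamma(T)=\gamma\ast T$ are all sequentially continuous on $\mathcal{S'}({\mathbb R}^n)$. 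You instead prove the identity for arbitrary $T\in\mathcal{S'}({\mathbb R}^n)$ in one shot by pairing both sides against a test function $\phi$ and unwinding the definitions $(\psi T)[\phi]=T[\psi\phi]$ and $\psi\ast T[\phi]=T[\tilde\psi\ast\phi]$, reducing everything to the Schwartz-level identity $\psi\,\mathcal{F}[\phi]=\frac{1}{\sqrt{2\pi}^n}\mathcal{F}[\widetilde{\mathcal{F}[\psi]}\ast\phi]$, which follows from the classical convolution theorem together with $\mathcal{F}[\tilde h]=\mathcal{F}^{-1}[h]$. Your route is more self-contained and sidesteps the density and continuity lemmas entirely; its only cost is the small bookkeeping of tildes and normalization constants, which you handle correctly under the paper's convention. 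The paper's route, by contrast, is the generic ``extend by continuity'' template and reuses the function-level convolution theorem as a black box, at the price of the extra continuity verifications. Your additional observation that $\sqrt{p}\,\mathbf{l}\in\mathcal{S}^m({\mathbb R}^n)$ (from Theorem~\ref{one}) makes $\mathbf{k}$ an honest Schwartz function whose pointwise and distributional transforms coincide is a useful clarification that the paper leaves implicit.
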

\begin{proof}[Proof of theorem~\ref{two}] W.l.o.g. we again assume that $m=1$.
For $l({\mathbf x}) = g({\mathbf w}^T_1{\mathbf x}, \cdots, {\mathbf w}^T_k{\mathbf x})$ we have:
$$
||l||_{L_\infty({\mathbb R}^n)} \leq ||g||_{L_\infty({\mathbb R}^n)} < \infty
$$
I.e. $l\in L_\infty({\mathbb R}^n)$. Unfortunately, $l$ is not a rapidly decreasing function, because ${\mathbf w}^T_1{\mathbf x}=s_1, \cdots, {\mathbf w}^T_k{\mathbf x}=s_1$, in general, defines a nonempty affine subspace and $l$'s value on the whole subspace will be constant $g(s_1, \cdots, s_k)$. Therefore, the Fourier transform of $l$ is not necessarily an ordinary function.

Since $L_\infty({\mathbb R}^n)\hookrightarrow \mathcal{S'}({\mathbb R}^n)$,
$$
T_l[\phi] = \int_{{\mathbb R}^n} l({\mathbf x})\phi({\mathbf x}) d {\mathbf x}, \phi \in\mathcal{S}({\mathbb R}^n)
$$
is a continuous operator (i.e. a tempered distribution), therefore $\mathcal{F}[T_l]$ is also a tempered distribution. 

By definition $T_{k} = \mathcal{F}\left[\sqrt{p}T_{l}\right]$. 
Let us prove that 
$$T_{k} = \frac{1}{\sqrt{2\pi}^n}\gamma\ast \mathcal{F}\left[T_{l}\right]$$

Since $T_l\in \mathcal{S'}({\mathbb R}^n)$, there exists a sequence of functions $\phi_1, \phi_2, \cdots \in \mathcal{S}({\mathbb R}^n)$, such that 
\begin{equation*}
\begin{split}
T_{\phi_n}\rightarrow T_{l}, n\rightarrow \infty \textsc{ or } \\ 
\forall \phi\in \mathcal{S}({\mathbb R}^n),\,\,\, \int_{{\mathbb R}^n} \phi_n ({\mathbf x}) \phi ({\mathbf x}) d {\mathbf x} \rightarrow \int_{{\mathbb R}^n} l ({\mathbf x}) \phi ({\mathbf x}) d {\mathbf x}
\end{split}
\end{equation*}
The latter follows from the well-known fact that $\mathcal{S}({\mathbb R}^n)$ is dense in $\mathcal{S'}({\mathbb R}^n)$. 

It is easy to see that
\begin{equation*}
\begin{split}
\sqrt{p}T_{\phi_n}\rightarrow \sqrt{p}T_{l}, n\rightarrow \infty \textsc{ or } \forall \psi\in \mathcal{S}({\mathbb R}^n),\\ \int_{{\mathbb R}^n} \sqrt{p({\mathbf x})}\phi_n ({\mathbf x}) \psi ({\mathbf x}) d {\mathbf x} \rightarrow \int_{{\mathbb R}^n} \sqrt{p({\mathbf x})}l ({\mathbf x}) \psi ({\mathbf x}) d {\mathbf x}
\end{split}
\end{equation*}
because we can set $\phi = \sqrt{p({\mathbf x})}\psi\in \mathcal{S}({\mathbb R}^n)$ in the former expression.

The convolution theorem states that for any 2 functions $u, v\in \mathcal{S}({\mathbb R}^n)$ we have:
$$
\mathcal{F}\left[uv\right] = \frac{1}{\sqrt{2\pi}^n}\mathcal{F}\left[u\right] \ast \mathcal{F}\left[v\right],\,\,\mathcal{F}\left[uT_v\right] = \frac{1}{\sqrt{2\pi}^n}\mathcal{F}\left[u\right] \ast \mathcal{F}\left[T_v\right]
$$
Therefore:
$$
\mathcal{F}\left[\sqrt{p}T_{\phi_n}\right] = \frac{1}{\sqrt{2\pi}^n}\gamma\ast \mathcal{F}\left[T_{\phi_n}\right]
$$
Since $\mathcal{F}: \mathcal{S'}({\mathbb R}^n)\rightarrow \mathcal{S'}({\mathbb R}^n)$ is a continuous operator, then $\mathcal{F}\left[T_{\phi_n}\right]\rightarrow \mathcal{F}\left[T_{l}\right]$ and $\mathcal{F}\left[\sqrt{p}T_{\phi_n}\right]\rightarrow \mathcal{F}\left[\sqrt{p}T_{l}\right]$ in $\mathcal{S'}({\mathbb R}^n)$. In order to obtain the needed result it remains to show that the convolution operator $C_\gamma: \mathcal{S'}({\mathbb R}^n)\rightarrow \mathcal{S'}({\mathbb R}^n)$, $C_\gamma(T) = \gamma\ast T$
is also continuous.

By definition $\gamma\ast T [\phi] = T[{\tilde \gamma}\ast\phi]$ where ${\tilde \gamma}({\mathbf x}) = \gamma(-{\mathbf x})$.
I.e. we have to show that if 
$$
T_i\rightarrow T\textsc{ or }\forall \phi\in \mathcal{S}({\mathbb R}^n), \,\, T_i[\phi]\rightarrow T[\phi]
$$
then
$$
\gamma\ast T_i\rightarrow \gamma\ast T \textsc{ or }\forall \psi\in \mathcal{S}({\mathbb R}^n), \,\, T_i[{\tilde \gamma}\ast\psi]\rightarrow T[{\tilde \gamma}\ast\psi]
$$
The latter is obvious if we can set $\phi = {\tilde \gamma}\ast\psi\in \mathcal{S}({\mathbb R}^n)$ in the former expression.
Thus, theorem proved.
\ifCOM
To be complete let us demonstrate the simple fact that $T_l$ is a continuous operator:
\begin{equation*}
\begin{split}
|T_l[\phi]| = \left|\int_{B_R({\mathbf 0})} l({\mathbf x})\phi({\mathbf x}) d {\mathbf x} + \int_{\overline{B_R({\mathbf 0})}} l({\mathbf x})\phi({\mathbf x}) d {\mathbf x}\right| \leq \\ 
|B_R({\mathbf 0})|\cdot ||l||_{L^\infty} ||\phi||_{L^\infty} + ||l||_{L^\infty} \int_{\overline{B_R({\mathbf 0})}} |\phi({\mathbf x})| d {\mathbf x}
\end{split}
\end{equation*}
We use that $||l||_{L^\infty ({\mathbb R}^n)} = ||g||_{L^\infty ({\mathbb R}^k)} < \infty$.
For $\phi\in \mathcal{S}({\mathbb R}^n)$ we have:
$$
|\phi({\mathbf x})| (1+|{\mathbf x}|^{n+1}) \leq |\phi({\mathbf x})| + n^{(n+1)/2}\max_{i}|\phi({\mathbf x})x^{n+1}_i| \leq  C_\phi
$$
where $C_\phi = \sup_{\mathbf x}|\phi({\mathbf x})| +  n^{(n+1)/2}\max_{i}\sup_{\mathbf x}|\phi({\mathbf x})x^{n+1}_i|$. I.e. $|\phi({\mathbf x})|\leq \frac{C_\phi}{(1+|{\mathbf x}|^{n+1})}$. Then:
$$
\int_{\overline{B({\mathbf 0}, R)}} |\phi({\mathbf x})| d {\mathbf x} \leq C_1 \int_{R}^\infty \frac{C_\phi r^{n-1}dr}{1+r^{n+1}} = C_2 C_\phi
$$
Thus,
$$
|T_l[\phi]| \leq |B_R({\mathbf 0})|\cdot ||l||_{L^\infty} ||\phi||_{L^\infty } + C_2||l||_{L^\infty} C_\phi
$$
It is easy to see that if $\phi\rightarrow 0$ (in the topology of $\mathcal{S}({\mathbb R}^n)$) then $||\phi||_{L^\infty ({\mathbb R}^n)}, C_\phi\rightarrow 0$, and therefore, $T_l[\phi]\rightarrow 0$.
\else
\fi

\end{proof}

The basic phenomenon behind our approach to optimization of~\eqref{problem} is the following statement:
\begin{theorem}\label{basic}
A function ${\mathbf l}({\mathbf x})$ can be represented as ${\mathbf l}({\mathbf x}) = {\mathbf g}({\mathbf w}^T_1{\mathbf x}, \cdots, {\mathbf w}^T_k{\mathbf x}), {\mathbf g}\in \mathcal{S}^m({\mathbb R}^k)$ if and only if there is an orthonormal basis $\{{\mathbf a}_1, \cdots, {\mathbf a}_n\}\subseteq {\mathbb R}^n$ such that:
\begin{equation}\label{gen}
\mathcal{F}[T_{{\mathbf l}}] = {\mathbf r}({\mathbf a}^T_1{\mathbf x}, \cdots, {\mathbf a}^T_{k'}{\mathbf x}) \prod_{i=k'+1}^n \delta({\mathbf a}^T_i{\mathbf x}), {\mathbf r} \in \mathcal{S}^m ({\mathbb R}^k)
\end{equation}
where $\delta(\cdot)$ -- Dirac's delta function.
Moreover, $span({\mathbf a}_1, \cdots, {\mathbf a}_{k'}) = span({\mathbf w}_1, \cdots, $ ${\mathbf w}_k)$.
\end{theorem}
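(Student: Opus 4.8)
The plan is to pass to a rotated coordinate system in which $\mathbf l$ depends on only $k'$ of the coordinates, and then to exploit that the Fourier transform converts a constant direction into a Dirac delta. It suffices to treat $m=1$: in both directions the orthonormal basis $\{\mathbf a_i\}$ is common to all components and $\mathcal F$ acts componentwise. Set $V=\operatorname{span}(\mathbf w_1,\dots,\mathbf w_k)$, $k'=\dim V$, pick an orthonormal basis $\mathbf a_1,\dots,\mathbf a_{k'}$ of $V$ and extend it to an orthonormal basis $\mathbf a_1,\dots,\mathbf a_n$ of $\mathbb R^n$; let $A=[\mathbf a_1,\dots,\mathbf a_n]$ and $\mathbf y=A^T\mathbf x$, so $y_i=\mathbf a_i^T\mathbf x$. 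Since each $\mathbf w_i\in V$, the form $\mathbf w_i^T\mathbf x=(A^T\mathbf w_i)^T\mathbf y$ involves only $y_1,\dots,y_{k'}$, hence $\tilde l(\mathbf y):=l(A\mathbf y)$ equals $h(y_1,\dots,y_{k'})$ with $h=g\circ C$ for an injective linear $C\colon\mathbb R^{k'}\to\mathbb R^k$ (injective because the $\mathbf w_i$ span $V$). A one-line estimate using the least singular value of $C$ shows that a Schwartz function composed with an injective linear map is Schwartz, so $h\in\mathcal S(\mathbb R^{k'})$.

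I would then record two facts about $\mathcal F$ on $\mathcal S'(\mathbb R^n)$. First, it commutes with orthogonal substitutions: $\mathcal F[T\circ B]=\mathcal F[T]\circ B$ for orthogonal $B$, which follows by duality from $\mathcal F[\phi\circ B]=\mathcal F[\phi]\circ B$ on $\mathcal S$ (here $|\det B|=1$ and $B^{-T}=B$). Second, for $h\in\mathcal S(\mathbb R^{k'})$ the distribution $h(y_1,\dots,y_{k'})$, regarded in $\mathcal S'(\mathbb R^n)$ as constant in $y_{k'+1},\dots,y_n$, is the tensor product $h\otimes 1$, so its transform factorizes; combined with the one-dimensional identity $\mathcal F[1]=\sqrt{2\pi}\,\delta$ this gives
\begin{equation*}
\mathcal F\big[h(y_1,\dots,y_{k'})\big]=(\sqrt{2\pi})^{\,n-k'}\,\mathcal F_{k'}[h](\eta_1,\dots,\eta_{k'})\prod_{i=k'+1}^{n}\delta(\eta_i).
\end{equation*}

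Both implications are then short. For the forward direction, apply orthogonal invariance to $l=\tilde l\circ A^T$ and substitute the factorization, recovering $\eta_i=(A^T\mathbf x)_i=\mathbf a_i^T\mathbf x$; this is exactly~\eqref{gen} with $r=(\sqrt{2\pi})^{\,n-k'}\mathcal F_{k'}[h]\in\mathcal S(\mathbb R^{k'})$. For the converse, given $\mathcal F[T_l]$ of the form~\eqref{gen}, rotate back to the $\mathbf y$-coordinates and apply $\mathcal F^{-1}$ with $\mathcal F^{-1}[\delta]=\tfrac1{\sqrt{2\pi}}$ to obtain $\tilde l(\mathbf y)=h(y_1,\dots,y_{k'})$, $h=(\tfrac1{\sqrt{2\pi}})^{\,n-k'}\mathcal F^{-1}_{k'}[r]\in\mathcal S(\mathbb R^{k'})$, whence $l(\mathbf x)=h(\mathbf a_1^T\mathbf x,\dots,\mathbf a_{k'}^T\mathbf x)$ as tempered distributions (hence a.e.). To display $l$ as a member of the target class with exactly $k$ forms when $k'<k$, take $\mathbf w_i=\mathbf a_i$ for $i\le k'$, $\mathbf w_i=\mathbf 0$ otherwise, and $g(s_1,\dots,s_k)=h(s_1,\dots,s_{k'})\,\chi(s_{k'+1},\dots,s_k)$ with a Schwartz bump $\chi$, $\chi(\mathbf 0)=1$; then $g\in\mathcal S(\mathbb R^k)$ and $g(\mathbf w_1^T\mathbf x,\dots,\mathbf w_k^T\mathbf x)=l(\mathbf x)$. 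In both directions $\operatorname{span}(\mathbf a_1,\dots,\mathbf a_{k'})=V=\operatorname{span}(\mathbf w_1,\dots,\mathbf w_k)$ by construction, giving the ``moreover'' claim.

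I expect the second fact to be the main obstacle: rigorously justifying the factorization of $\mathcal F[h\otimes 1]$ and the appearance of $\prod_{i>k'}\delta(\eta_i)$ with the correct constant $(\sqrt{2\pi})^{\,n-k'}$. Orthonormality of $\{\mathbf a_i\}$ is what makes this clean --- the substitution has unit Jacobian, so no spurious factors decorate the deltas when the identity is transported back to the original coordinates, and the orthonormal forms $\mathbf a_i^T\mathbf x$ produce precisely the surface measure on $V$ rather than a rescaled copy. To keep the argument self-contained I would prove the factorization by testing $\mathcal F[h\otimes 1]$ against an arbitrary $\phi\in\mathcal S(\mathbb R^n)$, applying Fubini and a partial Fourier transform in the first $k'$ variables, thereby reducing everything to $\mathcal F[1]=\sqrt{2\pi}\,\delta$ in one dimension.
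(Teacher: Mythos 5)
Your proposal is correct and follows the same basic route as the paper's proof: pass to an orthonormal basis adapted to $\operatorname{span}(\mathbf w_1,\dots,\mathbf w_k)$, observe that in the rotated coordinates $\mathbf l$ depends only on the first $k'$ variables, and use that the Fourier transform turns constancy in a direction into a Dirac delta in that direction, with the same constant $(\sqrt{2\pi})^{\,n-k'}$. The differences are in execution rather than in strategy, and they are worth noting. The paper explicitly labels its argument a sketch ``at the level of strictness common to theoretical physics papers'': it computes $\mathcal F[l]$ by writing a divergent iterated integral and invoking $\int e^{-i\mathbf z^T\mathbf y}d\mathbf y=(2\pi)^{n-k}\delta^{n-k}(\mathbf z)$ inside it. You replace this formal step with a genuine distributional lemma --- the factorization $\mathcal F[h\otimes 1]=(\sqrt{2\pi})^{\,n-k'}\mathcal F_{k'}[h]\otimes\delta^{\otimes(n-k')}$, proved by testing against $\phi\in\mathcal S(\mathbb R^n)$, Fubini, and a partial Fourier transform --- together with orthogonal equivariance of $\mathcal F$ on $\mathcal S'$; this is exactly the ``careful checking of certain integral identities'' the paper says it omits. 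You also treat two points the paper glosses over: (i) the reduction from dependent $\mathbf w_i$ is done by composing $g$ with an injective linear map and checking the result is Schwartz via the least singular value, where the paper instead reduces $k$ by one at a time via a substitution; and (ii) in the converse you explicitly pad the $k'$-variable function $h$ back up to a $k$-argument $g\in\mathcal S(\mathbb R^k)$ using zero vectors and a Schwartz bump $\chi$ with $\chi(\mathbf 0)=1$, whereas the paper stops at a function of $k'$ arguments. The net effect is a rigorous version of the paper's computation rather than a new idea, but it is strictly more complete than what the paper provides.
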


\begin{proof}[Sketch of the proof of theorem~\ref{basic}]
W.l.o.g. we can assume that $m=1$ and ${\mathbf w}_1, \cdots, {\mathbf w}_k$ are linearly independent.
A rigorous proof of the theorem would require a carefull checking of certain integral identitites. Instead we will present a sketch of the proof at the level of strictness common to theoretical physics papers.

($\Rightarrow$) We also can assume that ${\mathbf w}_1, \cdots, {\mathbf w}_k$ are orthonormal. Indeed, after every redefinition of $g$ given by the rule $g(s_1, \cdots, s_{k}) \leftarrow g(s_1, \cdots, s_{i}+\alpha s_j, \cdots, s_k)$ we get the same function $l$ if we simultaneously transform ${\mathbf w}_i$ to ${\mathbf w}_{i}-\alpha {\mathbf w}_j$. By making such redefinitions, we can always orthogonolize ${\mathbf w}_1, \cdots, {\mathbf w}_k$ by Gramm-Schmidt process with a subsequent scaling of $g$'s arguments.

Let us complete ${\mathbf w}_1, \cdots, {\mathbf w}_k$ with ${\mathbf w}_{k+1}, \cdots, {\mathbf w}_n$ to form an orthonormal basis in ${\mathbb R}^n$ and set:
$$
Q = \begin{bmatrix}
{\mathbf w}_1, \cdots , {\mathbf w}_n
\end{bmatrix} = \begin{bmatrix}
Q_1 , Q_2
\end{bmatrix}, Q_1\in {\mathbb R}^{n\times k}, Q_2\in {\mathbb R}^{n\times (n-k)}
$$
Then in the Fourier transform formula we will make the change of variables ${\mathbf x} = Q \begin{bmatrix}
{\mathbf y}_1 \\ {\mathbf y}_2
\end{bmatrix} = 
Q_1 {\mathbf y}_1 + Q_2 {\mathbf y}_2$, ${\mathbf y}_1\in {\mathbb R}^k$, ${\mathbf y}_2\in {\mathbb R}^{n-k}$:
\begin{equation*}
\begin{split}
\mathcal{F}[l](\text{\boldmath$\xi$}) = \frac{1}{\sqrt{2\pi}^n}\int_{{\mathbb R}^n} g({\mathbf w}^T_1{\mathbf x}, \cdots, {\mathbf w}^T_k{\mathbf x}) e^{-i\text{\boldmath$\xi$}^T{\mathbf x}} d {\mathbf x} = \\
 \frac{1}{\sqrt{2\pi}^n}\int_{{\mathbb R}^n} g({\mathbf y}_1) e^{-i\text{\boldmath$\xi$}^T Q \begin{bmatrix}
{\mathbf y}_1 \\ {\mathbf y}_2
\end{bmatrix}} d {\mathbf y}_1 d {\mathbf y}_2 = 
\\
=\frac{1}{\sqrt{2\pi}^n}\int_{{\mathbb R}^n} g({\mathbf y}_1) e^{-i(Q_1^T\text{\boldmath$\xi$})^T 
{\mathbf y}_1 -i(Q_2^T\text{\boldmath$\xi$})^T{\mathbf y}_2
} d {\mathbf y}_1 d {\mathbf y}_2 = \\
\frac{1}{\sqrt{2\pi}^n}\int_{{\mathbb R}^{k}} g({\mathbf y}_1) e^{-i(Q_1^T\text{\boldmath$\xi$})^T 
{\mathbf y}_1
} d {\mathbf y}_1 \cdot
\\
\cdot \int_{{\mathbb R}^{n-k}} e^{-i(Q_2^T\text{\boldmath$\xi$})^T{\mathbf y}_2
}d {\mathbf y}_2 = \sqrt{2\pi}^{n-k} \mathcal{F}[g](Q_1^T\text{\boldmath$\xi$}) \delta^{n-k} (Q_2^T\text{\boldmath$\xi$})
\end{split}
\end{equation*}
where $\delta^{n-k}(s_1, \cdots, s_{n-k}) = \prod_{i=1}^{n-k} \delta(s_i)$. Here we used that $\int_{{\mathbb R}^{n-k}} e^{-i{\mathbf z}^T{\mathbf y}_2
}d {\mathbf y}_2 = (2\pi)^{n-k}\delta^{n-k}({\mathbf z})$. Thus, we obtain the needed representation.

($\Leftarrow$) Suppose that:
$$\mathcal{F}[l] = r({\mathbf a}^T_1{\mathbf x}, \cdots, {\mathbf a}^T_{k'}{\mathbf x}) \prod_{i=k'+1}^n \delta({\mathbf a}^T_i{\mathbf x})$$
Using inverse Fourier transform we get:
\begin{equation*}
\begin{split}
l(\text{\boldmath$\xi$}) = \mathcal{F}^{-1}\left[\mathcal{F}[l]\right](\text{\boldmath$\xi$}) =\\
\frac{1}{\sqrt{2\pi}^n} \int_{{\mathbb R}^n} r({\mathbf a}^T_1{\mathbf x}, \cdots, {\mathbf a}^T_{k'}{\mathbf x}) \prod_{i=k'+1}^n \delta({\mathbf a}^T_i{\mathbf x}) e^{i{\mathbf x}^T\text{\boldmath$\xi$}} d {\mathbf x}
\end{split}
\end{equation*}
After the change of variables ${\mathbf x} = O{\mathbf y}$, where $$O = \begin{bmatrix}
{\mathbf a}_1, \cdots, {\mathbf a}_n
\end{bmatrix}$$
we get: 
\begin{equation*}
\begin{split}
l(\text{\boldmath$\xi$}) = \frac{1}{\sqrt{2\pi}^n} \int_{{\mathbb R}^n} r(y_{1:k'}) \prod_{i=k'+1}^n \delta(y_i) e^{i\sum_{i=1}^n y_i {\mathbf a}^T_i\text{\boldmath$\xi$}}  dy_{1:n} =  \\
\frac{1}{\sqrt{2\pi}^n} \int_{{\mathbb R}^n} r(y_{1:k'})  e^{i\sum_{i=1}^{k'} y_i {\mathbf a}^T_i\text{\boldmath$\xi$}}  dy_{1:k'} = \\ \frac{1}{\sqrt{2\pi}^{n-k}} \tilde{g} ({\mathbf a}^T_1\text{\boldmath$\xi$}, \cdots, {\mathbf a}^T_k\text{\boldmath$\xi$})
\end{split}
\end{equation*}
where $\tilde{g} =  \mathcal{F}^{-1}[r]$.
\end{proof}

Substantively, the theorem claims that if the function's value depends only on the projection of an argument ${\mathbf x}$ on $span({\mathbf w}_1, \cdots, {\mathbf w}_k)$, then frequencies from the spectrum of such function are all in $span({\mathbf w}_1, \cdots, {\mathbf w}_k)$.

\begin{definition} A set of tempered distributions of the form~\eqref{gen} is denoted as $\mathcal{G}_k$ and called a set of functions with $k$-dimensional support.
\end{definition}
Thus, our problem becomes equivalent to:
$$
||{\mathbf f'} - {\mathbf k}||_{L^m_{2}} \rightarrow \min_{T_{\mathbf k} = \gamma\ast {\mathbf g}, {\mathbf g}\in\mathcal{G}_k}
$$
For simplicity of our notation, let us use ${\mathbf k}$ and $T_{\mathbf k}$ interchangeably (from the context it is always clear what we mean). Thus, our problem is:
\begin{equation}\label{fourier-one}
||{\mathbf f'} - \gamma\ast {\mathbf g}||_{L^m_{2}} \rightarrow \min_{{\mathbf g}\in\mathcal{G}_k}
\end{equation}

Note that if we would restrict ${\mathbf g}$ to be any ordinary function, the latter problem is known in the theory of inverse problems. E.g., in a case when $\gamma({\mathbf x}) = e^{-|{\mathbf x}|^2/2}$, a problem of finding $g$ such that $f' = \gamma\ast g$ is known as {\em the deconvolution of gaussian kernel}, and has many applications in mathematical physics~\cite{Saitoh}, \cite{Saitoh2}, \cite{Ulmer}. But with our type of restriction, besides that we cannot guarantee that the minimum is attainable on a function from $\mathcal{G}_k$, the set $\mathcal{G}_k$ itself does not suit as a good optimization space as it lacks obvious metrics, completeness properties etc. 

Instead of minimization over tempered distributions we will relax the property that the support of the function $g$ is strictly $k$-dimensional, reducing the problem to optimization over ordinary functions:
$$
||{\mathbf f'} - \gamma\ast {\mathbf g}||_{L^m_{2}} \rightarrow \min_{{\mathbf g}: R({\mathbf g})\leq \epsilon}
$$
where $R({\mathbf g})$ is a penalty term that penalizes ${\mathbf g}$ if ``the dimensionality of its support is greater than $k$''. In the next section we describe one natural approach to construct such a penalty term $R$.

\section{Penalty function}\label{II}
Let $I: {\mathbb C}^m \rightarrow {\mathbb R}_{+} = \{x\in {\mathbb R}| x\geq 0\}$ be a continuous function such that $I({\mathbf 0}) = 0$ and $I({\mathbf c})\ne 0, {\mathbf c}\ne {\mathbf 0}$. Let us consider a set of functions:
$$
L_I = \left\{{\mathbf g}:{\mathbb R}^n\rightarrow {\mathbb C}^m | \int_{{\mathbb R}^n}I({\mathbf g}({\mathbf x})) d {\mathbf x} < \infty\right\}
$$
We believe that practically the most interesting case is $I({\mathbf x}) = |{\mathbf x}|^\alpha, \alpha > 0$. Since $I({\mathbf g}({\mathbf x}))\geq 0$, we will correspond to ${\mathbf g}\in L_I$ the finite measure function (induced by the density $I({\mathbf g}({\mathbf x}))$): 
$$
\mu_{\mathbf g} (A) = \int_{A}I({\mathbf g}({\mathbf x})) d {\mathbf x}, A\subseteq {\mathbb R}^n, \,\,\,\mu_{\mathbf g} ({\mathbb R}^n) < \infty
$$
on the $\sigma$-algebra of Lebesgue measurable sets.
Any finite measure $\mu$ induces the probability measure $\mu^P$ via the normalization: $
\mu^P (A) = \frac{\mu (A)}{\mu ({\mathbb R}^n)}
$. We will call a finite measure $\mu$ on ${\mathbb R}^n$ {\em a $k$-dimensional measure} if there is a $k$-dimensional linear subspace $S\subseteq {\mathbb R}^n$ such that $\mu^P(S) = 1$.

In the previous section we proved that our problem~\eqref{problem} can be reduced to optimization task~\eqref{fourier-one} over functions with $k$-dimensional support. As we have already pointed out, $\mathcal{G}_k$ (as well as $\mathcal{S'}^m ({\mathbb R}^n)$) lacks standard metrics on it, so we need to devise a certain way to measure a distance from an ordinary function ${\mathbf g}$ to a set $\mathcal{G}_k$.
If ${\mathbf g}$ is an ordinary function, then its support cannot be strictly $k$-dimensional. It is natural to define a distance till $\mathcal{G}_k$ as $\min_{\mu \textsc{\tiny{ is k-dimensional}}}\rho(\mu_{\mathbf g}, \mu)$, for a proper distance function $\rho$ on measures. It turns out that $k$-dimensional measures can be characterized in a very simple way:

\begin{theorem} Let $\mu$ be a finite measure on ${\mathbb R}^n$ such that $\forall i,j, k, l\,\,\int_{{\mathbb R}^n} x_i x_j x_k x_l d \mu < \infty$. The measure $\mu$ is $k$-dimensional if and only if 
$$
rank(\mathcal{M}) \leq k
$$
where $
\mathcal{M} = \int_{{\mathbb R}^n} {\mathbf x} {\mathbf x}^T d \mu
$.
\end{theorem}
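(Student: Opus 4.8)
The plan is to analyse the positive semidefinite matrix $\mathcal{M}$ through its kernel, which I claim records exactly the directions along which $\mu$ is flat. First I would note that $\mathcal{M}$ is well defined: the finiteness hypothesis on the fourth moments, together with $\mu({\mathbb R}^n)<\infty$, forces every entry $\int x_i x_j\,d\mu$ to be finite by Cauchy--Schwarz, and $\mathcal{M}$ is symmetric positive semidefinite since ${\mathbf v}^T\mathcal{M}{\mathbf v} = \int_{{\mathbb R}^n}({\mathbf v}^T{\mathbf x})^2\,d\mu\ge 0$ for every ${\mathbf v}\in{\mathbb R}^n$. The central observation is the following characterisation of the null space: for a vector ${\mathbf v}$, one has ${\mathbf v}\in\ker\mathcal{M}$ if and only if ${\mathbf v}^T{\mathbf x}=0$ holds $\mu$-almost everywhere. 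Indeed, because $\mathcal{M}$ is positive semidefinite, ${\mathbf v}\in\ker\mathcal{M}$ is equivalent to ${\mathbf v}^T\mathcal{M}{\mathbf v}=0$, which is $\int({\mathbf v}^T{\mathbf x})^2\,d\mu=0$, and since the integrand is nonnegative this is equivalent to ${\mathbf v}^T{\mathbf x}=0$ $\mu$-a.e.

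With this lemma both implications become short linear-algebra arguments. For the forward direction, assume $\mu$ is $k$-dimensional, so $\mu^P(S)=1$ for some $k$-dimensional subspace $S$. Every ${\mathbf v}\in S^\perp$ then satisfies ${\mathbf v}^T{\mathbf x}=0$ for $\mu$-almost every ${\mathbf x}$, since $\mu$ is concentrated on $S$; hence ${\mathbf v}\in\ker\mathcal{M}$ by the lemma, so $S^\perp\subseteq\ker\mathcal{M}$ and $rank(\mathcal{M}) = n-\dim\ker\mathcal{M}\le n-\dim S^\perp = k$. For the converse, suppose $rank(\mathcal{M})=r\le k$, and take $S:=(\ker\mathcal{M})^\perp=\Ima\mathcal{M}$, a subspace of dimension $r$. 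Choosing an orthonormal basis ${\mathbf v}_1,\dots,{\mathbf v}_{n-r}$ of $\ker\mathcal{M}$, the lemma gives ${\mathbf v}_j^T{\mathbf x}=0$ $\mu$-a.e. for each $j$; intersecting these finitely many full-measure sets shows that $\mu$-almost every ${\mathbf x}$ is orthogonal to all ${\mathbf v}_j$, i.e. lies in $S$, so $\mu^P(S)=1$. Finally, since $r\le k\le n$, I would enlarge $S$ to any $k$-dimensional subspace $S'\supseteq S$; then $\mu^P(S')=1$ and $\mu$ is $k$-dimensional.

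The only genuinely measure-theoretic point, and the step I would treat most carefully, is passing from the $\mu$-a.e. vanishing of each linear form ${\mathbf v}_j^T{\mathbf x}$ to the statement that $\mu$ is concentrated on $S$. This is safe precisely because $\ker\mathcal{M}$ is finite dimensional: a finite intersection of sets of full measure again has full measure, so the exceptional null sets do not accumulate into something of positive measure. Everything else is the standard fact that for a positive semidefinite quadratic form the zero set of ${\mathbf v}\mapsto{\mathbf v}^T\mathcal{M}{\mathbf v}$ coincides with $\ker\mathcal{M}$, which I would justify by writing $\mathcal{M}=B^TB$ so that ${\mathbf v}^T\mathcal{M}{\mathbf v}=|B{\mathbf v}|^2$. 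I do not expect the fourth-moment hypothesis to play any role beyond guaranteeing that $\mathcal{M}$ exists; finite second moments alone suffice for this theorem.
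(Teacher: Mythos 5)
Your proof is correct, and it takes a genuinely different and substantially more economical route than the paper. The paper proves the equivalence probabilistically: it draws i.i.d. samples from $\mu^P$, forms the empirical second-moment matrix $\frac{1}{N}X^TX$, invokes the weak law of large numbers to establish consistency (this is where the fourth-moment hypothesis is actually consumed, since the paper needs ${\mathbb E}_{\text{\boldmath$\xi$}\sim\mu^P}[({\mathbf v}^T\text{\boldmath$\xi$})^4]<\infty$ and finite variances of the products $x_{ki}x_{kj}$), and then deduces ${\mathbb E}[({\mathbf v}^T\text{\boldmath$\xi$})^2]=0$ for ${\mathbf v}$ in the kernel of $\mathcal{M}$ by comparing two limits in probability. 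Your key identity ${\mathbf v}^T\mathcal{M}{\mathbf v}=\int_{{\mathbb R}^n}({\mathbf v}^T{\mathbf x})^2\,d\mu$ short-circuits this entire detour: the quantity the paper reaches via sampling and the law of large numbers is available directly and deterministically, and from there both implications reduce to the positive-semidefinite kernel characterization $\ker\mathcal{M}=\{{\mathbf v}:{\mathbf v}^T{\mathbf x}=0\ \mu\text{-a.e.}\}$ plus a finite intersection of full-measure sets. What your approach buys is brevity, the elimination of all measure-theoretic limit arguments, and the observation (which you state and which is accurate) that finite second moments suffice, so the fourth-moment hypothesis in the statement is an artifact of the paper's probabilistic method rather than a genuine requirement. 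The one detail worth keeping explicit, which you do handle, is enlarging $S=(\ker\mathcal{M})^\perp$ to a $k$-dimensional subspace when $rank(\mathcal{M})<k$, since the paper's definition of a $k$-dimensional measure asks for a subspace of dimension exactly $k$ carrying full $\mu^P$-measure.
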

\begin{proof} 
Let ${\mathbf x}_1,...,{\mathbf x}_N$ be i.i.d. random vectors sampled according to $\mu^P$ and ${\mathbf x}_i = \begin{bmatrix}
x_{i1}, \cdots, x_{in}
\end{bmatrix}^T$. A natural estimator for the matrix of second moments $\begin{bmatrix}
{\mathbb E}_{\text{\boldmath$\xi$}\sim \mu^P} [\xi_i \xi_j]
\end{bmatrix}_{1\leq i,j\leq n}$ is:
$$
\frac{1}{N}\sum_{i=1}^N {\mathbf x}_i{\mathbf x}_i^T = \frac{1}{N}X^TX
$$
where $X = \begin{bmatrix}
{\mathbf x}_1,...,{\mathbf x}_N
\end{bmatrix}^T$.

This estimator is consistent, i.e.:
$$
\lim_{N\rightarrow \infty}P\left[||\frac{1}{N}X^TX- \mathcal{M}||_F  >\epsilon\right] = 0, 
$$

If we denote $\frac{1}{N}X^TX = \begin{bmatrix}
s_{ij}
\end{bmatrix}_{1\leq i,j\leq n}$, then the latter can be shown after analysis of:
$
s_{ij} = \frac{1}{N}\sum_{k=1}^N x_{ki}x_{kj}
$.
Indeed, $\{x_{ki}x_{kj}\}_{k=1}^N$ are i.i.d. random variables with finite second moment $\int_{{\mathbb R}^n} x^2_{ki}x^2_{kj} d \mu $. Therefore, by weak law of large numbers:
$$
\lim_{N\rightarrow \infty}P\left[|\frac{1}{N}\sum_{k=1}^N x_{ki}x_{kj}-{\mathbb E}_{\text{\boldmath$\xi$}\sim \mu^P} [\xi_i \xi_j] |  >\epsilon\right] = 0, 
$$
I.e
$
\lim_{N\rightarrow \infty}P\left[|s_{ij}-{\mathbb E}_{\text{\boldmath$\xi$}\sim \mu^P} [\xi_i\xi_j] |  >\epsilon\right] = 0, 
$
and therefore:
$$
\lim_{N\rightarrow \infty}P\left[||\frac{1}{N}X^TX- \mathcal{M}||_F  >\epsilon\right] = 0. 
$$

($\Rightarrow$) Now suppose that $rank(\mathcal{M})\leq k$. I.e. we can find orthonormal vectors ${\mathbf v}_1, \cdots, {\mathbf v}_{n-k}$ such that $\mathcal{M} {\mathbf v}_i = {\mathbf 0}$. Since $ \frac{1}{N}||X{\mathbf v}_i||^2 = \frac{1}{N}{\mathbf v}^T_iX^TX{\mathbf v}_i \leq ||\frac{1}{N}X^TX{\mathbf v}_i|| = ||(\frac{1}{N}X^TX- \mathcal{M}){\mathbf v}_i|| \leq ||\frac{1}{N}X^TX- \mathcal{M}||_F ||{\mathbf v}_i||$, then $P\left[\frac{1}{N}||X{\mathbf v}_i||^2 >\epsilon\right] \leq P\left[||\frac{1}{N}X^TX- \mathcal{M}|| >\epsilon\right]$ and:
\begin{equation}\label{lowdim}
\lim_{N\rightarrow \infty}P\left[\frac{1}{N}||X{\mathbf v}_i||^2 >\epsilon\right] =0
\end{equation}

Let us now introduce a random variable $Z = ({\mathbf v}^T_i \text{\boldmath$\xi$})^2$. It is easy to see that a natural estimator of ${\mathbb E}_{\text{\boldmath$\xi$}\sim \mu^P} Z$ is the following expression:
\begin{equation}
\frac{1}{N}||X{\mathbf v}_i||^2 = \frac{1}{N}\sum_{i=1}^N ({\mathbf v}^T_i{\mathbf x}_i)^2 
\end{equation}
Consistency of that estimator, i.e. the statement that
$$
\lim_{N\rightarrow \infty}P\left[|\frac{1}{N}||X{\mathbf v}_i||^2-{\mathbb E}_{\text{\boldmath$\xi$}\sim \mu^P} Z| >\epsilon\right] =0
$$
also follows from the weak law of large numbers, due to ${\mathbb E}_{\text{\boldmath$\xi$}\sim \mu^P} Z^2<\infty$. This, together with~\eqref{lowdim} implies that ${\mathbb E}_{\text{\boldmath$\xi$}\sim \mu^P} Z = 0$. I.e. ${\mathbf v}^T_i \text{\boldmath$\xi$} = {\mathbf 0}$ with probability 1. The latter means that 
$$
P\left[\cap_{i=1}^{n-k} \{\text{\boldmath$\xi$} | {\mathbf v}^T_i \text{\boldmath$\xi$} = {\mathbf 0}\}\right] = 1
$$
and $\mu$ is $k$-dimensional.

($\Leftarrow$) If $\mu$ is $k$-dimensional, then there is a $k$-dimensional linear subspace $S\subseteq {\mathbb R}^n$ such that $\mu^P(S) = 1$.  Let $\{{\mathbf v}_i\}_{i=1}^n$ be an orthonormal basis in ${\mathbb R}^n$ such that ${\mathbf v}_i \perp S, i>k$. Then:
$$
\mathcal{M} = \int_{{\mathbb R}^n} {\mathbf x} {\mathbf x}^T d \mu = \int_{{\mathbb R}^n} \sum_{i=1}^n{\mathbf v}_i {\mathbf v}^T_i {\mathbf x} {\mathbf x}^T \sum_{i=1}^n{\mathbf v}_i {\mathbf v}^T_i d\mu
$$
Since $\int_{{\mathbb R}^n}({\mathbf v}^T_i {\mathbf x})^2 d\mu = 0, i>k$, then:
\begin{equation*}
\begin{split}
\mathcal{M} = \int_{{\mathbb R}^n} \sum_{i=1}^k{\mathbf v}_i {\mathbf v}^T_i {\mathbf x} {\mathbf x}^T \sum_{i=1}^n{\mathbf v}_i {\mathbf v}^T_i d\mu = \\
\sum_{i=1}^k{\mathbf v}_i {\mathbf v}^T_i\int_{{\mathbb R}^n}  {\mathbf x} {\mathbf x}^T d\mu \sum_{i=1}^k{\mathbf v}_i {\mathbf v}^T_i
\end{split}
\end{equation*}
and we see that $rank(\mathcal{M}) \leq k$.
\end{proof}

Let us now define $L_{I,2} = \left\{{\mathbf g}\in L_I |\int_{{\mathbb R}^n} {\mathbf x} {\mathbf x}^T d \mu_{\mathbf g} < \infty \right\}$ and for any ${\mathbf g}\in L_{I,2}$ introduce $\mathcal{M}_{\mathbf g} = \int_{{\mathbb R}^n} {\mathbf x} {\mathbf x}^T d \mu_{\mathbf g} = \int_{{\mathbb R}^n} {\mathbf x} {\mathbf x}^T I({\mathbf g}({\mathbf x})) d {\mathbf x}$. Note that $\mathcal{M}_{\mathbf g}$ is a positive semidefinite matrix, and therefore, the square root $\mathcal{M}^{1/2}_{\mathbf g}$ is defined. Our definition for a penalty function $R: L_{I,2}\rightarrow {\mathbb R}$ is:
\begin{equation}
R({\mathbf g}) = \min_{\mathcal{M}\in {\mathbb R}^{n\times n}: rank(\mathcal{M}) \leq k}||\mathcal{M}^{1/2}_{\mathbf g}-\mathcal{M}||^2_{F}
\end{equation}
It is natural to expect that if $R({\mathbf g})\leq \epsilon$ where $\epsilon > 0$ is small, i.e. if $\mathcal{M}^{1/2}_{\mathbf g}$ (together with $\mathcal{M}_{\mathbf g}$) is close to some rank $k$ matrix, then the support of ${\mathbf g}$ is approximable with a $k$-dimensional linear subspace. I.e. our goal is to develop an algorithm for the following problem:
\begin{equation} \label{fourier-two}
||{\mathbf f}' - \gamma\ast {\mathbf g}||_{L^m_{2}} \rightarrow \min_{{\mathbf g}\in L_{I,2}: R({\mathbf g})\leq \epsilon}
\end{equation}
\ifCOM
Note that, in principle, we could define a penalty function, for any $\beta > 0$, in the following way:
$$
R_\beta({\mathbf g}) = \min_{\mathcal{M}\in {\mathbb R}^{n\times n}: rank(\mathcal{M}) \leq k}||\mathcal{M}^{\beta}_{\mathbf g}-\mathcal{M}||^2_{F}
$$
Our choice of $\beta = \frac{1}{2}$ is defined by the following 2 reasons. At first, a theoretical analysis of the case $\beta \ne \frac{1}{2}$ is hardly possible with the ideas that we present in the paper. The second reason is that the reformulation~\eqref{fourier-one} of the problem~\eqref{one} deals with Fourier tranforms of the initial function ${\mathbf f}$ whose intrinsic dimension we target to compute. But in practice if $n>10$ we will not be able to compute ${\mathbf f}' = \mathcal{F}[\sqrt{p}\,\,{\mathbf f}]$. Thus, when building our theory we should keep in mind that an algorithm that we obtain in the end should operate with the initial ${\mathbf f}$.
So far we know how to handle the second problem only for the case $\beta = \frac{1}{2}$ and $I ({\mathbf c}) = |{\mathbf c}|^{2}$. Let us now study the structure of~\eqref{fourier-two}
\else
\fi
\subsection{Another description of the penalty}
Let us now give an alternative description of the penalty $R({\mathbf g})$ that would suit better to the tasks of theoretical analysis of the problem~\eqref{fourier-two}.

Let $J: {\mathbb C}^m \rightarrow {\mathbb C}^l$ be a continuous function such that $J({\mathbf c})^\dag J({\mathbf c}) = I({\mathbf c})$. For example, $I({\mathbf c}) = |{\mathbf c}|^{2}$, $l=m, J({\mathbf c}) = {\mathbf c}$. By $L^{n\times l}_2({\mathbb R}^n)$ we denote a space of matrices $\begin{bmatrix}
b_{ij}({\mathbf x})\end{bmatrix}_{1\leq i \leq n, 1\leq j \leq l}$, where $b_{ij}\in L_2({\mathbb R}^n)$. 

It is easy to see that any $A\in L^{n\times l}_2({\mathbb R}^n)$ defines a bounded linear operator $O_A$ from $L^l_2({\mathbb R}^n)$ to ${\mathbb C}^n$ by the following rule:
$$
\phi\in L^l_2({\mathbb R}^n) \rightarrow^{O_A} \int_{{\mathbb R}^n} O ({\mathbf x}) \phi({\mathbf x}) d {\mathbf x}
$$
Moreover, it easy to see that all bounded linear operators from $L^l_2({\mathbb R}^n)$ to ${\mathbb C}^n$ can be given in this way.
$L^{n\times l}_2({\mathbb R}^n)$ is a Hilbert space, where the inner product is defined as:
$$
\langle A_1, A_2 \rangle_{L^{n\times l}_2({\mathbb R}^n)} = \int_{{\mathbb R}^n} Tr \left( A_1({\mathbf x})^\dag A_2 ({\mathbf x}) \right) d {\mathbf x}
$$
Recall that, for a bounded linear operator $O: \mathcal{H}_1\rightarrow \mathcal{H}_2$ between Hilbert spaces $\mathcal{H}_1, \mathcal{H}_2$, the rank of $O$ is defined as $\dim \Ima (O)$, where $\Ima (O) = \{O[\phi] \big{|} \phi\in \mathcal{H}_1\}$.

Let us define $S_{\mathbf g} = {\mathbf x}  J({\mathbf g}({\mathbf x}))^T$. 
\begin{theorem}\label{bounded}
If $\mathcal{M}_{\mathbf g} < \infty$, then $S_{\mathbf g}\in L^{n\times l}_2({\mathbb R}^n)$, $O_{S_{\mathbf g}} O_{S_{\mathbf g}}^\dag = \mathcal{M}_{\mathbf g}$, and therefore, $rank(O_{S_{\mathbf g}}) = rank(\mathcal{M}_{\mathbf g})$.
\end{theorem}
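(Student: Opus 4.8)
The plan is to establish the three assertions in turn, the whole argument being driven by the defining identity $J({\mathbf c})^\dagger J({\mathbf c}) = I({\mathbf c})$.

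First I would verify membership in $L^{n\times l}_2({\mathbb R}^n)$ by computing the Hilbert-space norm of $S_{\mathbf g}$ directly. Writing $S_{\mathbf g}({\mathbf x}) = {\mathbf x}\, J({\mathbf g}({\mathbf x}))^T$ and using that ${\mathbf x}$ is real, so that $S_{\mathbf g}({\mathbf x})^\dagger = \overline{J({\mathbf g}({\mathbf x}))}\,{\mathbf x}^T$, one gets
$$
Tr\left(S_{\mathbf g}({\mathbf x})^\dagger S_{\mathbf g}({\mathbf x})\right) = Tr\left(\overline{J({\mathbf g}({\mathbf x}))}\,{\mathbf x}^T{\mathbf x}\,J({\mathbf g}({\mathbf x}))^T\right) = |{\mathbf x}|^2\, J({\mathbf g}({\mathbf x}))^\dagger J({\mathbf g}({\mathbf x})) = |{\mathbf x}|^2\, I({\mathbf g}({\mathbf x})).
$$
Integrating over ${\mathbb R}^n$ and splitting $|{\mathbf x}|^2 = \sum_i x_i^2$ turns the squared $L^{n\times l}_2$-norm into $\sum_{i=1}^n \int_{{\mathbb R}^n} x_i^2\, I({\mathbf g}({\mathbf x}))\,d{\mathbf x} = Tr(\mathcal{M}_{\mathbf g})$. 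Since $\mathcal{M}_{\mathbf g} < \infty$ has finite diagonal entries, this trace is finite and hence $S_{\mathbf g}\in L^{n\times l}_2({\mathbb R}^n)$.

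Next I would pin down the adjoint $O_{S_{\mathbf g}}^\dagger$. For $v\in {\mathbb C}^n$ and $\phi\in L^l_2({\mathbb R}^n)$, the pairing $\langle O_{S_{\mathbf g}}\phi,\, v\rangle_{{\mathbb C}^n} = \left(\int_{{\mathbb R}^n} S_{\mathbf g}({\mathbf x})\phi({\mathbf x})\,d{\mathbf x}\right)^\dagger v = \int_{{\mathbb R}^n}\phi({\mathbf x})^\dagger\, S_{\mathbf g}({\mathbf x})^\dagger v\,d{\mathbf x}$ identifies the adjoint as the operator $v \mapsto \bigl({\mathbf x}\mapsto S_{\mathbf g}({\mathbf x})^\dagger v\bigr)$, which lands in $L^l_2({\mathbb R}^n)$ by the first part. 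Composing and using $S_{\mathbf g}({\mathbf x}) S_{\mathbf g}({\mathbf x})^\dagger = {\mathbf x}\left(J({\mathbf g}({\mathbf x}))^T \overline{J({\mathbf g}({\mathbf x}))}\right){\mathbf x}^T = I({\mathbf g}({\mathbf x}))\,{\mathbf x}{\mathbf x}^T$ then yields
$$
O_{S_{\mathbf g}} O_{S_{\mathbf g}}^\dagger v = \left(\int_{{\mathbb R}^n} S_{\mathbf g}({\mathbf x}) S_{\mathbf g}({\mathbf x})^\dagger\,d{\mathbf x}\right) v = \left(\int_{{\mathbb R}^n} {\mathbf x}{\mathbf x}^T\, I({\mathbf g}({\mathbf x}))\,d{\mathbf x}\right) v = \mathcal{M}_{\mathbf g}\, v,
$$
which is exactly the claimed operator identity $O_{S_{\mathbf g}} O_{S_{\mathbf g}}^\dagger = \mathcal{M}_{\mathbf g}$.

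Finally, for the rank equality I would invoke the general Hilbert-space fact that for any bounded operator $T$ one has $\ker(TT^\dagger) = \ker(T^\dagger)$ (the inclusion $\supseteq$ is trivial, and $TT^\dagger v = 0$ forces $\|T^\dagger v\|^2 = \langle TT^\dagger v, v\rangle = 0$), whence $\overline{\Ima(TT^\dagger)} = \ker(TT^\dagger)^\perp = \ker(T^\dagger)^\perp = \overline{\Ima(T)}$. Applying this to $T = O_{S_{\mathbf g}}$, whose range sits in the finite-dimensional space ${\mathbb C}^n$, both images are automatically closed and therefore genuinely equal, so that $rank(O_{S_{\mathbf g}}) = rank(O_{S_{\mathbf g}} O_{S_{\mathbf g}}^\dagger) = rank(\mathcal{M}_{\mathbf g})$. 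The only real bookkeeping hazard is keeping the conjugations straight in the adjoint computation; once the formula $O_{S_{\mathbf g}}^\dagger[v] = S_{\mathbf g}(\cdot)^\dagger v$ is correct the rest is a short calculation, and the finite-dimensionality of the codomain removes any subtlety about non-closed ranges in the last step.
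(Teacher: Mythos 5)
Your proposal is correct and follows essentially the same route as the paper: compute $\|S_{\mathbf g}\|^2_{L^{n\times l}_2}=Tr(\mathcal{M}_{\mathbf g})$ via $J^\dagger J=I$, identify $O_{S_{\mathbf g}}^\dagger[v]=S_{\mathbf g}(\cdot)^\dagger v$, and compose to get $\mathcal{M}_{\mathbf g}$. Your explicit justification of the rank step via $\ker(TT^\dagger)=\ker(T^\dagger)$ and closedness of ranges in the finite-dimensional codomain is a welcome detail the paper merely asserts.
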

\begin{proof} [Proof of theorem~\ref{bounded}]
The fact that $S_{\mathbf g}\in L^{n\times l}_2({\mathbb R}^n)$ follows from:
\begin{equation*}
\begin{split}
||S_{\mathbf g}||^2_{L^{n\times l}_2({\mathbb R}^n)} = 
\int_{{\mathbb R}^n} |{\mathbf x}|^2 I({\mathbf g}({\mathbf x})) d {\mathbf x} 
= Tr(\mathcal{M}_{\mathbf g}) < \infty
\end{split}
\end{equation*}
The dual to $O_{S_{\mathbf g}}$ is, by definition, an operator $O_{S_{\mathbf g}}^\dag: {\mathbb C}^n\rightarrow L^l_2({\mathbb R}^n)$ that satisfies for any ${\mathbf u}\in {\mathbb C}^n, \phi\in L^l_2({\mathbb R}^n)$:
$$
\int_{{\mathbb R}^n} {\mathbf u}^\dag {\mathbf x} J({\mathbf g}({\mathbf x}))^T \phi({\mathbf x}) d {\mathbf x} = \int_{{\mathbb R}^n} O_{S_{\mathbf g}}^\dag[{\mathbf u}] ({\mathbf x})^\dag \phi({\mathbf x}) d {\mathbf x}
$$
It is easy to see that $O_{S_{\mathbf g}}^\dag[{\mathbf u}]({\mathbf x}) = {\mathbf x}^T {\mathbf u}  J({\mathbf g}({\mathbf x}))^\ast$. Thus, $O_{S_{\mathbf g}} O_{S_{\mathbf g}}^\dag : {\mathbb C}^n\rightarrow {\mathbb C}^n$ acts on ${\mathbf u}\in {\mathbb C}^n$ as:
$$
{\mathbf u} \rightarrow^{O_{S_{\mathbf g}}^\dag} \hspace{-3pt}{\mathbf x}^T {\mathbf u}  J({\mathbf g}({\mathbf x}))^\ast\rightarrow^{O_{S_{\mathbf g}}}\hspace{-3pt} \int_{{\mathbb R}^n} {\mathbf x} J({\mathbf g}({\mathbf x}))^T {\mathbf x}^T {\mathbf u}  J({\mathbf g}({\mathbf x}))^\ast d {\mathbf x}
$$
The latter is equal to $\mathcal{M}_{\mathbf g} {\mathbf u} $ and we conclude that $O_{S_{\mathbf g}} O_{S_{\mathbf g}}^\dag = \mathcal{M}_{\mathbf g}$ and $rank(O_{S_{\mathbf g}}) = rank(\mathcal{M}_{\mathbf g})$.
\end{proof}

Eckart-Young theorem from the theory of Singular Value Decomposition (SVD) gives us that 
$$
R({\mathbf g}) = \min_{\mathcal{M}\in {\mathbb R}^{n\times n}: rank(\mathcal{M}) \leq k}||\mathcal{M}^{1/2}_{\mathbf g}-\mathcal{M}||^2_{F} = \sum_{i=k+1}^{n} \lambda_i
$$
where $\lambda_1 \geq \cdots \geq \lambda_{n}>0$ are eigenvalues of $\mathcal{M}_{\mathbf g}\hspace{-2pt}=\hspace{-2pt}\mathcal{M}^{\frac{1}{2}T}_{\mathbf g}\hspace{-2pt}\mathcal{M}^{\frac{1}{2}}_{\mathbf g}$.
\ifCOM
Let $\mathcal{B}(L^l_2({\mathbb R}^n), {\mathbb C}^n)$ be a space of bounded linear operators from $L^l_2({\mathbb R}^n)$ to ${\mathbb C}^n$. 

It is easy to see that 
$\mathcal{B}(L^l_2({\mathbb R}^n), {\mathbb C}^n) \simeq \mathcal{B}(L^l_2({\mathbb R}^n), {\mathbb C})^n$ and, using the fact that $\mathcal{B}(L^l_2({\mathbb R}^n), {\mathbb C}) \simeq L^l_2({\mathbb R}^n)$ we conclude that $\mathcal{B}(L^l_2({\mathbb R}^n), {\mathbb C}^n) \simeq L^{n\times l}_2({\mathbb R}^n)$, i.e. any element $O\in \mathcal{B}(L^l_2({\mathbb R}^n), {\mathbb C}^n)$ has the following form:

It is easy to see that $\mathcal{B}(L^l_2({\mathbb R}^n), {\mathbb C}^n) \simeq L^{n\times l}_2({\mathbb R}^n)$, i.e. any such operator has the following form:
$$
\phi\in L^l_2({\mathbb R}^n) \rightarrow \int_{{\mathbb R}^n} O ({\mathbf x}) \phi({\mathbf x}) d {\mathbf x}
$$
for a proper $O ({\mathbf x}) = \begin{bmatrix}
b_{ij}({\mathbf x})\end{bmatrix}_{1\leq i \leq n, 1\leq j \leq l}$, $b_{ij}\in L_2({\mathbb R}^n)$. We will not distinguish between such operator and corresponding $O\in L^{n\times l}_2({\mathbb R}^n)$. Moreover, $\mathcal{B}(L^l_2({\mathbb R}^n), {\mathbb C}^n)$ is a Hilbert space, where for $A_1, A_2$ the inner product is defined as:
$$
\langle A_1, A_2 \rangle_{\mathcal{B}(L^l_2({\mathbb R}^n), {\mathbb C}^n)} = \int_{{\mathbb R}^n} Tr \left( A_1({\mathbf x})^\dag A_1 ({\mathbf x}) \right) d {\mathbf x}
$$
\else
\fi
Due to the relationship $O_{S_{\mathbf g}} O_{S_{\mathbf g}}^\dag = \mathcal{M}_{\mathbf g}$ the following becomes true:
\begin{theorem}\label{represent}
$
R({\mathbf g}) = \min_{S\in L^{n\times l}_2({\mathbb R}^n): rank(O_{S}) \leq k}||S_{\mathbf g}-S||^2_{L^{n\times l}_2({\mathbb R}^n)}
$
\end{theorem}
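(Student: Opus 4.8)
The plan is to recognize the right-hand side as an instance of the Eckart--Young (Schmidt--Mirsky) theorem in the Hilbert--Schmidt setting, and then to transport the answer back to $\mathcal{M}_{\mathbf g}$ through Theorem~\ref{bounded}. First I would observe that the norm of $L^{n\times l}_2({\mathbb R}^n)$ is precisely the Hilbert--Schmidt norm of the associated operator: for every $S\in L^{n\times l}_2({\mathbb R}^n)$,
$$
||S||^2_{L^{n\times l}_2({\mathbb R}^n)} = \int_{{\mathbb R}^n} Tr\left(S({\mathbf x})^\dag S({\mathbf x})\right) d{\mathbf x} = ||O_S||^2_{HS}.
$$
Since, as noted before the theorem, $S\mapsto O_S$ is a bijection between $L^{n\times l}_2({\mathbb R}^n)$ and the bounded operators $L^l_2({\mathbb R}^n)\to{\mathbb C}^n$ (all of which are of finite rank at most $n$, hence Hilbert--Schmidt), the minimization on the right of the theorem is exactly
$$
\min_{O:\, rank(O)\leq k} ||O_{S_{\mathbf g}} - O||^2_{HS},
$$
the minimum being taken over all bounded (equivalently, finite-rank) operators $O: L^l_2({\mathbb R}^n)\to {\mathbb C}^n$ of rank at most $k$.

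Next I would invoke the Schmidt--Mirsky form of the Eckart--Young theorem for compact (here, finite-rank) operators between Hilbert spaces: the best rank-$\leq k$ approximation of $O_{S_{\mathbf g}}$ in Hilbert--Schmidt norm is obtained by truncating its singular value decomposition, and the resulting error equals $\sum_{i>k}\sigma_i(O_{S_{\mathbf g}})^2$, where $\sigma_1\geq\sigma_2\geq\cdots$ are the singular values. The truncated SVD is itself finite-rank, hence corresponds to a genuine element of $L^{n\times l}_2({\mathbb R}^n)$, so the minimum is attained and the identification above is legitimate. By Theorem~\ref{bounded} we have $O_{S_{\mathbf g}}O_{S_{\mathbf g}}^\dag = \mathcal{M}_{\mathbf g}$, so the nonzero singular values of $O_{S_{\mathbf g}}$ are the square roots of the eigenvalues of $\mathcal{M}_{\mathbf g}$, i.e. $\sigma_i^2 = \lambda_i$; as $\mathcal{M}_{\mathbf g}$ is $n\times n$, we have $\lambda_i = 0$ for $i>n$ and therefore $\sum_{i>k}\sigma_i^2 = \sum_{i=k+1}^n\lambda_i$. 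Comparing with the formula $R({\mathbf g})=\sum_{i=k+1}^n\lambda_i$ established just above (the finite matrix Eckart--Young applied to $\mathcal{M}^{1/2}_{\mathbf g}$) yields the claim.

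The routine parts are the norm identity and the bookkeeping of singular versus eigenvalues. The main obstacle is the appeal to the infinite-dimensional Eckart--Young theorem: one must ensure that ranging $O$ over \emph{all} rank-$\leq k$ operators (not merely over those whose image and coimage lie in the span of the dominant singular vectors of $O_{S_{\mathbf g}}$) cannot beat the truncated SVD. This is exactly the content of the Schmidt--Mirsky inequality $||T-O||_{HS}^2 \geq \sum_{i>k}\sigma_i(T)^2$. Since $O_{S_{\mathbf g}}$ has finite rank at most $n$, one can alternatively sidestep the fully general statement by restricting attention to the (at most $n$-dimensional) reducing subspace of $L^l_2({\mathbb R}^n)$ on which $O_{S_{\mathbf g}}$ is supported, where the problem collapses to the finite-dimensional Eckart--Young theorem already used for $\mathcal{M}_{\mathbf g}$; a short argument shows that an optimal approximant may be taken with both its image and coimage inside this subspace, so no generality is lost.
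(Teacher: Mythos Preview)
Your proposal is correct and follows essentially the same route the paper outlines: the paper simply asserts that the classical Eckart--Young argument carries over to bounded operators $L^l_2({\mathbb R}^n)\to{\mathbb C}^n$ because such operators have finite spectrum, and then records the truncated SVD as the optimal $S$. Your write-up supplies precisely the details the paper omits---the identification of the $L^{n\times l}_2$ norm with the Hilbert--Schmidt norm, the appeal to Schmidt--Mirsky (or, equivalently, the reduction to the finite-dimensional coimage of $O_{S_{\mathbf g}}$), and the bookkeeping $\sigma_i^2=\lambda_i$ via Theorem~\ref{bounded}.
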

We will omit a proof of that theorem because it is just a carefull checking that all arguments of Eckart-Young theorem for matrices maintain in the case of bounded linear operators from $L^l_2({\mathbb R}^n)$ to ${\mathbb C}^n$. Indeed, all arguments survive, because such operators  can have only a finite spectrum, due to the fact that ${\mathbb C}^n$ is finite-dimensional. Let us only describe an optimal $S$ on which $\min_{S\in L^{n\times l}_2({\mathbb R}^n): rank(O_S) \leq k}||S_{\mathbf g}-S||^2_{L^{n\times l}_2({\mathbb R}^n)}$ is attained.

Let ${\mathbf u}_1, \cdots {\mathbf u}_n$ be orthonormal eigenvectors of $\mathcal{M}_{\mathbf g} = O_{S_{\mathbf g}} O_{S_{\mathbf g}}^\dag$ and $\lambda_1 \geq \cdots \geq \lambda_{n}>0$ be corresponding eigenvalues. For $\sigma_i = \sqrt{\lambda_i}$ let us define
$
{\mathbf v}_i = \frac{O_{S_{\mathbf g}}^\dag[{\mathbf u}_i]}{\sigma_i}
$.
A vector ${\mathbf v}_i$ corresponds to a function:
$$
{\mathbf v}_i ({\mathbf x}) = \frac{{\mathbf x}^T {\mathbf u}_i  J({\mathbf g}({\mathbf x}))^\ast}{\sigma_i} \in L^l_2({\mathbb R}^n)
$$
It is easy to see that ${\mathbf v}_1, \cdots {\mathbf v}_n$ is an orthonormal basis in $\Ima  O_{S_{\mathbf g}}^\dag$, and 
\ifCOM
$S_{\mathbf g}^\ast$ can be expanded in the following way:
$$
S_{\mathbf g}^\ast = \sum_{i=1}^n \sigma_i {\mathbf v}_i {\mathbf u}^\dag_i
$$
and therefore, 
\else
\fi
SVD for $S_{\mathbf g}$ is:
$$
S_{\mathbf g} = \sum_{i=1}^n \sigma_i  {\mathbf u}_i {\mathbf v}_i^\dag = \sum_{i=1}^n {\mathbf u}_i {\mathbf u}_i^\dag {\mathbf x}  J({\mathbf g}({\mathbf x}))^T
$$
An optimal $S$ is defined by a truncation of SVD for $S_{\mathbf g}$ at $k$th term, i.e.:
\begin{equation}\label{optS}
S = \sum_{i=1}^k {\mathbf u}_i {\mathbf u}_i^\dag {\mathbf x}  J({\mathbf g}({\mathbf x}))^T = P_{\mathbf g} {\mathbf x} J({\mathbf g}({\mathbf x}))^T
\end{equation}
where $P_{\mathbf g} = \sum_{i=1}^k {\mathbf u}_i {\mathbf u}_i^\dag$ is a projection operator to first $k$ principal components of $\mathcal{M}_{\mathbf g}$. 
\ifCOM
For that $S$ we have:
\begin{equation} \label{reformulation}
\begin{split}
R({\mathbf g}) = \sum_{i=k+1}^n \sigma^2_i = ||S_{\mathbf g}-S||^2_{L^{n\times l}_2({\mathbb R}^n)} = \\
 \int_{{\mathbb R}^n} \left|\left| {\mathbf x} J({\mathbf g}({\mathbf x}))^T - P_{\mathbf g} {\mathbf x} J({\mathbf g}({\mathbf x}))^T \right|\right|_F^2 d {\mathbf x} 
= \\
\int_{{\mathbb R}^n} Tr\left( ({\mathbf x}  - P_{\mathbf g} {\mathbf x}) J({\mathbf g}({\mathbf x}))^\dag  J({\mathbf g}({\mathbf x}))({\mathbf x}  - P_{\mathbf g} {\mathbf x})^T \right) d {\mathbf x} = \\
 \int_{{\mathbb R}^n} \left| {\mathbf x}  - P_{\mathbf g} {\mathbf x} \right|^2 I({\mathbf g}({\mathbf x})) d {\mathbf x} 
\end{split}
\end{equation}
\else
\fi

\section{An algorithm for $p({\mathbf x}) \propto e^{-|{\mathbf x}|^2}$ and $I({\mathbf c}) = |{\mathbf c}|^2$}\label{III}
Practically a very important probability distribution on ${\mathbb R}^n$ is the multi-variate normal distribution, i.e. $p({\mathbf x}) = \frac{1}{\sqrt{(2\pi)^n|\Sigma|}}e^{-\frac{1}{2}({\mathbf x}-\text{\boldmath$\mu$})^T\Sigma^{-1}({\mathbf x}-\text{\boldmath$\mu$})}$. For that distribution, the problem~\eqref{problem}, after an affine change of variables can be reduced to the case $p({\mathbf x}) \propto e^{-|{\mathbf x}|^2}$.

To simplify our notation, we will assume that $p({\mathbf x}) = e^{-|{\mathbf x}|^2}$ (we can hide the normalization constant inside ${\mathbf f}, {\mathbf g}$ in the objective~\eqref{problem}). Therefore, $\gamma({\mathbf x}) = \mathcal{F}[\sqrt{p({\mathbf x})}] = e^{-|{\mathbf x}|^2/2}$.

Thus, the objective of our problems \eqref{fourier-one} and~\eqref{fourier-two} is the same:
\begin{equation*}
\begin{split}
||{\mathbf f'} - \gamma\ast {\mathbf g}||^2_{L^m_{2}} = \int_{{\mathbb R}^n} \left|{\mathbf f'}({\mathbf x}) - \int_{{\mathbb R}^n} e^{-|{\mathbf x}-{\mathbf y}|^2/2}{\mathbf g}({\mathbf y}) d {\mathbf y}\right|^2 d{\mathbf x} \\ 
= ||{\mathbf f'} - \mathcal{W} [{\mathbf g}]||^2_{L^m_{2}}
\end{split}
\end{equation*}
where $\mathcal{W} [{\mathbf g}] ({\mathbf x}) \triangleq \gamma\ast {\mathbf g} = \int_{{\mathbb R}^n} e^{-|{\mathbf x}-{\mathbf y}|^2/2}{\mathbf g}({\mathbf y}) d {\mathbf y}$ is a well-known integral transform which is called {\em Weierstrass transform}. Recall that ${\mathbf f'} = \sqrt{2\pi}^n\mathcal{F}[\sqrt{p({\mathbf x})}{\mathbf f}] = \gamma\ast {\mathbf {\hat f}} = \mathcal{W} [{\mathbf {\hat f}}]$ where ${\mathbf {\hat f}} = \mathcal{F}[{\mathbf f}]$.

The difference between problems is that in~\eqref{fourier-one} we optimize over tempered distributions ${\mathbf g}\in \mathcal{G}_k$ and in~\eqref{fourier-two} we optimize over $\Omega_\epsilon = \left\{{\mathbf g}\in L_{I,2}| R({\mathbf g})\leq \epsilon\right\}$. Together with $p({\mathbf x}) = e^{-|{\mathbf x}|^2}$ we will assume that $I({\mathbf c}) = |{\mathbf c}|^2$. It is easy to see that in our case: 
$$L_{I,2} = \{{\mathbf g}\in L^m_2({\mathbb R}^n)| \int_{{\mathbb R}^n} |{\mathbf x}|^2|{\mathbf g}({\mathbf x})|^2 d{\mathbf x} < \infty\}$$
A well-known characterization of Sobolev spaces in terms of Fourier transform~\cite{Joran} states that:
$$
W^{1,2} = \left\{f\in L_2({\mathbb R}^n)| (1+|\text{\boldmath$\xi$}|^2)^{1/2}\mathcal{F}[f]\in L_2({\mathbb R}^n)\right\}
$$
The latter implies that $L_{I,2} = \{{\mathbf g} = (g_1, \cdots, g_m) | \mathcal{F}^{-1}[g_i]\in W^{1,2}\}$, i.e. $L_{I,2}$ is just an image of $(W^{2,1})^m$ under $\mathcal{F}$. This fact will play its role in the next section.

\ifCOM
Unfortunately, problems~\eqref{fourier-one} and~\eqref{fourier-two} are not necesarily equivalent  for $I({\mathbf c}) = |{\mathbf c}|^2$. I.e.,  if we have an oracle that is able to find an optimal solution of~\eqref{fourier-two} (for any $\epsilon > 0$) it is not obvious that an optimal solution for~\eqref{fourier-one} can be directly constructed. 
Nonetheless, instead of focusing on equivalence issues between~\eqref{fourier-one} and~\eqref{fourier-two}, 
\else
\fi
Let us describe a natural heuristics for our problem when $I({\mathbf c}) = |{\mathbf c}|^2$. Given that $I$, we define $J({\mathbf c}) = {\mathbf c}$ and check that $I({\mathbf c}) = J({\mathbf c})^\dag J({\mathbf c})$.  
The formulation~\eqref{fourier-two} is not equivalent but connected with the following optimizational problem:
$$
||\mathcal{W} [{\mathbf {\hat f}}] - \mathcal{W} [{\mathbf g}]||^2_{L^m_{2}} +\lambda R({\mathbf g})\rightarrow \min_{{\mathbf g}\in L_{I,2}}
$$
By varying the parameter $\lambda$ we control a contribution of the penalty $R$. Thus, it is natural to expect that an increase of $\lambda$ will force an optimal ${\mathbf g}$ to be $k$-dimensional.
Taking into account the representation of $R$ given in theorem~\ref{represent} we can rewrite the latter function as:
\begin{equation*}
\begin{split}
||\mathcal{W} [{\mathbf {\hat f}}] - \mathcal{W} [{\mathbf g}]||^2_{L^m_{2}} +
\lambda \min\limits_{\substack{S\in L^{n\times m}_2({\mathbb R}^n) \\ rank(O_S) \leq k}}||S_{\mathbf g}-S||^2_{L^{n\times m}_2({\mathbb R}^n)}
\end{split}
\end{equation*}
and our problem can be seen as a task in which we optimize over 2 objects:
\begin{equation*}
\begin{split}
\Phi({\mathbf g}, S) \rightarrow \min_{\substack{{\mathbf g}\in L_{I,2} \\ S\in L^{n\times m}_2({\mathbb R}^n): rank(O_S) \leq k}}
\end{split}
\end{equation*}
where $\Phi({\mathbf g}, S) = ||\mathcal{W} [{\mathbf {\hat f}}] \hspace{-2pt}-\hspace{-2pt} \mathcal{W} [{\mathbf g}]||^2_{L^m_{2}} \hspace{-2pt}+\hspace{-2pt} \lambda ||S_{\mathbf g}\hspace{-2pt}-\hspace{-2pt}S||^2_{L^{n\times l}_2({\mathbb R}^n)}$.

A natural norm on $L_{I,2}$ can be defined as:
\begin{equation*}
\begin{split}
||{\mathbf f}_1||_{L_{I,2}} = \sqrt{(2\pi)^n||{\mathbf f}_1||^2_{L^m_2}+\lambda Tr(\mathcal{M}_{{\mathbf f}_1})} = \\
\sqrt{\int_{{\mathbb R}^n} ((2\pi)^n+\lambda |{\mathbf x}|^2) |{\mathbf f}_1({\mathbf x})|^2 d{\mathbf x}}
\end{split}
\end{equation*}
The naturality of that norm is due to the following property:
\begin{theorem} \label{dense}
$|\Phi({\mathbf g}_1, S) - \Phi({\mathbf g}_2, S)|\leq 
||{\mathbf g}_1-{\mathbf g}_2||_{L_{I,2}}\left( 2 \sqrt{\Phi({\mathbf g}_2, S)}+ ||{\mathbf g}_1-{\mathbf g}_2||_{L_{I,2}}\right)$
\end{theorem}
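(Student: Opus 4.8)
The plan is to recognize $\Phi(\cdot,S)$ as a squared norm in an auxiliary product Hilbert space and then to reduce the statement to the elementary scalar inequality $\bigl|\,\|a\|^2-\|b\|^2\,\bigr|\le \|a-b\|\,(\|a-b\|+2\|b\|)$. Concretely, I would introduce the Hilbert space $\mathcal{H}=L^m_2({\mathbb R}^n)\oplus L^{n\times l}_2({\mathbb R}^n)$ equipped with the weighted norm $\|(a,b)\|^2_{\mathcal{H}}=\|a\|^2_{L^m_2}+\lambda\|b\|^2_{L^{n\times l}_2({\mathbb R}^n)}$, and define the map $T({\mathbf g})=(\mathcal{W}[{\mathbf g}],\,S_{\mathbf g})$ together with the fixed element ${\mathbf b}_S=(\mathcal{W}[{\mathbf {\hat f}}],\,S)\in\mathcal{H}$. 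Since in this section $J({\mathbf c})={\mathbf c}$, so that $S_{\mathbf g}({\mathbf x})={\mathbf x}\,{\mathbf g}({\mathbf x})^T$, and since $\mathcal{W}=\gamma\ast\cdot$ is linear, the map $T$ is linear and one has exactly $\Phi({\mathbf g},S)=\|T({\mathbf g})-{\mathbf b}_S\|^2_{\mathcal{H}}$, hence $\sqrt{\Phi({\mathbf g},S)}=\|T({\mathbf g})-{\mathbf b}_S\|_{\mathcal{H}}$.

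The crux of the argument is to show that $T$ is a \emph{contraction} from $(L_{I,2},\|\cdot\|_{L_{I,2}})$ into $\mathcal{H}$, i.e. $\|T({\mathbf h})\|_{\mathcal{H}}\le\|{\mathbf h}\|_{L_{I,2}}$ for every ${\mathbf h}\in L_{I,2}$. For the second coordinate I would invoke the identity $\|S_{\mathbf h}\|^2_{L^{n\times l}_2({\mathbb R}^n)}=Tr(\mathcal{M}_{\mathbf h})$ obtained in the proof of Theorem~\ref{bounded}. For the first coordinate, the convolution theorem gives $\mathcal{F}[\gamma\ast{\mathbf h}]=\sqrt{2\pi}^n\,\mathcal{F}[\gamma]\,\mathcal{F}[{\mathbf h}]$, and because $\gamma({\mathbf x})=e^{-|{\mathbf x}|^2/2}$ is a fixed point of $\mathcal{F}$ we have $\|\mathcal{F}[\gamma]\|_{\infty}=1$; combining this with unitarity of $\mathcal{F}$ on $L^m_2$ yields $\|\mathcal{W}[{\mathbf h}]\|^2_{L^m_2}\le(2\pi)^n\|{\mathbf h}\|^2_{L^m_2}$. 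Adding the two estimates reproduces precisely $\|{\mathbf h}\|^2_{L_{I,2}}=(2\pi)^n\|{\mathbf h}\|^2_{L^m_2}+\lambda\,Tr(\mathcal{M}_{\mathbf h})$, which is exactly the reason this norm was defined.

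With the contraction in hand the rest is bookkeeping. Put $a=T({\mathbf g}_1)-{\mathbf b}_S$ and $b=T({\mathbf g}_2)-{\mathbf b}_S$, so that $\Phi({\mathbf g}_1,S)=\|a\|^2_{\mathcal{H}}$, $\Phi({\mathbf g}_2,S)=\|b\|^2_{\mathcal{H}}$, and by linearity $a-b=T({\mathbf g}_1-{\mathbf g}_2)$. The reverse triangle inequality gives $\bigl|\|a\|-\|b\|\bigr|\le\|a-b\|$ and the triangle inequality gives $\|a\|+\|b\|\le\|a-b\|+2\|b\|$, whence $\bigl|\|a\|^2-\|b\|^2\bigr|\le\|a-b\|\,(\|a-b\|+2\|b\|)$. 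The contraction bound then yields $\|a-b\|_{\mathcal{H}}=\|T({\mathbf g}_1-{\mathbf g}_2)\|_{\mathcal{H}}\le\|{\mathbf g}_1-{\mathbf g}_2\|_{L_{I,2}}$, and since the map $t\mapsto t\,(t+2\|b\|)$ is increasing for $t\ge0$ while $\|b\|_{\mathcal{H}}=\sqrt{\Phi({\mathbf g}_2,S)}$, substituting this bound produces exactly the asserted inequality.

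I would emphasize that the only genuinely analytic point is the operator-norm bound on the Weierstrass transform; everything else is the product-space reformulation plus the scalar inequality for a difference of squares. The one subtlety worth checking carefully is that $T$ is \emph{truly} linear, which holds here only because $I({\mathbf c})=|{\mathbf c}|^2$ forces $J({\mathbf c})={\mathbf c}$ and hence makes $S_{\mathbf g}$ linear in ${\mathbf g}$; for a nonlinear $J$ the decomposition $a-b=T({\mathbf g}_1-{\mathbf g}_2)$ fails and this argument would not go through.
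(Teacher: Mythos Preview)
Your argument is correct and, at its core, is the same triangle-inequality computation the paper carries out: both proofs bound $\|\mathcal{W}[{\mathbf g}_1-{\mathbf g}_2]\|_{L^m_2}\le\sqrt{2\pi}^n\|{\mathbf g}_1-{\mathbf g}_2\|_{L^m_2}$ and $\|S_{{\mathbf g}_1}-S_{{\mathbf g}_2}\|^2=\int|{\mathbf x}|^2|{\mathbf g}_1-{\mathbf g}_2|^2\,d{\mathbf x}$, then combine. Two small differences are worth noting. First, for the convolution bound the paper invokes Young's inequality with $\|\gamma\|_{L_1}=\sqrt{2\pi}^n$, whereas you pass to the Fourier side and use $\|\mathcal{F}[\gamma]\|_\infty=1$ together with Plancherel; the numerical outcome is identical. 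Second, your product-space packaging $\Phi({\mathbf g},S)=\|T({\mathbf g})-{\mathbf b}_S\|^2_{\mathcal H}$ together with the scalar inequality $\bigl|\|a\|^2-\|b\|^2\bigr|\le\|a-b\|(\|a-b\|+2\|b\|)$ is exactly what the paper assembles by hand via two separate squarings and the Cauchy--Schwarz step $ab+cd\le\sqrt{a^2+c^2}\sqrt{b^2+d^2}$; your version is tidier and, incidentally, handles both signs of $\Phi({\mathbf g}_1,S)-\Phi({\mathbf g}_2,S)$ at once, while the paper's written computation only displays the upper direction. Your caveat that linearity of $T$ hinges on $J({\mathbf c})={\mathbf c}$ is apt and matches the standing assumption of this section.
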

\begin{proof}[Proof of theorem~\ref{dense}]
Triangle inequality with subsequent Young convolution theorem gives us:
\begin{equation*}
\begin{split}
||\mathcal{W} [{\mathbf {\hat f}}] - \mathcal{W} [{\mathbf g}_1]||_{L^m_2} \leq ||\mathcal{W} [{\mathbf {\hat f}}] - \mathcal{W} [{\mathbf g}_2]||_{L^m_2} + \\ ||\gamma\ast [{\mathbf g}_2\hspace{-2pt}-\hspace{-1pt}{\mathbf g}_1]||_{L^m_2} \hspace{-2pt}\leq\hspace{-1pt} ||\mathcal{W} [{\mathbf {\hat f}}] \hspace{-2pt}-\hspace{-1pt} \mathcal{W} [{\mathbf g}_2]||_{L^m_2}\hspace{-2pt}+\hspace{-1pt} ||\gamma||_{L_1}||{\mathbf g}_2\hspace{-2pt}-\hspace{-1pt}{\mathbf g}_1||_{L^m_2}
\end{split}
\end{equation*}
It is easy to check that $||\gamma||_{L_1} = \int_{{\mathbb R}^n}e^{-|{\mathbf x}|^2/2} d {\mathbf x} = \sqrt{2\pi}^n$ and after squaring both sides we obtain:
\begin{equation}\label{part1}
\begin{split}
||\mathcal{W} [{\mathbf {\hat f}}] - \mathcal{W} [{\mathbf g}_1]||^2_{L^m_2} \leq ||\mathcal{W} [{\mathbf {\hat f}}] - \mathcal{W} [{\mathbf g}_2]||^2_{L^m_2} + \\ (2\pi)^n||{\mathbf g}_2 \hspace{-2pt}-\hspace{-1pt}{\mathbf g}_1||^2_{L^m_2}  \hspace{-2pt}+\hspace{-1pt} 2||\mathcal{W} [{\mathbf {\hat f}}]  \hspace{-2pt}-\hspace{-1pt} \mathcal{W} [{\mathbf g}_2]||_{L^m_2}\sqrt{2\pi}^n||{\mathbf g}_2 \hspace{-2pt}-\hspace{-1pt}{\mathbf g}_1||_{L^m_2}
\end{split}
\end{equation}
Now we again apply triangle inequality to bound for the second part of $\Phi({\mathbf g}_1,S)$:
\begin{equation*}
\begin{split}
||S_{{\mathbf g}_1}-S||_{L^{n\times m}_2({\mathbb R}^n)} = || {\mathbf x}{\mathbf g}^T_1({\mathbf x})  - S({\mathbf x}) ||_{L^{n\times m}_2 ({\mathbb R}^{n})} \leq \\ 
||{\mathbf x}{\mathbf g}^T_1({\mathbf x}) \hspace{-2pt} - \hspace{-1pt} {\mathbf x}{\mathbf g}^T_2({\mathbf x})||_{L^{n\times m}_2 ({\mathbb R}^{n})} \hspace{-3pt}+\hspace{-3pt}
||{\mathbf x}{\mathbf g}^T_2({\mathbf x}) \hspace{-2pt} - \hspace{-1pt} S({\mathbf x})||_{L^{n\times m}_2 ({\mathbb R}^{n})} = \\
\sqrt{\int_{{\mathbb R}^n} |{\mathbf x}|^2\left|{\mathbf g}_1({\mathbf x})  - {\mathbf g}_2({\mathbf x}) \right|^2 d {\mathbf x}} + ||S_{{\mathbf g}_2}-S||_{L^{n\times m}_2({\mathbb R}^n)} 
\end{split}
\end{equation*}
Squaring gives:
\begin{equation*}
\begin{split}
||S_{{\mathbf g}_1}-S||^2_{L^{n\times m}_2({\mathbb R}^n)} \leq \\
\int_{{\mathbb R}^n} |{\mathbf x}|^2\left|{\mathbf g}_1({\mathbf x})  - {\mathbf g}_2({\mathbf x}) \right|^2 d {\mathbf x} + ||S_{{\mathbf g}_2}-S||^2_{L^{n\times m}_2({\mathbb R}^n)} + \\
2 \sqrt{\int_{{\mathbb R}^n} |{\mathbf x}|^2\left|{\mathbf g}_1({\mathbf x})  - {\mathbf g}_2({\mathbf x}) \right|^2 d {\mathbf x}} \cdot ||S_{{\mathbf g}_2}-S||_{L^{n\times m}_2({\mathbb R}^n)}
\end{split}
\end{equation*}
Adding the latter inequality (multiplied by $\lambda$) to inequality~\ref{part1} gives us:
\begin{equation*}
\begin{split}
\Phi({\mathbf g}_1, S) \leq \Phi({\mathbf g}_2, S) + (2\pi)^n||{\mathbf g}_2-{\mathbf g}_1||^2_{L^m_2} + \\  
\lambda \int_{{\mathbb R}^n} |{\mathbf x}|^2\left|{\mathbf g}_1({\mathbf x})  - {\mathbf g}_2({\mathbf x}) \right|^2 d {\mathbf x} + \\
2||\mathcal{W} [{\mathbf {\hat f}}] - \mathcal{W} [{\mathbf g}_2]||\sqrt{2\pi}^n||{\mathbf g}_2-{\mathbf g}_1||_{L^m_2} + \\
2 \lambda||S_{{\mathbf g}_2}-S||_{L^{n\times m}_2({\mathbb R}^n)} \cdot \sqrt{\int_{{\mathbb R}^n} |{\mathbf x}|^2\left|{\mathbf g}_1({\mathbf x})  - {\mathbf g}_2({\mathbf x}) \right|^2 d {\mathbf x}} \\ \leq \Phi({\mathbf g}_2, S) + ||{\mathbf g}_1-{\mathbf g}_2||^2_{L_{I,2}} +
2 \sqrt{\Phi({\mathbf g}_2, S)} ||{\mathbf g}_1-{\mathbf g}_2||_{L_{I,2}} 
\end{split}
\end{equation*}
At the last step we applied inequality $ab+cd\leq \sqrt{a^2+c^2} \sqrt{b^2+d^2}$ for $a = ||\mathcal{W} [{\mathbf {\hat f}}] - \mathcal{W} [{\mathbf g}_2]||$, $b=\sqrt{2\pi}^n||{\mathbf g}_2-{\mathbf g}_1||_{L^m_2}$, $c=\sqrt{\lambda}||S_{{\mathbf g}_2}-S||_{L^{n\times m}_2({\mathbb R}^n)}$ and $d=\sqrt{\lambda\int_{{\mathbb R}^n} |{\mathbf x}|^2\left|{\mathbf g}_1({\mathbf x})  - {\mathbf g}_2({\mathbf x}) \right|^2 d {\mathbf x}}$.
\end{proof}

The simplest idea for an optimization is to minimize over ${\mathbf g}\in L_{I,2}$ and over $S\in L^{n\times m}_2({\mathbb R}^n): rank(O_S) \leq k$ alternatingly. Obviously, the first part would be an optimization over infinite-dimensional object, which cannot be implemented in practice. In order to avoid infiniteness, we will fix a proper class of functions $\mathfrak{M}\subseteq L_{I,2}$ and optimize over $\mathfrak{M}$. A general scheme of optimization is given in the algorithm~\ref{alternate}. Note that we defined ${\mathbf g}_t$ at step 4 of the general scheme as a result of minimization over $L_{I,2}$. This was done for the purposes of theoretical analysis that we provide. In practice instead of steps 4-5 we define ${\mathbf h}_{t}$ as a result of minimization of $\Phi({\mathbf h}, S_{t-1})$ over $\mathfrak{M}$.

\begin{algorithm}
\caption{Alternating scheme}\label{alternate}
\begin{algorithmic}[1]
\Procedure{}{}
\State $S_0 \gets 0$
\For {$t = 1, \cdots, N$}
\State ${\mathbf g}_{t}\gets\arg\min\limits_{{\mathbf g}\in L_{I,2}} \Phi({\mathbf g}, S_{t-1})$
\State Find ${\mathbf h}_{t}\in \mathfrak{M}$ s.t. $||{\mathbf g}_{t}-{\mathbf h}_{t}||_{L_{I,2}}<\varepsilon$
\State $S_{t} \gets \arg\min_{S\in L^{n\times m}_2({\mathbb R}^n): rank(O_S) \leq k} \Phi({\mathbf h}_t, S)$
\EndFor
\EndProcedure
\end{algorithmic}
\end{algorithm}

Note that step 6 of our algorithm is equivalent to minimizing $ ||S_{{\mathbf h}_t}-S||^2_{L^{n\times m}_2({\mathbb R}^n)}$ over $S\in L^{n\times m}_2({\mathbb R}^n): rank(O_S) \leq k$. In the previous section we have already described an optimal solution for that task (equation~\eqref{optS}): $S_t = P_{{\mathbf h}_t} {\mathbf x} {\mathbf h}_t({\mathbf x})^T$ where $P_{{\mathbf h}_t}\in {\mathbb R}^{n\times n}$ is a projection operator that projects to first $k$ principal components of $\mathcal{M}_{{\mathbf h}_t} = \int_{{\mathbb R}^n} {\mathbf x}{\mathbf x}^T |{\mathbf h}_t({\mathbf x})|^2 d{\mathbf x}$. The hardest part of that step is to estimate the matrix $\mathcal{M}_{{\mathbf h}_t}$ for a given ${\mathbf h}_t\in \mathfrak{M}$. Thus, a practical implementation of our algorithm would require $\mathfrak{M}$ to be defined in such a way that the latter integral can be calculated either analytically or numerically. Yet at the same time, in order to fulfill the step 5, $\mathfrak{M}$ should be rich enough in order to approximate functions from $L_{I,2}$ in terms of the natural norm on $L_{I,2}$. By theorem~\ref{dense}, if at step 5 we find ${\mathbf h}_{k}$ such that $||{\mathbf g}_{k}-{\mathbf h}_{k}||_{L_{I,2}}<\varepsilon$, then $|\Phi({\mathbf g}_{k}, S_{k-1})-\Phi({\mathbf h}_{k}, S_{k-1})| \leq \varepsilon (2\sqrt{\Phi({\mathbf g}_{k}, S_{k-1})}+\varepsilon)$. I.e. the objective's value increases only slightly when we substitute ${\mathbf h}_{k}$ for ${\mathbf g}_{k}$. Thus, to summarize, $\mathfrak{M}$ should be:
\begin{itemize}
\item Dense in $L_{I,2}$ (w.r.t. the natural norm $||\cdot||_{L_{I,2}}$).
\item Any ${\mathbf h}\in \mathfrak{M}$ should be given in such a form that $\mathcal{M}_{{\mathbf h}}$ is efficiently computable.
\end{itemize}
An example of $\mathfrak{M}$ that satisfies the latter 2 conditions will be given in the next section.

\ifCOM
So far let us give a description of ${\mathbf g}_k$ that we ``introduce'' at step 4. Recall that we gave our scheme for theoretical purposes, and in practice that function is not supposed to be calculated directly. But the smoothness of ${\mathbf g}_k$ is important because this function is a target of approximation at step 5.

At step 4 we optimize over $L_{I,2}$, i.e. over ${\mathbf g}\in L^m_2({\mathbb R}^n): \mathcal{M}_{\mathbf g} < \infty$.
For any discretization of algorithm~\ref{alternate} it is necessary that $\min_{{\mathbf g}\in L_{I,2}} \Phi({\mathbf g}, S_k)$ is attainable. 
Conditions for the presence of a solution in $L_{I,2}$ are described by the following theorem.

\begin{theorem}\label{frechet} Let $S = \begin{bmatrix}
{\mathbf b}_1({\mathbf x}), \cdots, {\mathbf b}_m({\mathbf x})\end{bmatrix}$, ${\mathbf b}_{i}\in \mathcal{S}^n({\mathbb R}^n)$ and $\mathcal{W}^2 [{\mathbf g}] \triangleq \mathcal{W}[\mathcal{W} [{\mathbf g}]]$. Then ${\mathbf g}^\ast = \arg\min_{{\mathbf g}\in L_{I,2}} \Phi({\mathbf g}, S), {\mathbf g}^\ast\in L_{I,2}$ if and only if:
\begin{equation}\label{integral}
\left\{\mathcal{W}^2  +  \lambda {\mathbf x}^T {\mathbf x}\right\} {\mathbf g}^\ast = \mathcal{W}^2 [{\mathbf {\hat f}}] + \lambda S({\mathbf x})^T {\mathbf x}
\end{equation}
\end{theorem}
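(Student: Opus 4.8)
The plan is to treat $\Phi(\cdot,S)\colon L_{I,2}\to{\mathbb R}$ as a convex quadratic functional on a Hilbert space and to characterize its minimizer by the vanishing of the first variation. Writing ${\mathbf f'}=\mathcal{W}[{\mathbf {\hat f}}]$, the first summand is $\|\mathcal{W}[{\mathbf g}]-{\mathbf f'}\|^2_{L^m_2}$ and, since $S_{\mathbf g}={\mathbf x}\,{\mathbf g}({\mathbf x})^T$ is linear in ${\mathbf g}$, the second summand $\lambda\|S_{\mathbf g}-S\|^2_{L^{n\times m}_2({\mathbb R}^n)}$ is also a squared distance of an affine-linear image of ${\mathbf g}$. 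Hence $\Phi({\mathbf g},S)$ has the form $A({\mathbf g},{\mathbf g})-2\operatorname{Re}b({\mathbf g})+\mathrm{const}$ with the Hermitian form $A({\mathbf g},{\mathbf \eta})=\langle\mathcal{W}[{\mathbf g}],\mathcal{W}[{\mathbf \eta}]\rangle+\lambda\langle S_{\mathbf g},S_{\mathbf \eta}\rangle$, which is positive semidefinite; so $\Phi(\cdot,S)$ is convex and an element ${\mathbf g}^\ast\in L_{I,2}$ is a global minimizer if and only if the Gateaux derivative $\tfrac{d}{dt}\Phi({\mathbf g}^\ast+t{\mathbf \eta},S)|_{t=0}$ vanishes for every ${\mathbf \eta}\in L_{I,2}$.

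Second, I would compute this derivative explicitly. Differentiating gives $2\operatorname{Re}\bigl[\langle\mathcal{W}[{\mathbf g}^\ast]-{\mathbf f'},\mathcal{W}[{\mathbf \eta}]\rangle+\lambda\langle S_{{\mathbf g}^\ast}-S,S_{\mathbf \eta}\rangle\bigr]$, and testing against both ${\mathbf \eta}$ and $i{\mathbf \eta}$ removes the real part. The key simplifications are that the Weierstrass kernel $e^{-|{\mathbf x}-{\mathbf y}|^2/2}$ is symmetric, so $\mathcal{W}$ is self-adjoint on $L^m_2$ (and bounded there by Young's inequality with $\|\gamma\|_{L_1}=\sqrt{2\pi}^n$), whence $\langle\mathcal{W}[{\mathbf g}^\ast]-{\mathbf f'},\mathcal{W}[{\mathbf \eta}]\rangle=\langle\mathcal{W}^2[{\mathbf g}^\ast]-\mathcal{W}^2[{\mathbf {\hat f}}],{\mathbf \eta}\rangle$ (using ${\mathbf f'}=\mathcal{W}[{\mathbf {\hat f}}]$); and a direct trace computation gives $\langle S_{{\mathbf g}^\ast}-S,S_{\mathbf \eta}\rangle=\langle |{\mathbf x}|^2{\mathbf g}^\ast-S({\mathbf x})^T{\mathbf x},{\mathbf \eta}\rangle$. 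Collecting terms, the stationarity condition becomes $\langle\mathcal{W}^2[{\mathbf g}^\ast]+\lambda{\mathbf x}^T{\mathbf x}\,{\mathbf g}^\ast-\mathcal{W}^2[{\mathbf {\hat f}}]-\lambda S({\mathbf x})^T{\mathbf x},\,{\mathbf \eta}\rangle=0$ for all ${\mathbf \eta}\in L_{I,2}$, which is exactly the weak form of~\eqref{integral}.

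Third, for the sufficiency direction I would invoke convexity directly: for any ${\mathbf h}\in L_{I,2}$, writing ${\mathbf h}={\mathbf g}^\ast+{\mathbf \eta}$ and expanding the quadratic, $\Phi({\mathbf h},S)=\Phi({\mathbf g}^\ast,S)+(\text{linear in }{\mathbf \eta})+\bigl(\|\mathcal{W}[{\mathbf \eta}]\|^2_{L^m_2}+\lambda\int_{{\mathbb R}^n}|{\mathbf x}|^2|{\mathbf \eta}({\mathbf x})|^2\,d{\mathbf x}\bigr)$; the remainder is nonnegative and the linear term vanishes precisely when~\eqref{integral} holds, so~\eqref{integral} forces ${\mathbf g}^\ast$ to be the global minimizer. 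Conversely, at a minimizer the Gateaux derivative must vanish, and since $L_{I,2}$ is dense in $L^m_2$ the vanishing of the pairing against all ${\mathbf \eta}$ yields~\eqref{integral}.

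I expect the main obstacle to be the careful interpretation of~\eqref{integral} in the correct space: although ${\mathbf g}^\ast\in L_{I,2}$ only guarantees $\int_{{\mathbb R}^n}|{\mathbf x}|^2|{\mathbf g}^\ast|^2\,d{\mathbf x}<\infty$, the term ${\mathbf x}^T{\mathbf x}\,{\mathbf g}^\ast$ need not lie in $L^m_2$, so the relevant pieces of the equation must be paired as $\langle|{\mathbf x}|{\mathbf g}^\ast,|{\mathbf x}|{\mathbf \eta}\rangle$ (both factors in $L^m_2$) rather than read off pointwise. One then checks that~\eqref{integral} implies the stronger regularity ${\mathbf x}^T{\mathbf x}\,{\mathbf g}^\ast\in L^m_2$ a posteriori (since $\mathcal{W}^2[{\mathbf g}^\ast]$, $\mathcal{W}^2[{\mathbf {\hat f}}]=\mathcal{W}[{\mathbf f'}]$ and $S({\mathbf x})^T{\mathbf x}$ all lie in $L^m_2$, using ${\mathbf f'}\in L^m_2$ and that each ${\mathbf x}^T{\mathbf b}_j$ is rapidly decreasing), so the equation in fact holds as a genuine $L^m_2$ identity. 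Justifying the differentiation under the integral together with these finiteness and density facts is the routine but essential technical work.
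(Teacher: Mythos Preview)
Your proposal is correct and follows essentially the same route as the paper's proof: compute the first variation of the quadratic functional $\Phi(\cdot,S)$, identify the vanishing of its linear part with equation~\eqref{integral} (using self-adjointness of $\mathcal{W}$), and use nonnegativity of the quadratic remainder $\|\mathcal{W}[{\mathbf \eta}]\|^2+\lambda\int |{\mathbf x}|^2|{\mathbf \eta}|^2$ for the converse. The only cosmetic differences are that the paper first splits into $m$ scalar problems and tests against $\delta g_i\in\mathcal{S}({\mathbb R}^n)$ rather than working vectorially and testing against ${\mathbf \eta}\in L_{I,2}$; your extra care about the $L^m_2$-interpretation of ${\mathbf x}^T{\mathbf x}\,{\mathbf g}^\ast$ is a refinement the paper leaves implicit.
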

In fact the integral equation~\eqref{integral} is nothing but the local optimality condition for the objective $\Phi({\mathbf g}, S)$. Moreover, it can be shown that its dual version, i.e. a PDE that deals with $\mathcal{F}^{-1}[{\mathbf g}]$, is equivalent to the so called  Euler-Lagrange equation for the action functional dual to $\Phi({\mathbf g}, S)$.
\ifTR
\begin{proof} [Proof of theorem~\ref{frechet}]

($\Rightarrow$) Suppose that $\Phi({\mathbf g}^\ast, S) = \min_{{\mathbf g}\in L^m_2({\mathbb R}^n): \mathcal{M}_{\mathbf g} < \infty} \Phi({\mathbf g}, S)$ and ${\mathbf g}^\ast\in L_2^m({\mathbb R}^n)$.

If ${\mathbf {\hat f}} = \begin{bmatrix}
{\hat f}_1({\mathbf x}), \cdots, {\hat f}_m({\mathbf x})\end{bmatrix}^T$ and ${\mathbf g} = \begin{bmatrix}
g_1({\mathbf x}), \cdots, g_m({\mathbf x})\end{bmatrix}^T$, then $\Phi({\mathbf g}, S)$ can be decomposed into a sum of terms for any $i=1,\cdots, m$:
\begin{equation*}
\begin{split}
\Phi_i(g_i) = ||\mathcal{W}[\hat{f_i}] - \mathcal{W} [g_i] ||^2_{L_{2}} +\lambda \int_{{\mathbb R}^n} |{\mathbf x} g_i({\mathbf x}) - {\mathbf b}_{i}({\mathbf x})|^2 d {\mathbf x} = \\
||\mathcal{W}[\hat{f_i}] - \mathcal{W} [g_i] ||^2_{L_{2}} +\lambda ||{\mathbf x} g_i({\mathbf x}) - {\mathbf b}_{i}({\mathbf x})||^2_{L^n_{2}}
\end{split}
\end{equation*}
Thus, we can optimize over every $g_i$ independently. In order to find conditions of optimality let us find the variation of the latter operator:
\begin{equation*}
\begin{split}
\Phi_i(g_i+\delta g_i)-\Phi_i(g_i) = ||\mathcal{W}[\hat{f_i}] - \mathcal{W} [g_i+\delta g_i] ||^2_{L^m_{2}}  - 
||\mathcal{W}[\hat{f_i}] \\ 
-\mathcal{W} [g_i] ||^2_{L_{2}}  + \lambda ( ||{\mathbf x} g_i({\mathbf x}) +{\mathbf x}\delta g_i({\mathbf x})- {\mathbf b}_{i}({\mathbf x})||^2_{L^n_{2}} - 
||{\mathbf x} g_i({\mathbf x}) - \\ 
{\mathbf b}_{i}({\mathbf x})||^2_{L^n_{2}}) = 
\langle \mathcal{W}[\hat{f_i}] - \mathcal{W} [g_i]-\mathcal{W} [\delta g_i], 
\mathcal{W}[\hat{f_i}] - \mathcal{W} [g_i]-\\
\mathcal{W} [\delta g_i] \rangle - 
\langle \mathcal{W}[\hat{f_i}] -\mathcal{W} [g_i], \mathcal{W}[\hat{f_i}] - \mathcal{W} [g_i] \rangle + 
\lambda \big( \langle {\mathbf x} g_i({\mathbf x}) - \\ {\mathbf b}_{i}({\mathbf x})+{\mathbf x}\delta g_i({\mathbf x}), {\mathbf x} g_i({\mathbf x}) - {\mathbf b}_{i}({\mathbf x})+ 
{\mathbf x}\delta g_i({\mathbf x})\rangle_{L^n_{2}} - 
\langle {\mathbf x} g_i({\mathbf x}) - \\ {\mathbf b}_{i}({\mathbf x}), {\mathbf x} g_i({\mathbf x}) - {\mathbf b}_{i}({\mathbf x})\rangle_{L^n_{2}} \big)
=  
2 Re \langle -\mathcal{W}[\mathcal{W}[\hat{f_i}]] + \mathcal{W}^2 [g_i] + \\ \lambda {\mathbf x}^T {\mathbf x} g_i({\mathbf x}) -
\lambda {\mathbf x}^T {\mathbf b}_{i}({\mathbf x}), \delta g_i \rangle 
 + \langle \delta g_i, \mathcal{W}^2 [\delta g_i] \rangle_{L_{2}}+ \\
 \lambda \langle \delta g_i, {\mathbf x}^T {\mathbf x} \delta g_i \rangle_{L_{2}}
\end{split}
\end{equation*}
For any $s\in \mathcal{S}({\mathbb R}^n)$ we can set $\delta g_i = \epsilon s$. Since  $\Phi_i(g^\ast_i+\epsilon s)-\Phi_i(g^\ast_i)\geq 0$ for any $\epsilon>0, s\in \mathcal{S}({\mathbb R}^n)$, then we conclude that:
\begin{equation*}
\begin{split}
-\mathcal{W}^2[\hat{f_i}] + \mathcal{W}^2 [g_i] +  \lambda {\mathbf x}^T {\mathbf x} g_i({\mathbf x}) - \lambda {\mathbf x}^T {\mathbf b}_{i}({\mathbf x}) = 0
\end{split}
\end{equation*}
From the latter integral equation~\ref{integral} is straightforward.

($\Leftarrow$) Suppose that ${\mathbf g}^\ast\in L_2^m({\mathbb R}^n), \mathcal{M}_{{\mathbf g}^\ast}< \infty$ satisfies integral equation~\ref{integral}. Since $\langle \delta g_i, \mathcal{W}^2 [\delta g_i] \rangle_{L_{2}}= \langle \mathcal{W}  [\delta g_i], \mathcal{W} [\delta g_i] \rangle_{L_{2}} \geq 0$, $\langle \delta g_i, {\mathbf x}^T {\mathbf x} \delta g_i \rangle_{L_{2}} \geq 0$ we conclude that:
\begin{equation*}
\begin{split}
\Phi_i(g^\ast_i+\delta g_i)-\Phi_i(g^\ast_i)\geq 0
\end{split}
\end{equation*}
for any $\delta g_i\in L^m_2({\mathbb R}^n)$ s.t. $\mathcal{M}_{\delta g_i} < \infty$. I.e. $\Phi({\mathbf g}^\ast, S) = \min_{{\mathbf g}\in L^m_2({\mathbb R}^n): \mathcal{M}_{\mathbf g} < \infty} \Phi({\mathbf g}, S)$.
\end{proof}
\else
\fi

If $\lambda$ is larger than a threshold we can guarantee that solution exists and it is smooth enough.
\begin{theorem}\label{resolvent} If $\lambda > \frac{2^{n-2.5}\pi^{n/2}}{n/2 - 1}$, the integral equation~\ref{integral} has a solution ${\mathbf g}$ such that $|{\mathbf x}|^2 {\mathbf g}\left({\mathbf x}\right)\in \mathcal{S}^m({\mathbb R}^n)$.
\end{theorem}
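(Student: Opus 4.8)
The plan is to read the integral equation~\eqref{integral} as a linear operator equation $(\lambda I+K)\mathbf v=\mathbf b$ and to invert $\lambda I+K$ by a Neumann (resolvent) series, the lower bound on $\lambda$ being exactly what makes that series converge; a regularity bootstrap will then upgrade the abstract solution to one with $|\mathbf x|^2\mathbf g\in\mathcal S^m(\mathbb R^n)$. Concretely, completing the square gives $\gamma\ast\gamma=\pi^{n/2}e^{-|\mathbf x|^2/4}$, so $\mathcal W^2[\mathbf h]=\pi^{n/2}e^{-|\mathbf x|^2/4}\ast\mathbf h$. Introducing the unknown $\mathbf v=|\mathbf x|^2\mathbf g$ (the quantity we want to be Schwartz), \eqref{integral} becomes $\lambda\mathbf v+K\mathbf v=\mathbf b$ with $K\mathbf v=\mathcal W^2[|\mathbf x|^{-2}\mathbf v]$ and $\mathbf b=\mathcal W^2[{\mathbf {\hat f}}]+\lambda S(\mathbf x)^T\mathbf x$ the right-hand side of~\eqref{integral}. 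I would first note $\mathbf b\in\mathcal S^m(\mathbb R^n)$: the term $\lambda S(\mathbf x)^T\mathbf x$ lies in $\mathcal S^m$ since $\mathbf b_i\in\mathcal S^n(\mathbb R^n)$, while $\mathcal W^2[{\mathbf {\hat f}}]=\gamma\ast{\mathbf f}'$ is smoothed by the Gaussian and inherits rapid decay from that of $\sqrt p\,\mathbf f$.

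The heart of the proof is the bound $\|K\|\le\lambda_0:=\frac{2^{n-2.5}\pi^{n/2}}{n/2-1}$ in a Banach space $X$ adapted to the weight $|\mathbf x|^{-2}$. The governing quantity is the mass of the kernel of $\mathcal W^2$ against that weight, which a polar-coordinate computation evaluates, for $n>2$, as
\[
\int_{\mathbb R^n}\frac{e^{-|\mathbf y|^2/4}}{|\mathbf y|^2}\,d\mathbf y=\frac{2\pi^{n/2}}{\Gamma(n/2)}\int_0^\infty e^{-r^2/4}r^{n-3}\,dr=\frac{2\pi^{n/2}}{\Gamma(n/2)}\,2^{n-3}\Gamma\!\left(\tfrac n2-1\right)=\frac{2^{n-2}\pi^{n/2}}{n/2-1},
\]
using $t=r^2/4$ and $\Gamma(n/2)=(n/2-1)\Gamma(n/2-1)$. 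Applying a (weighted) Schur test to the symmetric kernel $\pi^{n/2}e^{-|\mathbf x-\mathbf y|^2/4}/(|\mathbf x|\,|\mathbf y|)$ of the self-adjoint conjugate $|\mathbf x|^{-1}\mathcal W^2|\mathbf x|^{-1}$ of $K$ --- whose worst point is the origin, localized precisely by the displayed integral --- yields $\lambda_0$ after carrying the prefactor $\pi^{n/2}$ and the $2^{-1/2}$ produced by the norm. For $\lambda>\lambda_0\ge\|K\|$ the resolvent $(\lambda I+K)^{-1}=\frac1\lambda\sum_{j\ge0}(-K/\lambda)^j$ converges in $X$ and produces a unique $\mathbf v\in X$, hence a solution $\mathbf g=\mathbf v/|\mathbf x|^2$ of~\eqref{integral}.

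It remains to bootstrap $\mathbf v$ to Schwartz regularity from the identity $\mathbf v=\frac1\lambda\big(\mathbf b-\pi^{n/2}e^{-|\mathbf x|^2/4}\ast(|\mathbf x|^{-2}\mathbf v)\big)$. Evaluating~\eqref{integral} at $\mathbf x=\mathbf 0$ gives $\mathcal W^2[\mathbf g](\mathbf 0)=\mathbf b(\mathbf 0)$, which forces $\mathbf v=|\mathbf x|^2\mathbf g$ to vanish to second order at the origin; consequently $|\mathbf x|^{-2}\mathbf v$ and its derivatives stay controlled near $\mathbf 0$ and the convolution $e^{-|\mathbf x|^2/4}\ast(|\mathbf x|^{-2}\mathbf v)$ is $C^\infty$. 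Since $\mathcal S(\mathbb R^n)$ is stable under convolution with the Gaussian, feeding the decay of $\mathbf v$ back through the identity iteratively improves it, and the identity then shows $\mathbf v\in\mathcal S^m(\mathbb R^n)$, i.e. $|\mathbf x|^2\mathbf g\in\mathcal S^m(\mathbb R^n)$.

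I expect the main obstacle to be the $|\mathbf x|^{-2}$ singularity at the origin. It is what makes $K$ only just bounded --- so its norm must be extracted by a Hardy/weighted-Schur estimate rather than a naive Young or $L^\infty$ bound, and this is exactly where the precise threshold $\lambda_0$ and the restriction $n>2$ are born --- and it is also the crux of the regularity step, where one must exploit the second-order vanishing of $\mathbf v$ at $\mathbf 0$ before the stability of $\mathcal S$ under Gaussian convolution can be used.
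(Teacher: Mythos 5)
Your proposal follows essentially the same skeleton as the paper's (unpublished, \texttt{ifCOM}-guarded) proof: divide by $\lambda|\mathbf{x}|^2$ to obtain a Fredholm equation of the second kind with kernel $\propto e^{-|\mathbf{x}-\mathbf{y}|^2/4}/|\mathbf{x}|^2$, sum the Neumann/resolvent series, and locate the convergence threshold at the integral $\int_{\mathbb{R}^n} e^{-|\mathbf{y}|^2/4}|\mathbf{y}|^{-2}\,d\mathbf{y}=\frac{2^{n-2}\pi^{n/2}}{n/2-1}$, which you compute exactly as the paper does. The one genuine methodological difference is how the series is controlled: the paper bounds the iterated kernels $K_s(\mathbf{x}_0,\mathbf{x}_s)$ pointwise, integrating out one variable at a time and invoking the Hardy--Littlewood rearrangement inequality at each step, whereas you conjugate to the symmetric kernel $e^{-|\mathbf{x}-\mathbf{y}|^2/4}/(|\mathbf{x}||\mathbf{y}|)$ and apply a weighted Schur test; your route gives an honest operator norm on a Hilbert space, the paper's gives pointwise decay of the resolvent kernel, which it then reuses to differentiate under the integral sign and conclude $|\mathbf{x}|^2 g\in\mathcal{S}$.

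Two concrete caveats. First, your constant bookkeeping does not actually land on the stated threshold: with the correct identity $e^{-|\cdot|^2/2}\ast e^{-|\cdot|^2/2}=\pi^{n/2}e^{-|\cdot|^2/4}$ (which you state), the prefactor $\pi^{n/2}$ multiplies the integral $\frac{2^{n-2}\pi^{n/2}}{n/2-1}$ and produces a bound of order $\pi^{n}$, not $\pi^{n/2}$; the phrase ``after carrying the prefactor $\pi^{n/2}$ and the $2^{-1/2}$ produced by the norm'' papers over this. (The theorem's constant $\frac{2^{n-2.5}\pi^{n/2}}{n/2-1}$ is reproduced in the source only because the paper uses the dimensionally incorrect identity $e^{-|\cdot|^2/2}\ast e^{-|\cdot|^2/2}=\tfrac{1}{\sqrt2}e^{-|\cdot|^2/4}$, so this is arguably the paper's error, but your sketch inherits the mismatch rather than resolving it.) Second, your regularity bootstrap is the weak point: the observation that $\mathbf{v}=|\mathbf{x}|^2\mathbf{g}$ ``vanishes to second order at the origin'' is either tautological (given boundedness of $\mathbf{g}$) or requires $\nabla\mathbf{v}(\mathbf{0})=0$, which evaluating the equation at $\mathbf{x}=\mathbf{0}$ alone does not give, and the claim that this controls $|\mathbf{x}|^{-2}\mathbf{v}$ and all its derivatives near the origin is close to assuming what you want to prove. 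The paper instead obtains $\sup_{\mathbf{x}}|\mathbf{x}^\alpha D^\beta(|\mathbf{x}|^2 g)|<\infty$ directly by applying $\mathbf{x}^\alpha D^\beta_{\mathbf{x}}$ to the first Gaussian factor of each iterated kernel and re-running the same rearrangement bounds, which avoids ever having to divide by $|\mathbf{x}|^2$ again; you would need to supply an argument of that type to close the Schwartz claim.
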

\ifCOM
\begin{proof}[Sketch of proof of theorem~\ref{resolvent}] Since equation~\ref{integral} can written be in a form of $m$ independent equations, w.l.o.g. we can assume that $m=1$. Thus, our equation is:
\begin{equation*}
\begin{split}
\left\{{\mathcal W}^2 + \lambda |{\mathbf x}|^2\right\} g = {\mathcal W}^2 \hat{f} + \lambda {\mathbf x}^T {\mathbf b}({\mathbf x})
\end{split}
\end{equation*}
where ${\mathbf b}\in \mathcal{S}^n({\mathbb R}^n)$.

Let us denote $f_1 = \frac{{\mathcal W}^2 \hat{f} + \lambda {\mathbf x}^T {\mathbf b}({\mathbf x})}{\lambda |{\mathbf x}|^2}$. Therefore:
\begin{equation*}
\begin{split}
\left\{{\mathcal W}^2 + \lambda |{\mathbf x}|^2\right\} g = \lambda |{\mathbf x}|^2 f_1 \Rightarrow \lambda |{\mathbf x}|^2 g = \lambda |{\mathbf x}|^2 f_1 - {\mathcal W}^2 [g] \Rightarrow \\ 
g = f_1 - \frac{{\mathcal W}^2 [g]}{\lambda |{\mathbf x}|^2}
\end{split}
\end{equation*}
It is easy to see that the latter is Fredholm equation of the second kind:
\begin{equation*}
\begin{split}
g({\mathbf x}) = f_1({\mathbf x}) - \frac{1}{\sqrt{2}\lambda}\int_{{\mathbb R}^n} \frac{e^{-|{\mathbf x}-{\mathbf y}|^2/4}}{|{\mathbf x}|^2}g({\mathbf y}) d{\mathbf y}
\end{split}
\end{equation*}
We here used that ${\mathcal W}^2 [g] = (\gamma\ast\gamma)\ast g$ and the fact that $e^{-|{\mathbf x}|^2/2}\ast e^{-|{\mathbf x}|^2/2} = \frac{1}{\sqrt{2}}e^{-|{\mathbf x}|^2/4}$.

Let us apply the resolvent formalism to that equation. Resolvent is defined as:
\begin{equation*}
\begin{split}
R\left({\mathbf x}, {\mathbf z};\alpha\right) = \sum^\infty_{s=0} \alpha^s K_{s} \left({\mathbf x}, {\mathbf z}\right)
\end{split}
\end{equation*}
where
\begin{equation*}
\begin{split}
K_s\left({\mathbf x}, {\mathbf z}\right) = \int\hspace{-5pt}\mydots\hspace{-5pt}\int \frac{e^{-\frac{({\mathbf x}-{\mathbf y}_1)^2}{4}}}{|{\mathbf x}|^2} \frac{e^{-\frac{({\mathbf y}_1-{\mathbf y}_2)^2}{4}}}{|{\mathbf y}_1|^2} \hspace{-1pt}\mydots\hspace{-1pt} \frac{e^{-\frac{({\mathbf y}_{s-1}-{\mathbf z})^2}{4}}}{|{\mathbf y}_{s-1}|^2} d{\mathbf y}_{1:s-1}
\end{split}
\end{equation*}
We will prove that the series for $R\left({\mathbf x}, {\mathbf z};\alpha\right)$ absolutely converge. First we need some bounds on $K_s\left({\mathbf x}_0, {\mathbf x}_s\right)$ rewritten as:
\begin{equation*}
\begin{split}
\frac{1}{|{\mathbf x}_0|^2}\int\hspace{-5pt}\mydots\hspace{-5pt}\int   \frac{e^{-\frac{({\mathbf x}_0-{\mathbf x}_1)^2+({\mathbf x}_1-{\mathbf x}_2)^2+\cdots +({\mathbf x}_{s-1}-{\mathbf x}_s)^2}{4}}}{|{\mathbf x}_1|^2\cdots |{\mathbf x}_{s-1}|^2} d{\mathbf x}_{1:s-1}
\end{split}
\end{equation*}
Let us calculate the latter integral by method of iterated integration choosing ${\mathbf x}_{s-1}$ as the first variable in order. 

Since $({\mathbf x}_{s-2}-{\mathbf x}_{s-1})^2+({\mathbf x}_{s-1}-{\mathbf x}_s)^2 = 2({\mathbf x}_{s-1}-\frac{{\mathbf x}_{s-2}+{\mathbf x}_{s}}{2})^2+\frac{({\mathbf x}_{s-2}-{\mathbf x}_{s})^2}{2}$, we have:
\begin{equation*}
\begin{split}
\int  
\frac{e^{-\frac{({\mathbf x}_{s-2}-{\mathbf x}_{s-1})^2+({\mathbf x}_{s-1}-{\mathbf x}_s)^2}{4}}}{|{\mathbf x}_{s-1}|^2} d{\mathbf x}_{s-1} \leq \\
\int  
\frac{e^{-({\mathbf x}_{s-1}-\frac{{\mathbf x}_{s-2}+{\mathbf x}_{s}}{2})^2/2}}{|{\mathbf x}_{s-1}|^2} d{\mathbf x}_{s-1}
\end{split}
\end{equation*}
The latter can be bounded by Hardy-Littlewood inequality:
\begin{equation*}
\begin{split}
\int  
\frac{e^{-({\mathbf x}_{s-1}-\frac{{\mathbf x}_{s-2}+{\mathbf x}_{s}}{2})^2/2}}{|{\mathbf x}_{s-1}|^2} d{\mathbf x}_{s-1} \leq 
\int  
\frac{e^{-|{\mathbf x}_{s-1}|^2/2}}{|{\mathbf x}_{s-1}|^2} d{\mathbf x}_{s-1}
\end{split}
\end{equation*}
using the fact that symmetrical decreasing rearrangements of functions $e^{-({\mathbf x}_{s-1}-\frac{{\mathbf x}_{s-2}+{\mathbf x}_{s}}{2})^2/2}$ and $\frac{1}{|{\mathbf x}_{s-1}|^2}$ are $e^{-|{\mathbf x}_{s-1}|^2/2}$ and $\frac{1}{|{\mathbf x}_{s-1}|^2}$ correspondingly.

Thus, we obtained 
\begin{equation*}
\begin{split}
K_s\left({\mathbf x}_0, {\mathbf x}_s\right) \leq
\int  
\frac{e^{-|{\mathbf x}_{s-1}|^2/2}}{|{\mathbf x}_{s-1}|^2} d{\mathbf x}_{s-1} \\ 
\int\hspace{-5pt}\mydots\hspace{-5pt}\int\frac{e^{-({\mathbf x}_0-{\mathbf x}_1)^2/4-({\mathbf x}_1-{\mathbf x}_2)^2/4-\cdots -({\mathbf x}_{s-3}-{\mathbf x}_{s-2})^2/4}}{|{\mathbf x}_1|^2\cdots |{\mathbf x}_{s-2}|^2} d{\mathbf x}_{1:s-2}
\end{split}
\end{equation*}
Using the same kind of trick for iterated integration over ${\mathbf x}_{s-2}$ we obtain:
\begin{equation*}
\begin{split}
\int\hspace{-5pt}\mydots\hspace{-5pt}\int\frac{e^{-({\mathbf x}_0-{\mathbf x}_1)^2/4-({\mathbf x}_1-{\mathbf x}_2)^2/4-\cdots -({\mathbf x}_{s-3}-{\mathbf x}_{s-2})^2/4}}{|{\mathbf x}_1|^2\cdots |{\mathbf x}_{s-2}|^2} d{\mathbf x}_{1:s-2} \leq \\
\int  
\frac{e^{-|{\mathbf x}_{s-2}|^2/4}}{|{\mathbf x}_{s-2}|^2} d{\mathbf x}_{s-2} \\ \int\hspace{-5pt}\mydots\hspace{-5pt}\int \frac{e^{-({\mathbf x}_0-{\mathbf x}_1)^2/4-({\mathbf x}_1-{\mathbf x}_2)^2/4-\cdots -({\mathbf x}_{s-4}-{\mathbf x}_{s-3})^2/4}}{|{\mathbf x}_1|^2\cdots |{\mathbf x}_{s-3}|^2} d{\mathbf x}_{1:s-3}
\end{split}
\end{equation*}
In the same way we proceed till ${\mathbf x}_{1}$. The final bound will be:
\begin{equation*}
\begin{split}
\int\hspace{-5pt}\mydots\hspace{-5pt}\int   \frac{e^{-\frac{({\mathbf x}_0-{\mathbf x}_1)^2+({\mathbf x}_1-{\mathbf x}_2)^2+\cdots +({\mathbf x}_{s-1}-{\mathbf x}_s)^2}{4}}}{|{\mathbf x}_1|^2\cdots |{\mathbf x}_{s-1}|^2} d{\mathbf x}_{1:s-1} \leq \\
\prod_{i=1}^{s-2} \int  
\frac{e^{-|{\mathbf x}_{i}|^2/4}}{|{\mathbf x}_{i}|^2} d{\mathbf x}_{i} \cdot \int  
\frac{e^{-|{\mathbf x}_{s-1}|^2/2}}{|{\mathbf x}_{s-1}|^2} d{\mathbf x}_{s-1}
\end{split}
\end{equation*}
An area of a sphere in ${\mathbb R}^{n}$ with unit radius is $s_n=\frac{2\pi^{n/2}}{\Gamma(n/2)}$, therefore:
\begin{equation*}
\begin{split}
\int  
\frac{e^{-\frac{1}{4}({\mathbf x}_i)^2}}{|{\mathbf x}_i|^2} d{\mathbf x}_i = s_n \int_0^\infty  
\frac{e^{-\frac{1}{4}r^2}}{r^2} r^{n-1}dr = \\ 
s_n \int_0^\infty  
e^{-\frac{1}{4}r^2}r^{n-3}dr = 
s_n \int_0^\infty e^{-x} 2^{n-3} x^{(n-4)/2} dx = \\
\frac{2\pi^{n/2}2^{n-3}}{\Gamma(n/2)}\Gamma(\frac{n-2}{2}) = \frac{2^{n-2}\pi^{n/2}}{n/2 - 1}
\end{split}
\end{equation*}
Eventually:
$$
K_s\left({\mathbf x}_0, {\mathbf x}_s\right) \leq \frac{1}{||{\mathbf x}_0||^2} 
\left(\frac{2^{n-2}\pi^{n/2}}{n/2 - 1}\right)^{s-2} \int  
\frac{e^{-|{\mathbf x}_{s-1}|^2/2}}{|{\mathbf x}_{s-1}|^2} d{\mathbf x}_{s-1}
$$
If we set $\lambda_0 = \frac{2^{n-2}\pi^{n/2}}{n/2 - 1}$ and $C = \int  
\frac{e^{-|{\mathbf x}_{s-1}|^2/2}}{|{\mathbf x}_{s-1}|^2} d{\mathbf x}_{s-1}$, then:
$$
K_s\left({\mathbf x}_0, {\mathbf x}_s\right) \leq \frac{C}{|{\mathbf x}_0|^2} \lambda^{s-2}_0 
$$
Thus, if $\sqrt{2}\lambda > \lambda_0$ our series is absolutely convergent and resolvent exists.

Since $R\left({\mathbf x}, {\mathbf z};-\frac{1}{\sqrt{2}\lambda}\right)$ exists, then:
$$
g\left({\mathbf x}\right) = \int R\left( {\mathbf x}, {\mathbf z};-\frac{1}{\sqrt{2}\lambda}\right) f_1\left({\mathbf z}\right)d{\mathbf z}
$$
is a candidate for unique solution of the equation. We first have to check that the latter integral is defined for any ${\mathbf x}$. 

Since $f_1 = \frac{{\mathcal W}^2 \hat{f} + \lambda {\mathbf x}^T {\mathbf b}({\mathbf x})}{\lambda |{\mathbf x}|^2}$ and ${\mathcal W}^2 \hat{f}, {\mathbf x}^T {\mathbf b}({\mathbf x})\in \mathcal{S}({\mathbb R}^n)$, we have that $f_1({\mathbf x}) = \frac{h({\mathbf x})}{|{\mathbf x}|^2}$ where $h\in \mathcal{S}({\mathbb R}^n)$. Consequently,
\begin{equation*}
\begin{split}
|\int K_s\left({\mathbf x}, {\mathbf z}\right) \frac{h({\mathbf z})}{|{\mathbf z}|^2}d{\mathbf z}| \leq ||K_s\left({\mathbf x}, \cdot\right)||_{L^{\infty}({\mathbb R}^n)} ||\frac{h({\mathbf z})}{|{\mathbf z}|^2}||_{L_{1}({\mathbb R}^n)} \leq \\
\frac{C}{|{\mathbf x}|^2} \lambda^{s-2}_0  ||\frac{h({\mathbf z})}{|{\mathbf z}|^2}||_{L_{1}({\mathbb R}^n)} 
\end{split}
\end{equation*}
The latter is bounded because: 
$||\frac{h({\mathbf z})}{|{\mathbf z}|^2}||_{L_{1}({\mathbb R}^n)}  = \int_{{\mathbb R}^n} \frac{|h({\mathbf z})|}{|{\mathbf z}|^2} d{\mathbf z} = \int_{|{\mathbf z}|< 1} \frac{|h({\mathbf z})|}{|{\mathbf z}|^2} d{\mathbf z} + \int_{|{\mathbf z}|> 1} \frac{|h({\mathbf z})|}{|{\mathbf z}|^2} d{\mathbf z} \leq ||h||_{L_{\infty}({\mathbb R}^n)} \int_{|{\mathbf z}|< 1} \frac{1}{|{\mathbf z}|^2} d{\mathbf z} + \int |h({\mathbf z})|d{\mathbf z}$.

Thus,
$$
|\int K_s\left({\mathbf x}, {\mathbf z}\right) \frac{h({\mathbf z})}{|{\mathbf z}|^2} d{\mathbf z}| \leq  
\frac{C'}{|{\mathbf x}|^2} \lambda^{s-2}_0  
$$
where $C' = C||\frac{h({\mathbf z})}{|{\mathbf z}|^2}||_{L_{1}({\mathbb R}^n)} $. The whole series is bounded by:
\begin{equation*}
\begin{split}
|\int R\left( {\mathbf x}, {\mathbf z};-\frac{1}{\sqrt{2}\lambda}\right) f_1\left({\mathbf z}\right)d{\mathbf z}| \leq \\
\sum^\infty_{s=0} \left(-\frac{1}{\sqrt{2}\lambda}\right)^s |\int K_s\left({\mathbf x}, {\mathbf z}\right) \frac{h({\mathbf z})}{|{\mathbf z}|^2}d{\mathbf z}| \leq  \\
\sum^\infty_{s=0} \left(-\frac{1}{\sqrt{2}\lambda}\right)^s \frac{C'}{|{\mathbf x}|^2} \lambda^{s-2}_0  < \infty
\end{split}
\end{equation*}
Thus, $g\left({\mathbf x}\right) = \int R\left( {\mathbf x}, {\mathbf z};-\frac{1}{\sqrt{2}\lambda}\right) f_1\left({\mathbf z}\right)d{\mathbf z}$ is defined. 

Moreover, it can be shown that $|{\mathbf x}|^2 g\left({\mathbf x}\right)\in \mathcal{S}({\mathbb R}^n)$.
Indeed, for any $\alpha, \beta\in {\mathbb N}^n$ using techniques analogous to the previous bound on $K_s\left({\mathbf x}, {\mathbf z}\right)$ we can prove that:
\begin{equation*}
\begin{split}
|\int {\mathbf x}^\alpha D^{\beta}_{{\mathbf x}} |{\mathbf x}|^2 K_s\left({\mathbf x}, {\mathbf z}\right) \frac{h({\mathbf z})}{|{\mathbf z}|^2} d{\mathbf z}| = \\
|\int\hspace{-5pt}\mydots\hspace{-5pt}\int   \frac{{\mathbf x}^\alpha D^{\beta}_{{\mathbf x}} e^{-({\mathbf x}-{\mathbf x}_1)^2/4} \cdots e^{-({\mathbf x}_{s-1}-{\mathbf z})^2/4}}{|{\mathbf x}_1|^2\cdots |{\mathbf x}_{s-1}|^2}\frac{h({\mathbf z})}{|{\mathbf z}|^2} d{\mathbf z} d{\mathbf x}_{1:s-1}| \leq  \\
C'' \lambda^{s-2}_0    \int\frac{{\mathbf x}^\alpha D^{\beta}_{{\mathbf x}} e^{-({\mathbf x}-{\mathbf x}_1)^2/4} }{|{\mathbf x}_1|^2}d{\mathbf x}_1   ||\frac{h({\mathbf z})}{|{\mathbf z}|^2}||_{L_{1}({\mathbb R}^n)} 
\end{split}
\end{equation*}
After checking that $\sup_{{\mathbf x}}\int\frac{{\mathbf x}^\alpha D^{\beta}_{{\mathbf x}} e^{-({\mathbf x}-{\mathbf x}_1)^2/4} }{|{\mathbf x}_1|^2}d{\mathbf x}_1 < \infty$ we conclude that $D^{\beta}_{{\mathbf x}} |{\mathbf x}|^2 g\left({\mathbf x}\right)$ is defined and ${\mathbf x}^\alpha D^{\beta}_{{\mathbf x}} |{\mathbf x}|^2 g\left({\mathbf x}\right)$ is bounded by:
$$
C''  ||\frac{h({\mathbf z})}{|{\mathbf z}|^2}||_{L_{1}({\mathbb R}^n)} \int\frac{{\mathbf x}^\alpha D^{\beta}_{{\mathbf x}} e^{-({\mathbf x}-{\mathbf x}_1)^2/4} }{|{\mathbf x}_1|^2}d{\mathbf x}_1 \sum_{s=0}^\infty \frac{\lambda^{s-2}_0}{(-\sqrt{2}\lambda)^s}
$$
Therefore, 
$|{\mathbf x}|^2 g\left({\mathbf x}\right)\in \mathcal{S}({\mathbb R}^n)$.
By substituting $g$ into the integral equation it can be checked that $g$ is indeed a solution.
\end{proof}
\else
\fi
\else
\fi
\subsection{Return to initial coordinates}
One of difficulties in solving~\eqref{fourier-two} via scheme~\ref{alternate} in applications is that it assumes that ${\mathbf f}' = \sqrt{2\pi}^n\mathcal{F}[\sqrt{p}{\mathbf f}]$ is already given to us. In fact, a practical calculation of the Fourier transform $\mathcal{F}[\sqrt{p}{\mathbf f}]$, if $n > 10$, is a problem that can be solved only for special cases of functions ${\mathbf f}$. I.e., in applications it is desirable that an algorithm for the problem deals with functions in the initial coordinate space, rather than in the frequency space. The specifics of our scheme~\ref{alternate} is that it allows a reformulation with initial coordinates.

Indeed, at step 4 of the algorithm we minimize the objective:
$$
\Phi({\mathbf g}, S) = ||\mathcal{W} [{\mathbf {\hat f}}] - \mathcal{W} [{\mathbf g}]||^2_{L^m_{2}} +\lambda ||S_{\mathbf g}-S_{t-1}||^2_{L^{n\times m}_2({\mathbb R}^n)}
$$
where $S_{t-1}({\mathbf x}) = P_{{\mathbf h}_{t-1}} {\mathbf x} {\mathbf h}_{t-1}({\mathbf x})^T$ has been calculated on the previous iteration.
According to theorem~\ref{represent} and formula~\eqref{optS} and using that $J({\mathbf c}) = {\mathbf c}$ we can rewrite the objective as:
$$
||\mathcal{W} [{\mathbf {\hat f}}] - \mathcal{W} [{\mathbf g}]||^2_{L^m_{2}} +\lambda\int_{{\mathbb R}^n} \left|\left| {\mathbf x} {\mathbf g}({\mathbf x})^T\hspace{-3pt} -\hspace{-2pt} P_{{\mathbf h}_{k-1}} {\mathbf x} {\mathbf h}_{k-1}({\mathbf x})^T \right|\right|_F^2 d {\mathbf x} 
$$ 
Using unitarity of the inverse Fourier tranform, we can apply it to $\mathcal{W} [{\mathbf {\hat f}}] - \mathcal{W} [{\mathbf g}]$ and use that $\mathcal{F}^{-1}[\mathcal{W} [{\mathbf {\hat f}}]] = \frac{1}{\sqrt{2\pi}^n}e^{-|{\mathbf x}|^2/2} {\mathbf f}$, $\mathcal{F}^{-1}[\mathcal{W} [{\mathbf g}]] = \frac{1}{\sqrt{2\pi}^n}e^{-|{\mathbf x}|^2/2} {\mathbf G}$ where ${\mathbf G} = \mathcal{F}^{-1}[{\mathbf g}]$. Recall that an image of $L_{I,2}$ under inverse Fourier transform is the cartesian power of Sobolev space $\left(W^{1,2}\right)^m$, therefore ${\mathbf G} \in \left(W^{1,2}\right)^m$. Using a standard property of $W^{1,2}$, i.e. the duality between the coordinate operator and the differentiation, we obtain that $\mathcal{F}^{-1}[{\mathbf x} {\mathbf g}({\mathbf x})^T] = -i \partial_{{\mathbf x}} {\mathbf G}^T$ and $\mathcal{F}^{-1}[P_{{\mathbf h}_{t-1}} {\mathbf x} {\mathbf h}_{t-1}({\mathbf x})^T ] = -iP_{{\mathbf h}_{t-1}} \partial_{{\mathbf x}} {\mathbf H}^T_{t-1}$ where ${\mathbf H}_{t-1} = \mathcal{F}^{-1}[{\mathbf h}_{t-1}]$. Thus, at step 4 we solve:
\begin{equation}\label{incoord}
\begin{split}
\Phi'_{t-1}({\mathbf G}) = \frac{1}{(2\pi)^n}||e^{-|{\mathbf x}|^2/2} {\mathbf f} - e^{-|{\mathbf x}|^2/2} {\mathbf G}||^2_{L^m_{2}} +\\
\lambda \int_{{\mathbb R}^n} \left|\left| \frac{\partial {\mathbf G}}{{\partial {\mathbf x}}} - \frac{\partial {\mathbf H}_{t-1}}{\partial {\mathbf x}} P_{t-1}  \right|\right|_F^2 d {\mathbf x} \rightarrow \min_{{\mathbf G}\in \left(W^{1,2}\right)^m}
\end{split}
\end{equation}
where $\frac{\partial {\mathbf F}}{{\partial {\mathbf x}}} = \begin{bmatrix}
\frac{\partial F_i}{\partial x_j}
\end{bmatrix}_{1\leq i \leq m, 1\leq j \leq n}$ is Jacobian and $P_{t-1} = P_{{\mathbf h}_{t-1}}$.

It is easy to see that the natural norm on $\left(W^{1,2}\right)^m$ (dual to the norm on $L_{I,2}$) is:
$$
||{\mathbf F}||_{\left(W^{1,2}\right)^m} = \sqrt{\int_{{\mathbb R}^n} \left\{(2\pi)^n |{\mathbf F}|^2 + \lambda |\frac{\partial {\mathbf F}}{{\partial {\mathbf x}}}|^2\right\}d {\mathbf x}}
$$
The set dual to $\mathfrak{M}$ is defined as $\mathfrak{M}' = \{\mathcal{F}^{-1}[{\mathbf h}] | {\mathbf h}\in \mathfrak{M}\}$.
The matrix $\mathcal{M}_{t} = \mathcal{M}_{{\mathbf h}_{t}}$ also can be defined using ${\mathbf H}_{t}$ only:
\begin{equation*}
\begin{split}
\mathcal{M}_{t} = \begin{bmatrix}
\langle x_i {\mathbf h}_{t}, x_j {\mathbf h}_{t} \rangle_{L^m_2({\mathbb R}^n)}\end{bmatrix}_{n\times n} = \begin{bmatrix} 
\langle \frac{\partial {\mathbf H}_{t}}{\partial x_i}, \frac{\partial {\mathbf H}_{t}}{\partial x_j} \rangle_{L^m_2({\mathbb R}^n)}\end{bmatrix}_{n\times n}  \\ =
\int_{{\mathbb R}^n} \frac{\partial {\mathbf H}_{t}}{\partial {\mathbf x}}^T \frac{\partial {\mathbf H}_{t}}{\partial {\mathbf x}}  d {\mathbf x}
\end{split}
\end{equation*}
Thus, the algorithm~\ref{alternate1} is dual to~\ref{alternate}.
\begin{algorithm}
\caption{Alternating scheme with initial coordinates}\label{alternate1}
\begin{algorithmic}[1]
\Procedure{}{}
\State $P_0, {\mathbf H}_{0} \gets 0$
\For {$t = 1, \cdots, N$}
\State ${\mathbf G}_{t} \gets \arg\min_{{\mathbf G}\in \left(W^{1,2}\right)^m} \Phi'_{t-1}({\mathbf G})$
\State Find ${\mathbf H}_{t}\in \mathfrak{M'}$ s.t. $||{\mathbf G}_{t} - {\mathbf H}_{t}||_{\left(W^{1,2}\right)^m} \leq \varepsilon$
\State $\mathcal{M}_{t} \gets\int_{{\mathbb R}^n} \frac{\partial {\mathbf H}_{t}}{\partial {\mathbf x}}^T \frac{\partial {\mathbf H}_{t}}{\partial {\mathbf x}}  d {\mathbf x}$
\State $P_{t} \gets $ projection to first $k$ principal components of $\mathcal{M}_{t}$ 
\EndFor
\EndProcedure
\end{algorithmic}
\end{algorithm}

Let us now give an example of a set $\mathfrak{M}$ that satisfies both conditions that we imposed in the previous section. Instead of defining $\mathfrak{M}$ we will define its dual $\mathfrak{M}' = \left\{{\mathbf H} = (H_1, \cdots, H_m)| H_i\in FF\right\}$ where $FF$ is a set of functions of the following form:
$$
\sigma(\gamma(R-|{\mathbf x}|))\sum_{i=1}^M \psi({\mathbf a}^T_i {\mathbf x}-b_i)
$$
where ${\mathbf a}_i\in {\mathbb R}^n, b_i \in {\mathbb R},\gamma, R\in {\mathbb R}_+$ are parameters. The function $\sigma$ is the standard sigmoid $\sigma(x) = \frac{1}{1+e^{-x}}$, whereas for $\psi$ we only assume that it is some non-constant function whose first derivatives are continuous and bounded.

\begin{theorem}\label{hornik}
 Let $\psi: {\mathbb R}^n\rightarrow {\mathbb R}$ be a non-constant function whose first derivatives are continuous and bounded and ${\mathbf G}\in \left(W^{1,2}\right)^m$ is fixed. Then, for any $\epsilon>0$ there exists ${\mathbf G}_\epsilon\in \mathfrak{M}'$ such that $||{\mathbf G}-{\mathbf G}_\epsilon||_{\left(W^{1,2}\right)^m} < \epsilon$.
\end{theorem}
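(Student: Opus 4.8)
My plan is to deduce the statement from the classical universal approximation theorem for single hidden layer (ridge function) networks, the only genuinely new ingredient being the localizing window $\sigma(\gamma(R-|\mathbf{x}|))$ that turns an everywhere-defined network into an element of $W^{1,2}(\mathbb{R}^n)$. First the reductions: since $\|\mathbf{F}\|_{\left(W^{1,2}\right)^m}^2=\int_{\mathbb{R}^n}\{(2\pi)^n|\mathbf{F}|^2+\lambda|\partial\mathbf{F}/\partial\mathbf{x}|^2\}\,d\mathbf{x}$ splits as a sum over the $m$ coordinates of $\mathbf{G}$, it suffices to treat $m=1$ and a scalar $G$; moreover this weighted norm is equivalent to the standard Sobolev norm of $W^{1,2}(\mathbb{R}^n)$, so I may work with the latter. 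Because $W^{1,2}$ is by definition the completion of $C_c^\infty(\mathbb{R}^n)$, I fix $\phi\in C_c^\infty(\mathbb{R}^n)$ with $\|G-\phi\|_{W^{1,2}}<\epsilon/3$ and choose $R_0$ with $\supp\phi\subset B_{R_0}(\mathbf{0})$.

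Next I treat the window. I choose a radius $R>R_0$ and set $\chi_\gamma(\mathbf{x})=\sigma(\gamma(R-|\mathbf{x}|))$. As $\gamma\to\infty$ one has $\chi_\gamma\to 1$ on $B_R(\mathbf{0})$ and $\chi_\gamma\to 0$ outside, while $\nabla\chi_\gamma=-\gamma\sigma'(\gamma(R-|\mathbf{x}|))\,\mathbf{x}/|\mathbf{x}|$ is concentrated, up to exponentially small tails, in a shell of width $\sim 1/\gamma$ about the sphere $|\mathbf{x}|=R$. Since this shell is disjoint from $\supp\phi$, on $\supp\phi$ we have $\chi_\gamma\equiv 1$ and $\nabla\chi_\gamma\equiv 0$ up to exponentially small error, so $\|\chi_\gamma\phi-\phi\|_{W^{1,2}}\to 0$ as $\gamma\to\infty$.

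The core step is on the fixed ball $B_R(\mathbf{0})$, where I invoke the universal approximation theorem in Sobolev norm (Hornik): since $\psi$ is non-constant with continuous and bounded first derivatives, finite superpositions $\mathcal{N}(\mathbf{x})=\sum_{i=1}^M\psi(\mathbf{a}_i^T\mathbf{x}-b_i)$ are dense in $W^{1,2}$ on compacta, the $C^1$-with-bounded-derivative hypothesis being exactly what permits simultaneous approximation of $\phi$ and $\nabla\phi$. I then pick $\mathcal{N}$ with $\|\mathcal{N}-\phi\|_{W^{1,2}(B_R)}$ as small as desired; in particular, because $\phi\equiv 0$ on the shell, $\mathcal{N}$ is small there in $W^{1,2}$, hence (by the trace inequality) $\mathcal{N}|_{\partial B_\rho}$ is small in $L^2(\partial B_\rho)$ for radii $\rho$ in the shell.

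Finally I assemble $\mathbf{G}_\epsilon=\chi_\gamma\mathcal{N}\in\mathfrak{M}'$ and bound $\|\mathbf{G}_\epsilon-\phi\|_{W^{1,2}}$ through the product rule $\nabla\mathbf{G}_\epsilon=\chi_\gamma\nabla\mathcal{N}+\mathcal{N}\nabla\chi_\gamma$, splitting $\mathbb{R}^n$ into $B_{R_0}(\mathbf{0})$ (window flat, so $\mathbf{G}_\epsilon\approx\mathcal{N}\approx\phi$), the transition shell (where $\phi=0$, so $\mathcal{N}$ and $\nabla\mathcal{N}$ are small), and the exterior (where $\chi_\gamma$ and $\nabla\chi_\gamma$ decay exponentially, beating the at most linear growth of $\mathcal{N}$ guaranteed by $\|\psi'\|_\infty<\infty$). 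The delicate term, and the main obstacle I expect, is $\int_{\mathrm{shell}}\mathcal{N}^2|\nabla\chi_\gamma|^2$, since $|\nabla\chi_\gamma|$ is of order $\gamma$ on a shell of thickness $\sim1/\gamma$; this forbids sending $\gamma\to\infty$ with $\mathcal{N}$ held fixed. I would handle it by interleaving the two limits: fix $R$ and a $\gamma$ large enough that the window and exterior errors fall below $\epsilon/3$ for the growth constant of a preliminary network, then, for this fixed $\gamma$, refine $\mathcal{N}$ so well in $W^{1,2}$ of a slightly larger ball that $\gamma^2\int_{\mathrm{shell}}\mathcal{N}^2<(\epsilon/3)^2$, selecting the transition radius by a mean-value argument over concentric spheres on which $\mathcal{N}$ is small. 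A secondary point to verify is that the approximation theorem is applied in the fixed unit output coefficient form dictated by the definition of $FF$; this is where the precise assumptions on $\psi$ must be used, either through a version of the theorem with unit coefficients or by synthesizing amplitudes from superpositions.
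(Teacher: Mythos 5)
Your high-level plan (localizing window times a ridge-function network, Hornik's theorem for simultaneous approximation of a function and its gradient, reduction to $m=1$ and to $C_c^\infty$) matches the paper's, and you correctly identify the dangerous term coming from $\nabla\chi_\gamma$. However, the way you propose to close the argument has a genuine unresolved circularity in the tail region $|{\mathbf x}|>R$. You invoke the universal approximation theorem only on a compact ball, so outside that ball the only control you have on $\mathcal{N}$ is the a priori linear growth bound $|\mathcal{N}({\mathbf x})|\leq A+B|{\mathbf x}|$, with constants $A,B$ determined by the weights of $\mathcal{N}$. To make $\int_{|{\mathbf x}|>R'}\chi_\gamma^2\,\mathcal{N}^2\,d{\mathbf x}$ (and the analogous term with $|\nabla\chi_\gamma|^2$) small you must choose the outer radius $R'$, or the decay rate $\gamma$, \emph{after} you know $A$ and $B$; but $\mathcal{N}$ is the Hornik approximant on $B_{R'}$ and hence is chosen \emph{after} $R'$ and $\gamma$. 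Your ``preliminary network'' device does not break this loop, because refining $\mathcal{N}$ changes its growth constants and invalidates the tail estimate established for the preliminary one. (Your treatment of the transition shell itself is fine: for fixed $\gamma$ the shell is compact, $\phi$ vanishes there, and Hornik on a slightly larger ball makes $\gamma^2\int_{\mathrm{shell}}\mathcal{N}^2$ as small as you like.)

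The paper avoids this entirely by exploiting that Hornik's theorem holds for an \emph{arbitrary finite measure} on ${\mathbb R}^n$, not just Lebesgue measure on a compact set. It fixes $\gamma$, sends $R\to\infty$ (rather than $\gamma\to\infty$) to make the $g$-dependent cutoff errors $\beta=\|(1-\sigma)g\|_{L_2}+\gamma\|(1-\sigma)g\|_\mu+\|(1-\sigma)\partial g/\partial{\mathbf x}\|_{L_2}$ small, and then applies Hornik with respect to the finite measure $d\mu=\sigma(\gamma(R-|{\mathbf x}|))^2\,d{\mathbf x}$ on all of ${\mathbb R}^n$. Every term involving the network, including its behaviour at infinity and the contribution of $\nabla\chi_\gamma$ (bounded via $\gamma\|(1-\sigma)\theta\|_\mu\leq\gamma\|\theta-g\|_\mu+\gamma\|(1-\sigma)g\|_\mu$), is then measured in the $\mu$-norm that Hornik controls globally, and the window-derivative factor enters only as the fixed constant $(1+\gamma)$ multiplying the achievable accuracy. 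To repair your proof you should make the same move: approximate with respect to the finite measure $(\chi_\gamma^2+|\nabla\chi_\gamma|^2)\,d{\mathbf x}$ (or simply $\sigma^2\,d{\mathbf x}$) on all of ${\mathbb R}^n$ instead of on a ball, which removes both the tail circularity and the need to interleave limits.
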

\begin{proof} [Proof of theorem~\ref{hornik}] In our proof we will use the following result from~\cite{Hornik}.
\begin{theorem}[K. Hornik] Let $\psi: {\mathbb R}^n\rightarrow {\mathbb R}$ be a non-constant function whose first derivatives are continuous and bounded, $\mu$ is a finite measure on ${\mathbb R}^n$ and $g\in C_c^\infty ({\mathbb R}^n)$ is fixed. Then, for any $\epsilon>0$, there exists a function of the form $\theta ({\mathbf x}) = \sum_{i=1}^M \psi({\mathbf a}^T_i {\mathbf x}-b_i)$ such  that 
$$||\theta - g||_{\mu} +  ||\frac{\partial \theta}{\partial {\mathbf x}} - \frac{\partial g}{\partial {\mathbf x}}||_{\mu} < \epsilon$$
where $||h||_\mu = \sqrt{\int_{{\mathbb R}^n} |h({\mathbf x})|^2 d\mu}$.
\end{theorem}
W.l.o.g. we assume that $m=1$.
Since $W^{1,2}$ is the completion of $C_c^\infty ({\mathbb R}^n)$, it is enough to prove that $\mathfrak{M}'$ is dense in $C_c^\infty ({\mathbb R}^n)$. By the latter we mean that if $g\in C_c^\infty ({\mathbb R}^n)$ is given, then for any $\epsilon>0$ one can find ${\mathbf a}_i\in {\mathbb R}^n, b_i \in {\mathbb R}, i =\overline{1,M}, \gamma, R\in {\mathbb R}_+$ such that for $\phi ({\mathbf x}) = \sigma(\gamma(R-|{\mathbf x}|))\sum_{i=1}^{M} \psi({\mathbf a}_i^T {\mathbf x}-b_i)$ the following holds:
$$
||g-\phi||_{L_2({\mathbb R}^n)} + ||\frac{\partial g}{\partial {\mathbf x}}-\frac{\partial \phi}{\partial {\mathbf x}}||_{L_2({\mathbb R}^n)} < \epsilon
$$
Note that we used slightly different metrics on $W^{1,2}$ which is known to be equivalent to our natural metrics in terms of induced topology.

First let us prove some general bounds on the distance between $g$ and $\phi ({\mathbf x}) = \sigma(\gamma(R-|{\mathbf x}|))\theta ({\mathbf x})$, where $\theta ({\mathbf x}) = \sum_{i=1}^{M} \psi({\mathbf a}^T_i {\mathbf x}-b_i)$. For brevity we will write $\sigma, \sigma'$ instead of $\sigma(\gamma(R-|{\mathbf x}|)), \sigma'(\gamma(R-|{\mathbf x}|))$. 
The derivative of $\phi$ is:
$$
\frac{\partial \phi}{\partial {\mathbf x}} = -\frac{\gamma {\mathbf x}}{|{\mathbf x}|} \sigma' \theta + \sigma \frac{\partial \theta}{\partial {\mathbf x}} = -\frac{\gamma {\mathbf x}}{|{\mathbf x}|} \sigma (1-\sigma) \theta + \sigma \frac{\partial \theta}{\partial {\mathbf x}}
$$
If $\gamma, R$ given, then we can define a finite measure $\mu$ by $d\mu = \sigma(\gamma(R-|{\mathbf x}|))^2 d {\mathbf x}$. The following inequalities hold:
\begin{equation*}
\begin{split}
||\sigma \theta - g||_{L_2} + ||-\frac{\gamma {\mathbf x}}{|{\mathbf x}|} \sigma (1-\sigma) \theta + \sigma \frac{\partial \theta}{\partial {\mathbf x}} - \frac{\partial g}{\partial {\mathbf x}}||_{L_2} \leq \\
||\sigma \theta - \sigma g||_{L_2} + ||\sigma g - g||_{L_2} + ||-\frac{\gamma {\mathbf x}}{|{\mathbf x}|} \sigma (1-\sigma) \theta ||_{L_2} + \\ ||\sigma \frac{\partial \theta}{\partial {\mathbf x}} - \frac{\partial g}{\partial {\mathbf x}}||_{L_2} \leq 
||\theta - g||_{\mu} + ||(1-\sigma) g||_{L_2} + \\
\gamma||(1-\sigma) (\theta-g+g) ||_{\mu} + ||\sigma \frac{\partial \theta}{\partial {\mathbf x}} - \sigma \frac{\partial g}{\partial {\mathbf x}}||_{L_2} + \\ ||\sigma \frac{\partial g}{\partial {\mathbf x}} - \frac{\partial g}{\partial {\mathbf x}}||_{L_2} \leq 
||\theta - g||_{\mu} + ||(1-\sigma) g||_{L_2} + \\ 
\gamma||(1-\sigma) (\theta-g)||_{\mu} +\gamma|| (1-\sigma)g ||_{\mu} + ||\frac{\partial \theta}{\partial {\mathbf x}} - \frac{\partial g}{\partial {\mathbf x}}||_{\mu}  +\\ 
||\frac{\partial g}{\partial {\mathbf x}} (1 - \sigma)||_{L_2} \leq 
\alpha + \beta
\end{split}
\end{equation*}
where $\alpha = ||\theta - g||_{\mu}(1+\gamma) +  ||\frac{\partial \theta}{\partial {\mathbf x}} - \frac{\partial g}{\partial {\mathbf x}}||_{\mu}$ and $\beta = ||(1-\sigma) g||_{L_2} + 
\gamma|| (1-\sigma)g ||_{\mu}  + ||\frac{\partial g}{\partial {\mathbf x}} (1 - \sigma)||_{L_2} $.

It is easy to see that $\beta$ does not depend on parameters ${\mathbf a}_i\in {\mathbb R}^n, b_i \in {\mathbb R}$. Let us prove that for any fixed $\gamma>0$ we have:
$$
\lim_{R\rightarrow +\infty} \beta = 0
$$
Since $g\in C_c^\infty ({\mathbb R}^n)$ has a compact support, then we can assume that there is $\rho, C>0$ such that $g({\mathbf x}) = 0, |{\mathbf x}|>\rho$ and $\forall {\mathbf x}\,\,|g({\mathbf x})| \leq C, |\frac{\partial g}{\partial {\mathbf x}}| \leq C$.  Then if $R>\rho$ we can bound $\beta$:
\begin{equation*}
\begin{split}
\beta \leq C (2+\gamma) \sqrt{\int_{|{\mathbf x}|\leq \rho} (1-\sigma(\gamma(R-|{\mathbf x}|)))^2 d {\mathbf x}} = \\
C (2+\gamma) \sqrt{\int_{|{\mathbf x}|\leq \rho} \frac{1}{(1+e^{\gamma(R-|{\mathbf x}|)})^2} d {\mathbf x}} \leq \\
C (2+\gamma) \sqrt{\frac{\int_{|{\mathbf x}|\leq \rho} d {\mathbf x}}{(1+e^{\gamma(R-\rho)})^2}} = \\
C (2+\gamma)  \frac{\pi^{n/4} \rho^{n/2}}{\sqrt{\Gamma(\frac{n}{2}+1)} (1+e^{\gamma(R-\rho)})} \mathop\rightarrow\limits^{R\rightarrow \infty} 0 \end{split}
\end{equation*}
Thus, for any $\epsilon>0$ we can find $R_{\gamma,\epsilon}$ such that $\beta < \frac{\epsilon}{2}$ whenever $R>R_{\gamma,\epsilon}$. Let us fix such $R$. Thus, the measure $\mu$ is defined.  Now, by Hornik's result, single layer feedforward neural networks are dense in $C^{1,2} (\mu)$, therefore we can find ${\mathbf a}_i\in {\mathbb R}^n, b_i \in {\mathbb R}$ such that $\alpha < \frac{\epsilon}{2}$. This implies that $||\sigma \theta - g||_{L_2} + ||\frac{\partial (\sigma \theta)}{\partial {\mathbf x}} - \frac{\partial g}{\partial {\mathbf x}}||_{L_2} \leq \epsilon$. I.e. $\mathfrak{M}$ is dense in $C_c^\infty ({\mathbb R}^n)$ and $W^{1,2}$.
\end{proof}

\subsection{Practical algorithm with initial coordinates}
Let us assume for simplicity that $m=1$. If we fix $R, \gamma$ and $M$ we obtain a subset of class $\mathfrak{M}'$, denoted $\mathfrak{M}'_{R, \gamma, M}$. 
With a goal to implement our scheme as a practical algorithm, instead of an optimization over $W^{1,2}$ at step 4 of algorithm~\ref{alternate1} we will optimize over $\mathfrak{M}'_{R, \gamma, M}$. Also, step 5 is not needed at all and ${\mathbf G}_{t} = {\mathbf H}_{t}$.

In practice it is natural to approximate the first part of~\ref{incoord} as:
\begin{equation*}
\begin{split}
\propto \Phi^1_{t-1} = \frac{1}{K}\sum_{i=1}^K ||{\mathbf f} ({\mathbf x}_i) - {\mathbf G}({\mathbf x}_i)||^2 
\end{split}
\end{equation*}
where ${\mathbf x}_1, \cdots, {\mathbf x}_K \sim \frac{1}{\sqrt{\pi}^n}e^{-|{\mathbf x}|^2}$.

Since ${\mathbf G}, {\mathbf G}_{t-1}\in \mathfrak{M}'_{R, \gamma, M}$, i.e. ${\mathbf G} = \sigma(\gamma(R-|{\mathbf x}|))\theta_{\mathbf G} ({\mathbf x})$ and ${\mathbf G}_{t-1} = \sigma(\gamma(R-|{\mathbf x}|))\theta_{{\mathbf G}_{t-1}} ({\mathbf x})$, then second part of~\ref{incoord} becomes: 
\begin{equation*}
\begin{split}
\int_{{\mathbb R}^n} \left|\left| \frac{\partial {\mathbf G}}{{\partial {\mathbf x}}} - \frac{\partial {\mathbf G}_{t-1}}{\partial {\mathbf x}} P_{t-1}  \right|\right|_F^2 d {\mathbf x} = ||-\gamma\frac{(1-\sigma){\mathbf x}}{|{\mathbf x}|}\theta_{\mathbf G} ({\mathbf x})+\\ \frac{\partial \theta_{\mathbf G}}{\partial {\mathbf x}}^T-\left(-\gamma\frac{(1-\sigma){\mathbf x}}{|{\mathbf x}|}\theta_{{\mathbf G}_{t-1}} ({\mathbf x})+\frac{\partial \theta_{{\mathbf G}_{t-1}}}{\partial {\mathbf x}}^T\right)  P_{t-1}||^2_{\mu} \end{split}
\end{equation*}
where $d\mu = \sigma^2 d {\mathbf x}$. Thus, it can be approximated as:
\begin{equation*}
\begin{split}
\propto \Phi^2_{t-1} = \frac{1}{L}\sum_{i=1}^L ||-\gamma\frac{(1-\sigma){\mathbf z}_i}{|{\mathbf z}_i|}\theta_{\mathbf G} ({\mathbf z}_i)+ \frac{\partial \theta_{\mathbf G} ({\mathbf z}_i)}{\partial {\mathbf x}}^T-\\ \left(-\gamma\frac{(1-\sigma){\mathbf z}_i}{|{\mathbf z}_i|}\theta_{{\mathbf G}_{t-1}} ({\mathbf z}_i)+\frac{\partial \theta_{{\mathbf G}_{t-1}}({\mathbf z}_i)}{\partial {\mathbf x}}^T\right)  P_{t-1}||^2
\end{split}
\end{equation*}
where ${\mathbf z}_1, \cdots, {\mathbf z}_L \sim \sigma(\gamma(R-|{\mathbf x}|))^2$.

Since
\begin{equation*}
\begin{split}
\mathcal{M}_{t} =\int_{{\mathbb R}^n} \frac{\partial {\mathbf G}_{t}}{\partial {\mathbf x}}^T \frac{\partial {\mathbf G}_{t}}{\partial {\mathbf x}}  d {\mathbf x} = \int (-\gamma\frac{(1-\sigma){\mathbf x}}{|{\mathbf x}|}\theta_{{\mathbf G}_{t}} ({\mathbf x})+\\
\frac{\partial \theta_{{\mathbf G}_{t}}}{\partial {\mathbf x}}^T )(-\gamma\frac{(1-\sigma){\mathbf x}}{|{\mathbf x}|}\theta_{{\mathbf G}_{t}} ({\mathbf x})+
\frac{\partial \theta_{{\mathbf G}_{t}}}{\partial {\mathbf x}}^T )^T d\mu
\end{split}
\end{equation*}
it is natural to estimate it as:
\begin{equation*}
\begin{split}
\hat{\mathcal{M}}_{t} = \frac{1}{L}\sum_{i=1}^L (-\gamma\frac{(1-\sigma){\mathbf z}_i}{|{\mathbf z}_i|}\theta_{{\mathbf G}_{t}} ({\mathbf z}_i)+
\frac{\partial \theta_{{\mathbf G}_{t}}({\mathbf z}_i)}{\partial {\mathbf x}}^T )\\(-\gamma\frac{(1-\sigma){\mathbf z}_i}{|{\mathbf z}_i|}\theta_{{\mathbf G}_{t}} ({\mathbf z}_i)+
\frac{\partial \theta_{{\mathbf G}_{t}}({\mathbf z}_i)}{\partial {\mathbf x}}^T )^T 
\end{split}
\end{equation*}
Thus, the pseudocode of the algorithm can be found below.
\begin{algorithm}
\caption{Practical algorithm with initial coordinates}\label{alternate2}
\begin{algorithmic}[1]
\Procedure{}{Parameters: $\lambda, \gamma, R>0, N, M, K, L\in {\mathbb N}$}
\State $P_0, {\mathbf G}_{0} \gets 0$
\State Sample ${\mathbf x}_1, \cdots, {\mathbf x}_K \sim \frac{1}{\sqrt{\pi}^n} e^{-|{\mathbf x}|^2}$ and ${\mathbf z}_1, \cdots, {\mathbf z}_L \sim \sigma(\gamma(R-|{\mathbf x}|))^2$.
\For {$t = 1, \cdots, N$}
\State ${\mathbf G}_{t} \gets \arg\min_{{\mathbf G}\in \mathfrak{M}'_{R, \gamma, M}} \Phi^1_{t-1}({\mathbf G})+\lambda\Phi^2_{t-1}({\mathbf G})$
\State Estimate $\hat{\mathcal{M}}_{t}$
\State $P_{t} \gets $ projection to first $k$ principal components of $\hat{\mathcal{M}}_{t}$ 
\EndFor
\State Output $P_N, {\mathbf G}_{N}$
\EndProcedure
\end{algorithmic}
\end{algorithm}

\subsection{Experiments}
We experimented with the algorithm~\ref{alternate2} setting our parameters as: number of iterations $N=200$, number of sampled points $K = 1000, L = 10000$. Parameters of ``restricted'' neural network model $\mathfrak{M}'_{R, \gamma, M}$ were set as: $M=200, \gamma = +\infty$ (the latter is equivalent to setting $\sigma(\gamma(R-|{\mathbf x}|))^2$ as the uniform distribution over the ball $B_{R}({\mathbf 0})$ and $\gamma(1-\sigma) = 0$ in all formulae). We also set $R = Q(0.01, n)$ where $Q(p, n)$ is the quantile function defined as: ${\mathbb P}_{{\mathbf x} \sim \frac{1}{\sqrt{\pi}^n}e^{-|{\mathbf x}|^2}}[|{\mathbf x}|>Q(p, n)] = p$.

We experimented with two dimensions, $n=4$ and $n=6$, setting our main function as $f(x_1, \cdots, x_n) = A(x_1, x_2) + C[|{\mathbf x}|\leq 1]$ where $A(x,y) = -20\exp [-0.2\sqrt{0.5\left(x^{2}+y^{2}\right)}]-\exp [0.5\left(\cos 2\pi x + \cos 2\pi y \right)] + e + 20$ is the Ackley function and $[|{\mathbf x}|\leq 1]$ is the indicator function of the unit ball. We were interested in $k=2$, as in this case, for any $C$, a correct dimensionality reduction would give $P_N \approx {\mathbf e}_1 {\mathbf e}^T_1+{\mathbf e}_2 {\mathbf e}^T_2$. Thus, a natural measure of an accuracy of our algorithm is ${\bf acc} = ||P_N - {\mathbf e}_1 {\mathbf e}^T_1-{\mathbf e}_2 {\mathbf e}^T_2||_{F}$. How the resulting accuracy depends on the parameter $\lambda$ for different values of $C$ is shown on the graph below.
\begin{figure}[h]
\includegraphics[scale=0.45]{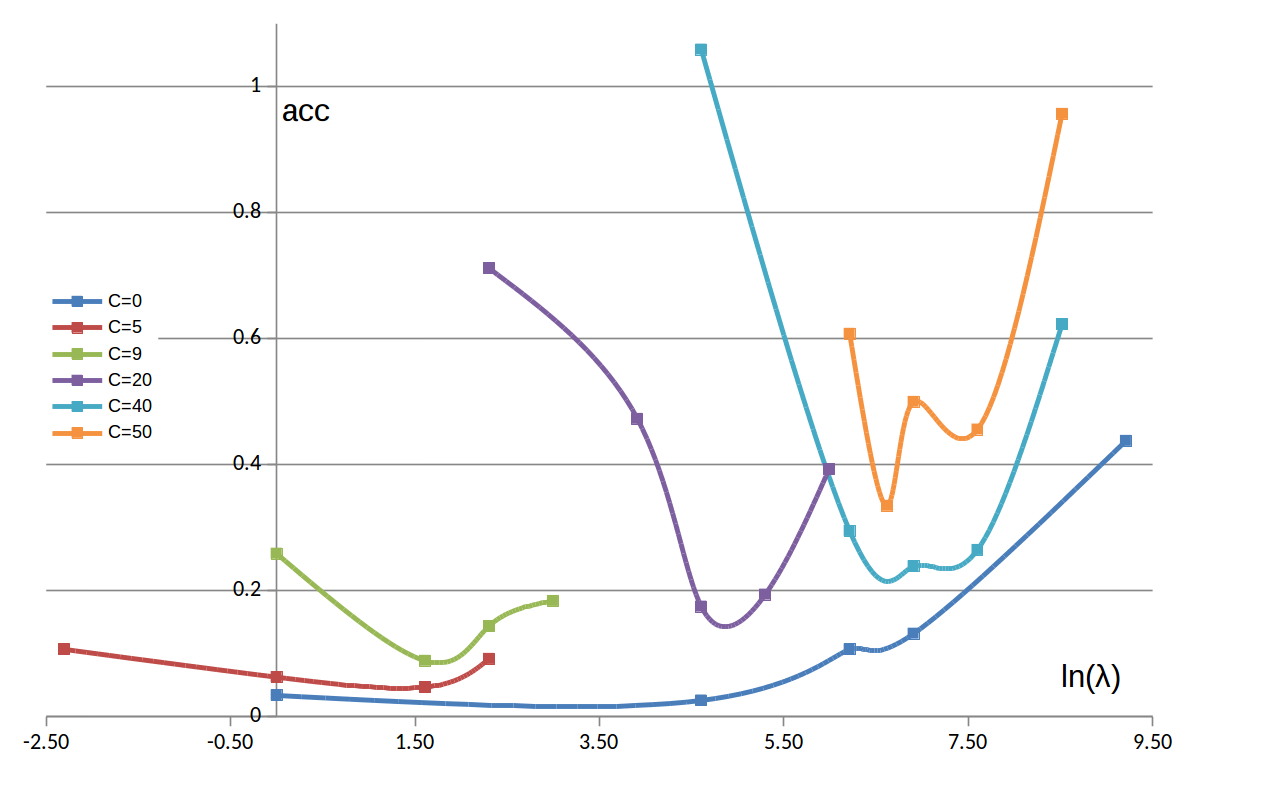}
\end{figure}

It is easy to see that for any fixed $C$ there is an optimal value for $\lambda$. Thus, increasing $\lambda$ does not simply lead to an improved accuracy. We believe the mechanics of that ``over-penalization'' is the following: if $\lambda$ is too large, then the result of the first iteration (specifically of step 5) will be an ``over-smoothed'' function ${\mathbf G}_{1}$, probably a function that behaves almost linearly in $B_{R}({\mathbf 0})$; and at subsequent iterations, due to the large contribution of the second term to our objective, the algorithm fails to jump out of the area around a local minimum (because the second term forces ${\mathbf G}_{t+1}$ to adapt to the gradient field of ${\mathbf G}_{t}$). The latter interpretation is, of course, hypothetical and needs further experimental research. The second observation is rather trivial, if $C$ increases, then an accuracy (for optimal $\lambda$) becomes worse. This happens not because of some fundamental shortcomings of the algorithm, but simply because adapting to the second part of our main function, i.e. $C[|{\mathbf x}|\leq 1]$ for large $C$, requires tuning $R$ to a larger value than we set. 

Overall, this algorithm is the first and the most straightforward way to turn the ``dual'' view of problem~\ref{problem} to a practical solution. More efficient, robust and theoretically substantiated algorithms are a subject of future work.

\bibliographystyle{plain}
\bibliography{lit}

\end{document}